\DeclareMathOperator*{\argmin}{argmin}
\providecommand{\customgenericname}{}
\newcommand{\newcustomtheorem}[2]{%
  \newenvironment{#1}[1]
  {%
  \renewcommand\customgenericname{#2}%
  \renewcommand\theinnercustomgeneric{##1}%
  \innercustomgeneric
  }
  {\endinnercustomgeneric}
}
\begin{document}

\title{Optimizing Large-Scale Hyperparameters via Automated Learning Algorithm}

\author{\name Bin Gu \email bin.gu@mbzuai.ac.ae \\
\addr Department of machine learning, Mohamed bin ZayedUniversity of Artificial Intelligence, UAE \\
 JD Finance America Corporation, Mountain View, CA, 94043, USA
\\
\name Guodong Liu \email guodong.liu.e@pitt.edu \\
\name Yanfu Zhang \email YAZ91@pitt.edu \\
\addr Department of Electrical and Computer Engineering, University of Pittsburgh, Pittsburgh, PA, 15261, USA\\
\name Xiang Geng  \email  gengxiang@nuist.edu.cn \\
\addr School of Computer \& Software, Nanjing University of Information Science \& Technology, Nanjing,  P.R.China \\
   \name Heng Huang \email {heng.huang@pitt.edu} \\
\addr Department of Electrical and Computer Engineering, University of Pittsburgh, Pittsburgh, PA, 15261, USA\\
       JD Finance America Corporation, Mountain View, CA, 94043, USA
}

\editor{}

\maketitle

\begin{abstract}
Modern machine learning algorithms usually involve tuning multiple (from one to  thousands) hyperparameters which play a pivotal role  in terms of model  generalizability.
Black-box optimization and gradient-based algorithms are  two dominant approaches to hyperparameter optimization while  they have totally  distinct advantages.
How to design a new hyperparameter optimization technique inheriting all benefits  from both approaches is still an open problem. To address this challenging problem, in this paper, we  propose a new  hyperparameter optimization  method with zeroth-order hyper-gradients (HOZOG). Specifically, we first exactly formulate  hyperparameter optimization  as  an $\mathcal{A}$-based
constrained optimization problem, where $\mathcal{A}$ is  a black-box optimization algorithm (such as deep neural network).
Then, we use the average zeroth-order hyper-gradients  to update  hyperparameters. We provide the feasibility analysis of using  HOZOG to achieve hyperparameter optimization. Finally, the experimental  results on three representative  hyperparameter (the size is from 1 to 1250) optimization tasks  demonstrate the benefits of HOZOG  in terms of  \textit{simplicity, scalability, flexibility, effectiveness and efficiency} compared with the  state-of-the-art hyperparameter optimization methods.
\end{abstract}

\begin{keywords}
Hyperparameter optimization, zeroth-order optimization, black-box optimization, bi-level optimization
\end{keywords}

\section{Introduction}
Modern machine learning algorithms usually involve tuning multiple  hyperparameters whose size could be from one to  thousands. For example,   support vector machines \citep{vapnik2013nature} have the regularization parameter and kernel hyperparameter,  deep neural networks \citep{krizhevsky2012imagenet} have the  optimization hyperparameters (e.g., learning rate schedules and
momentum) and regularization hyperparameters (e.g., weight decay and dropout rates). The performance
of the most prominent algorithms strongly depends  on the appropriate  setting of these hyperparameters. 

Traditional hyperparameter tuning is treated as a bi-level optimization  problem as follows.
\vspace{-0.05in}
\begin{eqnarray}\label{formulation_regularized}
 \min_{ \lambda \in \mathbb{R}^p}  f(\lambda) = E(w(\lambda),\lambda),
\quad  s.t. \ \  w(\lambda) \in {\argmin}_{w \in \mathbb{R}^d} L(w,\lambda)
\end{eqnarray}
\vspace{-0.16in}
\\
where $w \in \mathbb{R}^d$ are the model parameters,  $\lambda \in \mathbb{R}^p$ are the hyperparameters,   the outer objective $E$ \footnote{The choice of objective function $E$ depends on the specified
tasks. For example, accuracy, AUC or F1 can be used
for binary classification problem. Square  error loss or absolute error loss  can be used   as the objective of $E$  for regression problems on validation samples.} represents a proxy of the generalization error w.r.t. the hyperparameters,   the inner objective $L$ represents  traditional learning problems (such as regularized empirical risk minimization problems), and $w(\lambda)$ are the optimal model parameters  of the inner objective  $L$ for the fixed hyperparameters $\lambda$. Note that the size of hyperparameters is normally much smaller than the one of  model parameters (\emph{i.e.}, $p\ll d$).
Choosing appropriate values of hyperparameters is extremely  computationally challenging due to the nested structure involved in the optimization  problem.
 However,  at the same time both researchers and practitioners desire the hyperparameter optimization methods  as \textit{effective}, \textit{efficient},  \textit{scalable}, \textit{simple} and \textit{flexible}\footnote{``effective'': good generalization performance. ``efficient'': running fast. ``scalable'': scalable in terms of the sizes of hyperparameters and model parameters. ``simple'': easy to be implemented.  ``flexible'':  flexible to various learning algorithms.}  as possible.

Classic  techniques such as grid search \citep{gu2015new} and random search \citep{bergstra2012random} have a very restricted  application in modern hyperparameter optimization tasks, because they only can manage a very small number  of hyperparameters and cannot guarantee to converge
to  local/global minima. For modern hyperparameter tuning tasks,  black-box optimization \citep{snoek2012practical,falkner2018bohb} and gradient-based algorithms \citep{maclaurin2015gradient,franceschi2018bilevel,franceschi2017forward} are currently the dominant approaches  due to the  advantages in terms of \textit{effectiveness, efficiency, scalability, simplicity and flexibility} which are abbreviated as E2S2F in this paper.  We provide a brief review of representative black-box optimization and gradient-based
 hyperparameter optimization   algorithms in \S \ref{Section21}, and  a detailed  comparison   of them   in terms of  the above properties   in Table \ref{table:methods}.
  \begin{table}[!t]
  \vspace*{-14pt}
\small
 \center
 \caption{Representative black-box optimization and gradient-based
 hyperparameter optimization   algorithms. (``BB'' and ``G'' are  the abbreviations of black-box and gradient respectively, and ``$\clubsuit$'' denotes that  the property holds for a small number of hyperparmaters or medium-sized training set. ``Scalable-H'' and ``Scalable-P'' denotes scalability in terms of hyperparameters and model parameters respectively.)}
\vspace*{-6pt}
\setlength{\tabcolsep}{0.4mm}
\begin{tabular}{c|c||c|c|c||c|c|c}
 \hline
 \multirow{2}{*}{\textbf{Algorithm}}  &   \multirow{2}{*}{\textbf{Type}}     
& \multicolumn{6}{c}{\textbf{Properties}}
\\ \cline{3-8}
  &   &  \textbf{Effective} &  \textbf{Efficient}  &  \textbf{Scalable-H} & \textbf{Simple} & \textbf{Flexible} & \textbf{Scalable-P}   \\ \hline
 GPBO \citep{snoek2012practical} & BB  & $\clubsuit$ & $\clubsuit$  & \ding{55}   &  \cellcolor{green} $\checkmark$  &  \cellcolor{green} $\checkmark$   &  \cellcolor{green} $\checkmark$ \\
BOHB \citep{falkner2018bohb} & BB   & $\clubsuit$ & $\clubsuit$  & \ding{55}   &  \cellcolor{green}  $\checkmark$ &  \cellcolor{green}  $\checkmark$  &  \cellcolor{green} $\checkmark$   \\
HOAG \citep{pedregosa2016hyperparameter} & G &\cellcolor{green}  $\checkmark$ & \cellcolor{green}  $\checkmark$ & \cellcolor{green}  $\checkmark$   & \ding{55}  &  \ding{55}& \ding{55} \\
RMD \citep{maclaurin2015gradient} & G  &   \cellcolor{green}  $\checkmark$ & \cellcolor{green}  $\checkmark$ & \cellcolor{green}  $\checkmark$   & \ding{55}  &\ding{55}  &\ding{55} \\ 
RFHO  \citep{franceschi2017forward,franceschi2018bilevel} & G  &  \cellcolor{green}   $ \checkmark$ &  \cellcolor{green}  $\checkmark$  &  \cellcolor{green}  $\checkmark$  &\ding{55}  &\ding{55} &\ding{55} \\
\hline
HOZOG & BB+G    & \cellcolor{green} $\checkmark$ & \cellcolor{green} $\checkmark$ & \cellcolor{green} $\checkmark$ & \cellcolor{green} $\checkmark$ & \cellcolor{green} $\checkmark$ & \cellcolor{green} $\checkmark$   \\
 \hline 
\end{tabular}
\label{table:methods}
\vspace*{-10pt}
\end{table}
Table \ref{table:methods} clearly shows that   black-box optimization  and gradient-based approaches have totally  distinct advantages, \emph{i.e.}, black-box optimization approach is simple, flexible and salable in term of  model parameters, while gradient-based approach is effective, efficient and scalable in term of  hyperparmeters. Each property of  E2S2F  is an important criterion to a successful  hyperparameter
optimization method.  To the best of our knowledge, there is still no  algorithm satisfying all the five properties    simultaneously. 
Designing a  hyperparameter optimization method having  the  benefits  of both approaches is still an open problem.

 To address this challenging problem, in this paper, we propose a new  hyperparameter optimization  method with zeroth-order hyper-gradients (HOZOG). Specifically, we first exactly formulate  hyperparameter optimization  as an $\mathcal{A}$-based
constrained optimization problem, where $\mathcal{A}$ is  a black-box optimization algorithm (such as the deep neural network).
Then, we use the average zeroth-order hyper-gradients  to update  hyperparameters. We provide the feasibility analysis of using  HOZOG to achieve hyperparameter optimization. Finally, the experimental  results of various hyperparameter (the size is from 1 to \num[group-separator={,}]{1250}) optimization problems  demonstrate the benefits of HOZOG  in terms of  E2S2F compared with the  state-of-the-art hyperparameter optimization methods.

 \noindent \textbf{Contributions.} The  main contributions of this paper are summarized as
follows:
\begin{enumerate}[leftmargin=0.2in]

\item \textit{Effectiveness, efficiency, scalability, simplicity and flexibility} are the most important evaluation criterions for hyperparameter optimization methods. As far as we know, there does not exist a hyperparameter optimization method having all these advantages. We creatively propose a zeroth-order gradient algorithm to solve the problem  which is the first  method having all these benefits to the best of our knowledge.

\item As summarized in Table 1, black-box optimization approach has good simplicity and flexibility, while weak scalability in term of number of hyperparmeters. Meanwhile, gradient-based methods have poor flexibility and simplicity, while good scalability in term of number of hyperparmeters. We creatively proposed a zeroth-order gradient algorithm to solve the problem of hyperparameter optimization which inherits all benefits of black-box optimization approach and gradient-based methods.

\item After replacing the inner problem by an optimization algorithm (\emph{i.e.}, $w(\lambda) = \mathcal{A} (\lambda)$),  we provide an upper bound to the Lipschitz constant of the $\mathcal{A}$-based constrained optimization problem which theoretically guarantees that zeroth-order gradient algorithm can solve the problem of hyperparameter optimization.

\end{enumerate}
\noindent \textbf{Organization.}
We organize the rest of paper as follows.  In Section \ref{section2},  we propose our HOZOG algorithm. In Section \ref{section3}, we show the experimental results of HOZOG on  three hyperparameter  optimization problems.   Finally,  we conclude the paper.
\section{Hyperparameter Optimization  based on Zeroth-Order Hyper-Gradients}\label{section2}
In this section, we first give a brief  review of black-box optimization and gradient-based algorithms, and then provide our HOZOG algorithm. Finally, we provide the feasibility analysis of HOZOG.
\subsection{Brief Review of Black-Box Optimization and Gradient-based Algorithms}\label{Section21}
\noindent \textbf{Black-box optimization algorithms:} \
Black-box optimization algorithms   view the bilevel optimization problem $f$ as a black-box function. Existing black-box optimization methods \citep{snoek2012practical,falkner2018bohb} mainly employ Bayesian optimization  \citep{brochu2010tutorial}   to solve (\ref{formulation_regularized}).  Black-box optimization approach has good  simplicity and flexibility. However, a lot of references have pointed out that it can only handle  hyperparmeters from a few to several dozens \citep{falkner2018bohb} while the number of hyperparmeters in real hyperparameter optimization problems would range from hundreds to thousands.  Thus, black-box optimization approach has weak  scalability in term of the size of of hyperparmeters.

\noindent \textbf{Gradient-based algorithms:} \ The existing gradient-based algorithms can be divided into two parts (\emph{i.e.}, inexact gradients and exact gradients). The approach of inexact gradients first solves the inner problem approximately,    and then estimates the  gradient of (\ref{formulation_regularized}) based on the approximate solution by the approach of implicit differentiation \citep{pedregosa2016hyperparameter}. Because the implicit differentiation involves Hessian matrices of sizes of $d \times d$ and $d \times p$ where $p\ll d$, they have poor scalability. 
 The approach of exact gradients\footnote{Although the inner-problem is usually solved  approximately \textit{e.g.} by
taking a finite number of steps of gradient descent, we still call this kind of methods as exact gradients throughout  this paper to avoid using too complex terminology.} treats the inner level problem  as  a dynamic system, and use chain rule \citep{rudin1964principles} to compute the gradient. Because the chain rule highly depends on specific learning algorithms, this   approach has  poor flexibility and simplicity. Computing the gradients involves Hessian matrices of sizes of $p \times p$ and $d \times p$. Thus, the approach of exact gradients has better scalability  than the  approach of inexact gradients because normally we have $p\ll d$.

\noindent \textbf{\Cross \ Enlightenment:} \ 
As introduced in \citep{nesterov2017random,gu2018faster}, zeroth-order gradient (also known as finite difference approximation \citep{Cui2017Strong}) technique is a black-box optimization method which  estimates the gradient only by
two function evaluations. Thus, zeroth-order gradient technique belongs both to black-box optimization and gradient-based optimization (please see Figure \ref{Fig_HOZOG}).
We hope that the hyperparameter optimization method bases on zeroth-order hyper-gradients\footnote{We call the gradient \emph{w.r.t.} hyperparameter as hyper-gradient in this paper.} can inherit  all benefits as described   in Table \ref{table:methods}.
 \begin{figure}[ht]
\center
\includegraphics[scale=0.33]{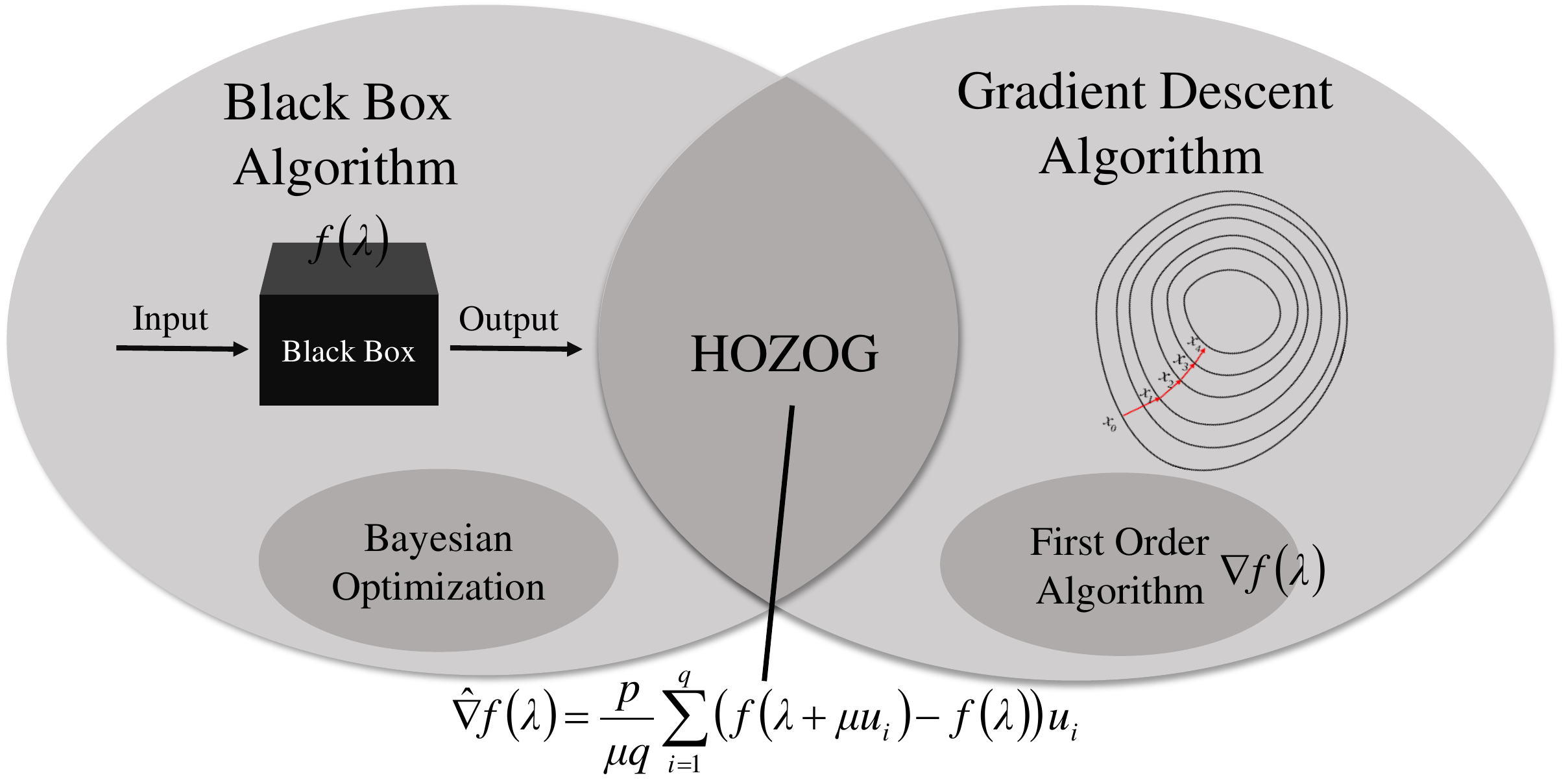}
 \caption{Principle  of our HOZOG.}
 \label{Fig_HOZOG}
\end{figure}
\subsection{HOZOG Algorithm}
\vspace*{-4pt}
\noindent \textbf{$\blacktriangleright$ \ Principle:} \ Instead of directly computing the hyper-gradient as in \citep{pedregosa2016hyperparameter,maclaurin2015gradient,franceschi2017forward,franceschi2018bilevel}, we use   two function evaluations  (\emph{i.e.}, the zeroth-order hyper-gradient technique \citep{nesterov2017random,gu2018faster}) to estimate the hyper-gradient, and   update
hyperparameters with hyper-gradients which derives  our HOZOG algorithm.

Before presenting  HOZOG algorithm  in detail, we first clarify what problem   we are solving exactly.

\noindent \textbf{$\blacktriangleright$ \ What problem   we are solving exactly?}  \
 As mentioned in (\ref{formulation_regularized}),  the inner level problem in the traditional hyperparameter tuning is finding the model parameters that minimize the inner objective $L$, (\emph{i.e.},    $w(\lambda) \in {\argmin}_{w \in \mathbb{R}^d} L(w,\lambda)$). However, in the real-world hyperparameter tuning problems, we are usually trying to find an approximate  minimum  solution of $L$ by an optimization algorithm if the inner level problem $L$ in convex.  If the inner level problem $L$ in non-convex, we usually try to find an approximate local solution or a stationary point. Thus, we replace the  inner level problem  by $w(\lambda) = \mathcal{A}(\lambda)$ where $\mathcal{A}$ is an optimization algorithm which approximately solves the inner objective $L$. Further, we replace the bi-level optimization problem (\ref{formulation_regularized}) by  the following $\mathcal{A}$-based constrained optimization problem (\ref{formulation_real}).
\begin{eqnarray}\label{formulation_real}
\min_{ \lambda \in \mathbb{R}^p}  f(\lambda) = E(w(\lambda),\lambda),
\quad s.t. \ \ w(\lambda) = \mathcal{A}(\lambda)
\end{eqnarray}
\\
where 
$w(\lambda)$ are the  values returned by the optimization algorithm $\mathcal{A}$.

\noindent \textit{$\bullet$ Hyperparameters}:
Hyperparameters  can be divided into two types, \emph{i.e.}, problem-based hyperparameters and algorithm-based hyperparameters.
\begin{enumerate}[leftmargin=0.2in]
\item Problem-based hyperparameters: The problem-based hyperparameters are  the hyperparameters involved in \textit{learning problems} such as the regularization parameter and the architectural hyperparameters in deep neural networks.
 \item Algorithm-based hyperparameters: These are the  hyperparameters  involved in  \textit{optimization algorithms} such as the learning rate,
momentum and dropout rates.
 \end{enumerate}
The traditional  bi-level optimization problem (\ref{formulation_regularized}) can only formulate the problem-based hyperparameters. However, our $\mathcal{A}$-based constrained
optimization problem (\ref{formulation_real}) can formulate  both  types of hyperparameters.


\noindent \textbf{$\blacktriangleright$ \ Algorithm:} \ To solve the $\mathcal{A}$-based constrained
optimization problem (\ref{formulation_real}), we propose  HOZOG algorithm in Algorithm \ref{algorithm2}, where the ``for" loop is  referred to as ``meta-iteration''. We describe the  two key operations of Algorithm \ref{algorithm2} (\emph{i.e.}, estimating the function value and  average  zeroth-order hyper-gradient)  in detail  as follows.

\noindent \textit{$\bullet$ Estimating the function value}: \ We treat the optimization algorithm $\mathcal{A}$ as a black-box oracle.  Given  hyperparameters $\lambda$, the black-box oracle $\mathcal{A}$ returns   model parameters $w(\lambda)$. Based on the pair of $\lambda$ and $w(\lambda)$,   the function value can be estimated as $E(w(\lambda),\lambda)$.

\noindent \textit{$\bullet$ Computing the average  zeroth-order hyper-gradient}: \  Zeroth-order hyper-gradient can be computed as $\bar{\nabla} f( \lambda)= \frac{p}{\mu } \left ( f(\lambda +\mu u) - f(\lambda) \right ) u$ based on the two function evaluations $f(\lambda+\mu u)$ and $f(\lambda)$, where $u  \sim N(0, I_p)$  is a random direction drawn from a uniform distribution over a unit sphere, and $\mu$ is an  approximate parameter.  $\bar{\nabla} f( \lambda)$  has a large variance
due to   single direction $u$. To reduce the  variance, we use the average  zeroth-order hyper-gradient (\ref{ZO_gradient}) by sampling a set of directions $\{u_i\}_{i=1}^q$.
\begin{eqnarray}\label{ZO_gradient}
\hat{\nabla} f( \lambda)= \frac{p}{\mu q} \sum_{i=1}^{q}\left ( f(\lambda +\mu u_i) - f(\lambda) \right ) u_i
\end{eqnarray}
\\
Based on the average zeroth-order hyper-gradient $\hat{\nabla} f( \lambda)$, we  update the hyperparameters as follows.
\begin{eqnarray}
\lambda \leftarrow  \lambda - \gamma  \hat{\nabla} f( \lambda)
\end{eqnarray}
\\
Note that $\hat{\nabla} f( \lambda)$ is a biased approximation to the true gradient ${\nabla} f( \lambda)$. Its bias can be reduced by
decreasing the value of $\mu$. However, in a practical system,  $\mu$ could not be too small, because in that case the function difference
could be dominated by the system noise (or error) and fails to represent the function differential \citep{lian2016comprehensive}.

\noindent \textit{$\bullet$ Parallel acceleration.} \ Because the average zeroth-order hyper-gradient involves $q+1$  function evaluations as shown in (\ref{ZO_gradient}), we can use GPU or multiple cores to compute the $q+1$  function evaluations in parallel to accelerate the  computation of  average zeroth-order hyper-gradients.


\begin{algorithm}[t]
\renewcommand{\algorithmicrequire}{\textbf{Input:}}
\renewcommand{\algorithmicensure}{\textbf{Output:}}
\caption{Hyperparameter optimization method with zeroth-order hyper-gradients (HOZOG)}
\begin{algorithmic}[1]
\REQUIRE  Learning rate   $\gamma$, approximate parameter $\mu$, size of directions $q$ and black-box inner solver $\mathcal{A}$.
 \STATE  Initialize  $\lambda_0 \in \mathbb{R}^p$.
\FOR{$t=0,1,2,\ldots,T-1$}
\STATE Generate $u = [u_1,\ldots, u_q]$, where $u_i  \sim N(0; I_p)$.
 \STATE  Compute the average zeroth-order hyper-gradient $\hat{\nabla} f( \lambda_t) = \frac{p}{\mu q} \sum_{i=1}^{q}\left ( f(\lambda_t+\mu u_i) - f(\lambda_t) \right ) u_i$, where $f(\lambda_t)$ is estimated based on the solution returned by the black-box inner solver $\mathcal{A}$.
 \STATE  Update $\lambda_{t+1} \leftarrow  \lambda_{t} - \gamma  \hat{\nabla} f( \lambda_t) $.
\ENDFOR
\ENSURE $\lambda_{T}$.
\end{algorithmic}
\label{algorithm2}
\end{algorithm}
\subsection{Feasibility Analysis}
\noindent \textbf{$\blacktriangleright$ \ Challenge:} \ In treating the optimization algorithm $\mathcal{A}(\lambda)$ as a black-box oracle that maps  $\lambda$ to $w$, the most important problem is whether the mapping function $\mathcal{A}(\lambda)$ is \textit{continuous} which is the basis of using the  zeroth-order hyper-gradient technique to optimize (\ref{formulation_real}). 

\noindent \textbf{$\bullet$ \ Continuity:} \ Before discussing the continuity of the $\mathcal{A}$-based constrained optimization problem $f(\lambda)$, we first give the definitions  of  iterative algorithm and continuous function  in Definitions \ref{Iterative_alg} and \ref{continuous} respectively.

\begin{definition}[Iterative algorithm] \label{Iterative_alg} Assume the optimization algorithm $\mathcal{A}(\lambda)$ can be formulated as a nested function as $\mathcal{A}(\lambda)=w_T$ and $w_t= \Phi_t (w_{t-1},\lambda)$ for $t=1,\ldots,T$, where $T$ is the number of iterations, $w_0$ is an initial solution, and, for every $t\in \{1,\ldots,T \}$, $\Phi_t : (\mathbb{R}^d \times \mathbb{R}^p) \rightarrow \mathbb{R}^d$ is a mapping function that represents the operation performed by the t-th step of the optimization
algorithm. We call the optimization algorithm $\mathcal{A}(\lambda)$ as an iterative algorithm.
\end{definition}
\begin{definition}[Continuous function] \label{continuous} For  all $\lambda \in \mathbb{R}^p $,  if the limit of $f(\lambda+\delta)$ as $\delta \in \mathbb{R}^p$ approaches $\textbf{0}$  exists and is equal to $f(\lambda)$, we call the function $f(\lambda)$ is continuous everywhere.
\end{definition}
Based on Definitions \ref{Iterative_alg} and \ref{continuous}, we give Theorem \ref{theorem1} to show that the $\mathcal{A}$-based constrained optimization problem $f(\lambda)$ is continuous under mild assumptions. The proof is provided in Appendix.
\begin{theorem}\label{theorem1}
If the hyperparameters $\lambda$ are continuous and the mapping functions $\Phi_t (w_{t-1},\lambda)$ (for every $t\in \{1,\ldots,T \}$) are continuous, the mapping function $\mathcal{A}(\lambda)$  is continuous, and the outer objective $E$ is continuous, we have that the  $\mathcal{A}$-based constrained optimization problem  $f(\lambda)$ is continuous \emph{w.r.t.} $\lambda$.
\end{theorem}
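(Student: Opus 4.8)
The plan is to prove the statement by induction on the iteration index $t$ of the iterative algorithm in Definition~\ref{Iterative_alg}, using the elementary fact that a composition of continuous functions is continuous together with the fact that a map into a product space $\mathbb{R}^d \times \mathbb{R}^p$ is continuous iff each of its two coordinate blocks is. First I would observe that the initial iterate $w_0$ does not depend on $\lambda$, so the map $\lambda \mapsto w_0$ is the constant map and hence trivially continuous. This is the base case.

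For the inductive step, suppose the map $\lambda \mapsto w_{t-1}(\lambda)$ is continuous on $\mathbb{R}^p$. Then the map $\lambda \mapsto (w_{t-1}(\lambda),\lambda) \in \mathbb{R}^d \times \mathbb{R}^p$ is continuous, since its first block $\lambda \mapsto w_{t-1}(\lambda)$ is continuous by the inductive hypothesis and its second block $\lambda \mapsto \lambda$ is the identity. By assumption $\Phi_t : \mathbb{R}^d \times \mathbb{R}^p \to \mathbb{R}^d$ is continuous, so the composition $\lambda \mapsto \Phi_t\!\left(w_{t-1}(\lambda),\lambda\right) = w_t(\lambda)$ is continuous. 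Iterating from $t=1$ to $t=T$ yields that $\lambda \mapsto w_T(\lambda) = \mathcal{A}(\lambda)$ is continuous. Finally, $f(\lambda) = E(w(\lambda),\lambda) = E(\mathcal{A}(\lambda),\lambda)$ is the composition of the continuous map $\lambda \mapsto (\mathcal{A}(\lambda),\lambda)$ with the continuous outer objective $E$, hence $f$ is continuous; in the language of Definition~\ref{continuous}, for every $\lambda \in \mathbb{R}^p$ we have $\lim_{\delta \to \mathbf{0}} f(\lambda+\delta) = f(\lambda)$.

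The only point that needs care is the precise meaning of ``$\Phi_t$ is continuous'': this must be read as \emph{joint} continuity in the pair $(w,\lambda)$, not merely separate continuity in each argument, since separate continuity would not be enough to push the continuous curve $\lambda \mapsto (w_{t-1}(\lambda),\lambda)$ through $\Phi_t$. Under that (intended) reading there is no genuine obstacle — the proof is just a disciplined chain of applications of the composition theorem, and the induction is what keeps the $\varepsilon$--$\delta$ bookkeeping across the $T$-fold nesting from becoming unwieldy. I would also remark (without proving it here) that if one wanted the stronger quantitative conclusion the paper uses later, namely a bound on the Lipschitz constant of $f$, the same induction works verbatim but now tracking the product of the per-step Lipschitz constants of the $\Phi_t$'s and of $E$; for the mere continuity claimed in Theorem~\ref{theorem1}, however, the qualitative argument above suffices.
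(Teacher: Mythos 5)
Your proof is correct and follows essentially the same route as the paper: the paper proves a small composition lemma (its Lemma~\ref{lem1}) and then applies it recursively through the nested structure $\mathcal{A}(\lambda)=\Phi_T(\Phi_{T-1}(\ldots\Phi_1(w_0,\lambda)\ldots,\lambda),\lambda)$ and once more for $E$, which is exactly your induction on $t$ followed by the final composition with $E$. Your explicit base case, the product-map observation, and the remark that ``continuous'' must mean \emph{jointly} continuous in $(w,\lambda)$ are welcome clarifications of points the paper leaves implicit, but they do not change the argument.
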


We provide several popular types of optimization algorithms  to show that almost existing iterative algorithms are continuous mapping functions which would make $f(\lambda)$ continious.
 \begin{enumerate}[leftmargin=0.2in]
\item \textbf{Gradient descent algorithms}: If $\mathcal{A}$ is a gradient descent algorithm (such as  SGD \citep{ghadimi2013stochastic}, SVRG \citep{reddi2016stochastic,allen2016variance}, SAGA \citep{defazio2014saga}, SPIDER \citep{fang2018spider}), the updating rules can be formulated as  $w\leftarrow  w - \gamma'  v$, where $v$ is a stochastic or deterministic  gradient estimated by the current $w$, and $\gamma'$ is the learning rate.
 To accelerate the training of deep neural networks, multiple adaptive variants of SGD (e.g., Adagrad, RMSProp and Adam \citep{goodfellow2016deep}) have emerged.
 \item  \textbf{Proximal gradient descent algorithms}: If $\mathcal{A}$ is a proximal gradient descent algorithm \citep{zhao2014accelerated,xiao2014proximal,gu2018asynchronous}, the updating rules should be the form of $w \leftarrow  \textrm{Prox} ( w - \gamma'  v)$, where $\textrm{Prox}$ is a proximal operator (such as the soft-thresholding operator for Lasso \citep{tibshirani1996regression}) which is normally continuous \citep{bredies2007iterative,zou2006adaptive}.
\end{enumerate}
 It is easy to verify that the mapping functions $\mathcal{A}(\lambda)$ corresponding to these iterative algorithms are continuous according to Theorem \ref{theorem1}.


For a continuous function $f(\lambda)$, there  exists a  Lipschitz constant $L$ (see Definition \ref{smooth}) which upper bounds $\frac{| f( \lambda_1) -  f( \lambda_2) |}{\|\lambda_1 - \lambda_2\|}$, $\forall \lambda_1, \lambda_2 \in \mathbb{R}^p$. 
Unfortunately,  exactly calculating the   Lipschitz constant of $f(\lambda)$ is NP-hard problem \citep{virmaux2018lipschitz}. We provide an upper bound\footnote{Although the upper bound is related to $T$, our simulation results  show that it does not grow exponentially with $T$ because $L_{A_t}$ or $L_{B_t}$ is not larger than one at most times.} to the Lipschitz constant of $f(\lambda)$ in Theorem \ref{theorem2}.
\begin{definition}[Lipschitz continuous constant] \label{smooth}
For a continuous function $f(\lambda)$, there exists a constant $L$ such that, $\forall \lambda_1, \lambda_2 \in \mathbb{R}^p$, we have $\| f( \lambda_1) -  f( \lambda_2) \| \leq L  \|\lambda_1 - \lambda_2\|$.
 The smallest $L$ for which the  inequality is true is called the Lipschitz constant of $f(\lambda)$.
\end{definition}

\begin{theorem}\label{theorem2}
Given the continuous mapping functions $\Phi_t (w_{t-1},\lambda)$ where $t\in \{1,\ldots,T \}$), $A_t=\frac{\partial \Phi_t (w_{t-1},\lambda)}{ \partial w_{t-1}}$, $B_t=\frac{\partial \Phi_t (w_{t-1},\lambda)}{ \partial \lambda}$. Given the continuous objective function $E(w_T,\lambda)$,  $A_{T+1}=\frac{\partial E (w_{T},\lambda)}{ \partial w_T}$ and $B_{T+1}=\frac{\partial E (w_{T},\lambda)}{ \partial \lambda }$. Let $L_{A_{t}}=\sup_{\lambda \in \mathbb{R}^p, w\in \mathbb{R}^d} \left  \| A_{t+1} \right  \|_2$, $L_{B_{t}}=\sup_{\lambda \in \mathbb{R}^p, w\in \mathbb{R}^d} \left  \|  B_{t}\right  \|_2$. Let $L(f) $ denote the  Lipschitz constant of the continuous function $f(\lambda)$,  we  can  upper bound  $L(f)$ by $\sum_{t=1}^{T+1}   L_{B_{t}}  L_{A_{t+1}} \ldots L_{A_{T+1}} $.
\end{theorem}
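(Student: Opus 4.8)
The plan is to write the exact hyper-gradient $\nabla f(\lambda)$ in closed form by unrolling the iterative algorithm with the chain rule, bound its operator norm termwise by submultiplicativity, replace each Jacobian norm by its supremum to obtain a uniform bound on $\|\nabla f(\lambda)\|_2$, and finally pass from a uniform gradient bound to a Lipschitz bound via the mean value inequality along the segment joining $\lambda_1$ and $\lambda_2$.

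First I would treat each intermediate iterate $w_t$ as a function of $\lambda$ through the recursion $w_t = \Phi_t(w_{t-1},\lambda)$ and differentiate, using $w_{t-1}$ itself depends on $\lambda$: this gives $\frac{dw_t}{d\lambda} = A_t \frac{dw_{t-1}}{d\lambda} + B_t$, with base case $\frac{dw_0}{d\lambda} = 0$ because the initialization $w_0$ is fixed and independent of $\lambda$. Solving this linear recursion yields $\frac{dw_T}{d\lambda} = \sum_{t=1}^{T} (A_T A_{T-1}\cdots A_{t+1}) B_t$, where the product over an empty index range (the $t=T$ term) is understood to be the identity. Composing with the outer objective $E$ and using $A_{T+1} = \partial E/\partial w_T$, $B_{T+1} = \partial E/\partial \lambda$ gives $\nabla f(\lambda) = A_{T+1}\frac{dw_T}{d\lambda} + B_{T+1}$, which after absorbing $B_{T+1}$ as the $t=T+1$ summand becomes the compact form $\nabla f(\lambda) = \sum_{t=1}^{T+1} (A_{T+1}A_T\cdots A_{t+1}) B_t$.

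Next I would take operator $2$-norms of both sides. Submultiplicativity gives $\|\nabla f(\lambda)\|_2 \le \sum_{t=1}^{T+1} \|B_t\|_2 \prod_{s=t+1}^{T+1}\|A_s\|_2$, and bounding each factor by its supremum over $(\lambda,w)$ — i.e.\ by the constants $L_{B_t}$ and $L_{A_s}$ defined in the statement — produces the uniform bound $\sup_{\lambda}\|\nabla f(\lambda)\|_2 \le \sum_{t=1}^{T+1} L_{B_t} L_{A_{t+1}}\cdots L_{A_{T+1}}$. Since $f$ is real-valued and (by the existence of the Jacobians $A_t,B_t$) differentiable with this uniformly bounded gradient, for any $\lambda_1,\lambda_2$ I would write $f(\lambda_1) - f(\lambda_2) = \int_0^1 \langle \nabla f(\lambda_2 + s(\lambda_1-\lambda_2)),\, \lambda_1 - \lambda_2\rangle\, ds$ and apply Cauchy--Schwarz inside the integral to conclude $|f(\lambda_1)-f(\lambda_2)| \le \big(\sum_{t=1}^{T+1} L_{B_t} L_{A_{t+1}}\cdots L_{A_{T+1}}\big)\,\|\lambda_1-\lambda_2\|$, hence $L(f)$ is at most this sum.

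I expect the calculus itself to be routine; the main obstacle is the bookkeeping around it. One has to pin down the order of the matrix products — the forward recursion in $w_t$ produces the factors $A_T\cdots A_{t+1}$ in reverse order, and it is easy to transpose indices or mishandle the empty-product convention at $t=T+1$ — and to match the result to the exact indexing of $L_{A_t}$ and $L_{B_t}$ in the statement. The other genuine point is ensuring the suprema $L_{A_t}, L_{B_t}$ are finite and that $f$ is actually differentiable: this requires reading the hypotheses as ``each $\Phi_t$ and $E$ is continuously differentiable'' (so that the chain rule applies verbatim and $\nabla f$ is continuous), together with boundedness of the Jacobians over the relevant domain, which is implicit in the statement's use of a finite $\sup$; without it the bound is vacuous. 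Those are the places I would be most careful.
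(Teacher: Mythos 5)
Your proposal is correct and follows essentially the same route as the paper: unroll the iteration with the chain rule to get $\nabla f(\lambda)=\sum_{t=1}^{T+1}(A_{T+1}\cdots A_{t+1})B_t$, bound each summand by submultiplicativity and the suprema $L_{A_s},L_{B_t}$, and conclude that a uniform gradient bound yields the Lipschitz bound. The only difference is that you prove the final step directly via the mean value inequality, whereas the paper cites the fact that $L(f)=\sup_\lambda\|\partial_\lambda f(\lambda)\|_2$ as a lemma from Federer; your remarks about the hypotheses really needing differentiability (not mere continuity) and finiteness of the suprema are well taken and apply equally to the paper's own argument.
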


\noindent \textbf{$\blacktriangleright$ \ Conclusion:} \  Because the $\mathcal{A}$-based constrained optimization problem  $f(\lambda)$ is continuous, we can use the zeroth-order hyper-gradient technique to optimize $f(\lambda)$ \citep{nesterov2017random}.
\cite{nesterov2017random}  provided the convergence guarantee of zeroth-order hyper-gradient method when $f(\lambda)$ is  Lipschitz continuous as defined in Definition \ref{smooth}.
\section{Experiments}\label{section3}
We conduct the hyperparameter optimization experiments on three representative learning problems (\emph{i.e.}, $l_2$-regularized logistic regression,  deep neural networks (DNN) and data hyper-cleaning), whose sizes of hyperparameters are from 1 to \num[group-separator={,}]{1250}. We also test the parameter sensitivity analysis of HOZOG under different  settings of  parameters $q$, $\mu$ and $\gamma$, which are included in Appendix due to the page limit. All the experiments are conducted on a Linux system equipped with four NVIDIA Tesla P40 graphic cards.

\noindent \textbf{$\bullet$ \ Compared algorithms:} \ We compare our HOZOG with 	the representative hyperparameter optimization approaches such as random search (RS) \citep{bergstra2012random}, RFHO with  forward (FOR) or reverse (REV) gradients \citep{franceschi2017forward} \footnote{The code of RFHO is is available at \url{https://github.com/lucfra/RFHO}.},
HOAG  \citep{pedregosa2016hyperparameter}\footnote{The code of HOAG is available at \url{https://github.com/fabianp/hoag}.},
GPBO \cite{snoek2012practical} \footnote{The code of GPBO is available at \url{http://github.com/fmfn/BayesianOptimization/}.} and BOHB  \citep{falkner2018bohb} \footnote{The code of BOHB is available at \url{https://github.com/automl/HpBandSter}. Note that BOHB is an improved version of Hyperband \citep{li2017hyperband}. Thus, we do not compare HOZOG with Hyperband.}. Most of them are the  representative  black-box optimization and gradient-based hyperparameter optimization
algorithms as presented in Table \ref{table:methods}.  We implement our  HOZOG in Python\footnote{Our code is available at \url{https://github.com/jsgubin/HOZOG}.}. 

\noindent \textbf{$\bullet$ \ Evaluation criteria:} \ We compare different algorithms with  three criteria, \emph{i.e.}, $\| \nabla f(\lambda)\|_2$, suboptimality and   test error, where ``suboptimality'' denotes $f(\lambda)- f(\lambda^{\diamond})$ and $f(\lambda^{\diamond})$ is the minimum value of $f(\lambda)$  for all $\lambda$ which have been explored, and test error is the average loss on the testing set. Note the hyper-gradients $ \nabla f(\lambda) $ for all method except for FOR and REV  are computed by Eq. (\ref{ZO_gradient}).

\noindent \textbf{$\bullet$ Datasets:} The datasets used in experiments are News20, Covtype, Real-sim, CIFAR-10 and Mnist datasets from LIBSVM repository, which   is available at \url{https://www.csie.ntu.edu.tw/~cjlin/libsvmtools/datasets/}. Especially, for News20 and Mnist two multi-class datasets, we transform them to  binary classification problems by randomly partitioning the data into two groups.

\noindent \textbf{$\bullet$ Parameters of HOZOG:}  The values  of parameters $q$, $\mu$ and $\gamma$ in HOZOG are given in Table \ref{table:parameter}. Especially, $q$ plays an important role to HOZOG because it determines the accuracy and the running time of estimating the gradients. We empirically observe that  $q \leq 5$  has a good balance between the two objectives.

 \begin{table}[htbp]
 \center
 \caption{The  parameter settings of HOZOG in the experiments. (``\# HP'' is  the abbreviation of the number of  hyperparameters.) }
\begin{tabular}{c|c|c|c|c|c|c}
\hline
 \multicolumn{2}{c|}{\textbf{Experiment}} & \textbf{\# HP}  & \textbf{Dataset}&  $q$ & $\mu$ &  $\gamma$       \\ \hline
\multicolumn{2}{c|}{\multirow{3}{*}{$l_2$-regularized logistic regression}} &\multirow{3}{*}{1} & News20 & 1 & 0.01 & 0.05  \\
\multicolumn{2}{l|}{} & & Covtype & 1 & 0.01 &  0.03  \\
 \multicolumn{2}{l|}{} & & Real-sim & 1 & 0.01 & 0.005  \\  \hline
\multirow{3}{*}{Deep Neural Networks} & 2-layer CNN & 100 & \multirow{3}{*}{CIFAR-10} & 3 & 0.01 & 0.001   \\
& VGG-16  & 20 & & 3 & 1 & 1 \\
& ResNet-152  & 10 & & 3 & 1 & 5 \\ \hline
\multicolumn{2}{c|}{Data hyper-cleaning}  & 500$/$1250 & Mnist & 5 &  1 & 1 \\ \hline
\end{tabular}
\label{table:parameter}
\end{table}

\subsection{$l_2$-Regularized Logistic Regression}\label{secLR}
\noindent \textbf{Experimental setup:}
We consider to estimate the regularization
parameter in the  $l_2$-regularized logistic regression model.
We split one data set into three subsets (\emph{i.e.}, the train set ${\cal D}_{tr}$, validation set ${\cal D}_{val}$ and test set ${\cal D}_{t}$) with a ratio of 2:1:1.  
We use the logistic loss $l(t)=\log(1+e^{-t})$ as the loss function.  The hyperparameter optimization problem for $l_2$-regularized logistic regression is formulated as follows.
\begin{align}\label{l2_lr}
\mathop{\arg\min}_{\lambda\in[-10,10]}  \sum_{i\in{\cal D}_{val}}  l(y_i \langle x_i, w(\lambda) \rangle ),
\quad\quad s.t.\quad  w(\lambda)\in\mathop{\arg\min}_{w\in\mathbb{R}^d} \sum_{i\in{\cal D}_{tr}}l(y_i \langle x_i,w(\lambda) \rangle )+e^{\lambda}\Vert w\Vert^2
\end{align}
\\
 The solver used for solving the inner objective is L-BFGS\footnote{The implementation is available at \url{https://github.com/fabianp/hoag}.} \citep{liu1989limited} for HOAG  and Adam \citep{kingma2014adam} for the others. 

\noindent \textbf{Results and discussions:} Figure \ref{figureLR} presents the convergence results of  suboptimality,  $\| \nabla f(\lambda)\|_2$ and   test error  \emph{vs.} the running time for different methods. Note that we take same initial values of $\lambda$ and $w$ for all gradient-based methods, while the black-box methods naturally start from different points.   Because HOAG works with tolerances and warm start strategy, HOAG has a fast convergence at the early stage but a slow convergence at the late stage as shown in Figures \ref{fig1.4}-\ref{fig1.6}. We observe that HOZOG  runs faster than other gradient-based methods. This is because that FOR and REV need much time to compute  hyper-gradients.  Figures \ref{fig1.4}-\ref{fig1.6} provide $\| \nabla f(\lambda)\|_2$ of different methods as functions of running time. We can see that the black-box methods (\textit{i.e.}, BOHB and GPBO) spend much time on exploring because $\| \nabla f(\lambda)\|_2$ of these methods didn't strictly go down in the early stage. Overall, all the results show that HOZOG has a faster convergence than other methods.
\begin{figure*}[t]
\centering
\begin{subfigure}[b]{0.32\textwidth}
\centering
\includegraphics[width=2in]{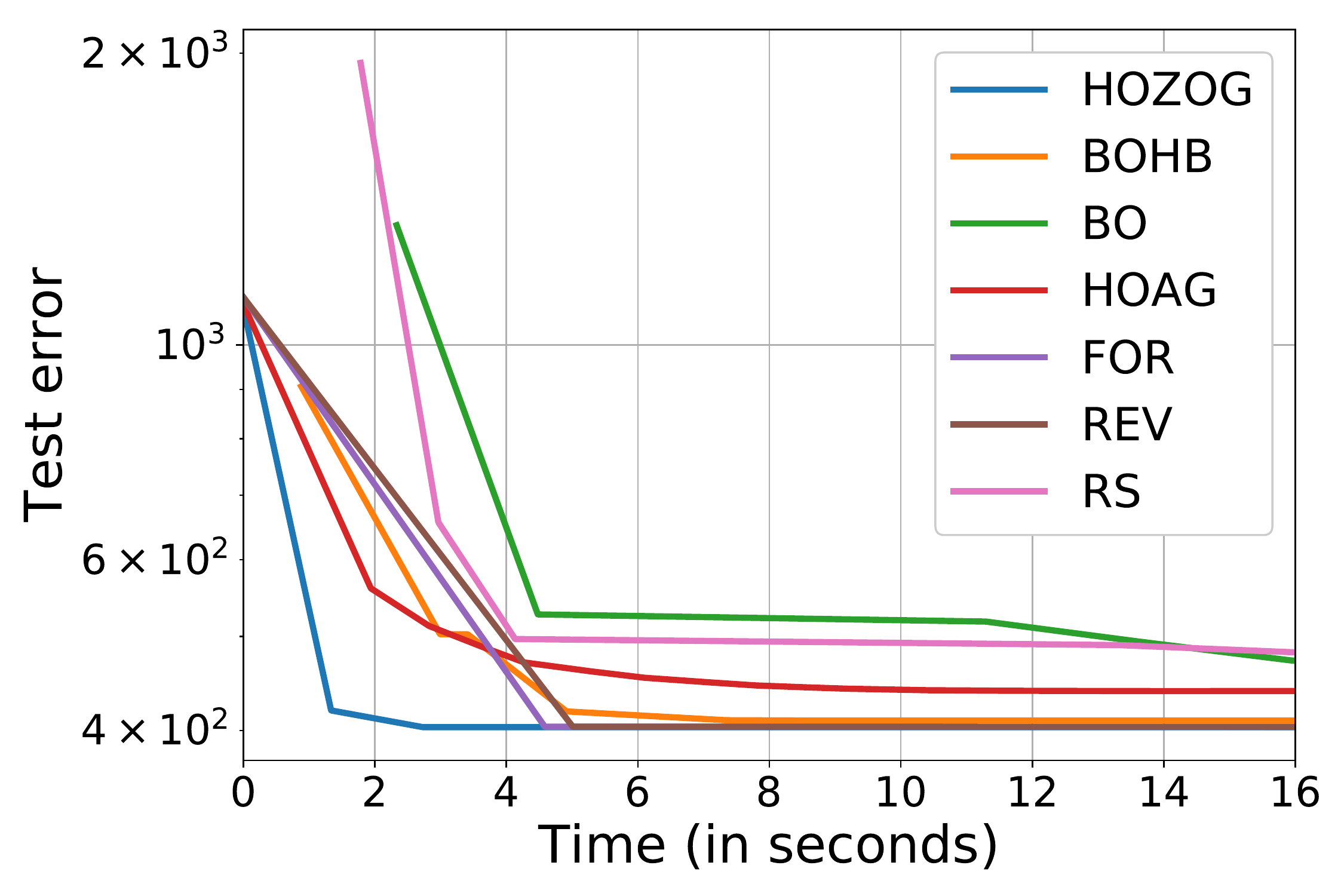}
\caption{News20}
\label{fig1.7}
\end{subfigure}
\begin{subfigure}[b]{0.32\textwidth}
\centering
  \includegraphics[width=2in]  {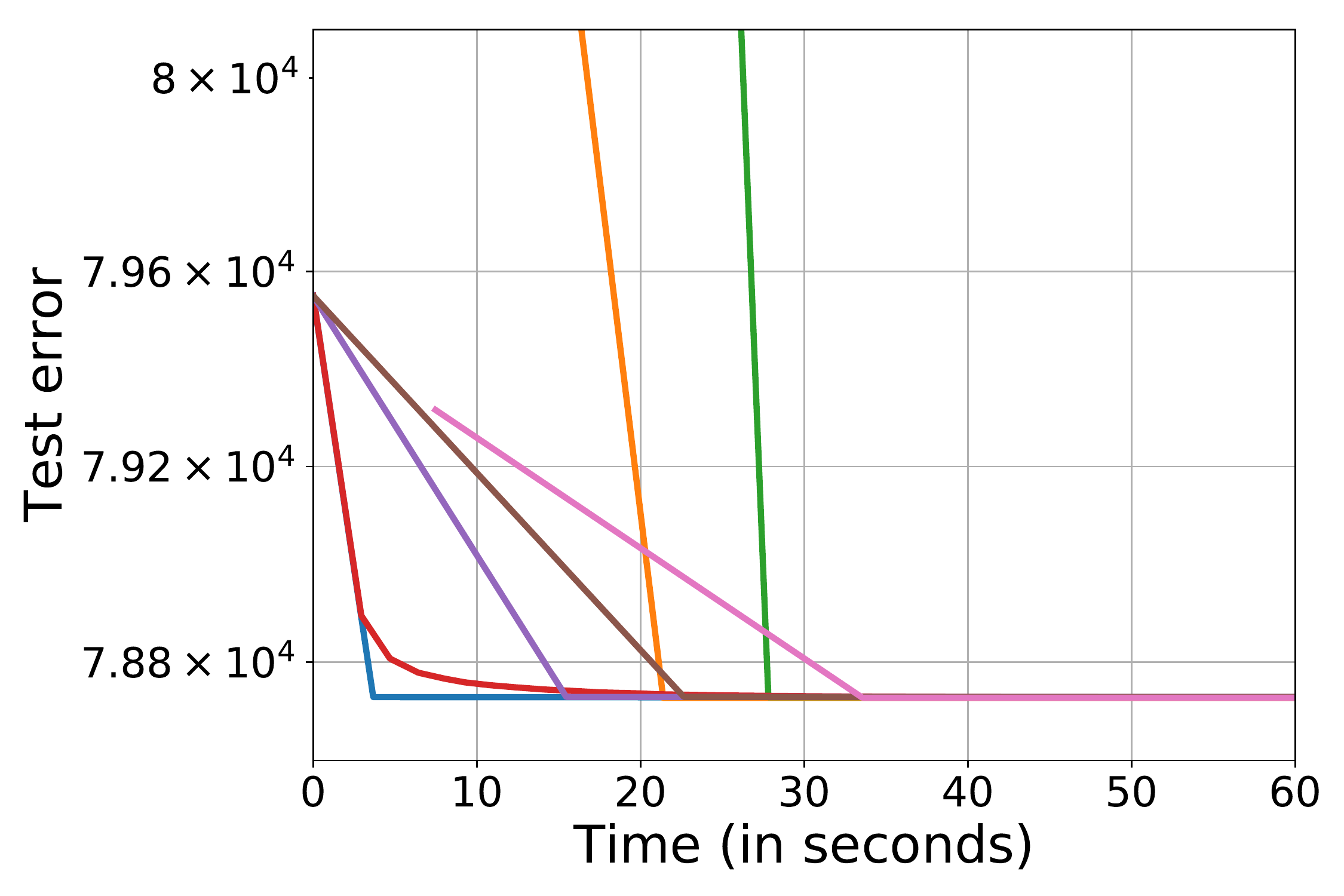}
\caption{Covtype}
\end{subfigure}
\begin{subfigure}[b]{0.32\textwidth}
\centering
\includegraphics[width=2in] {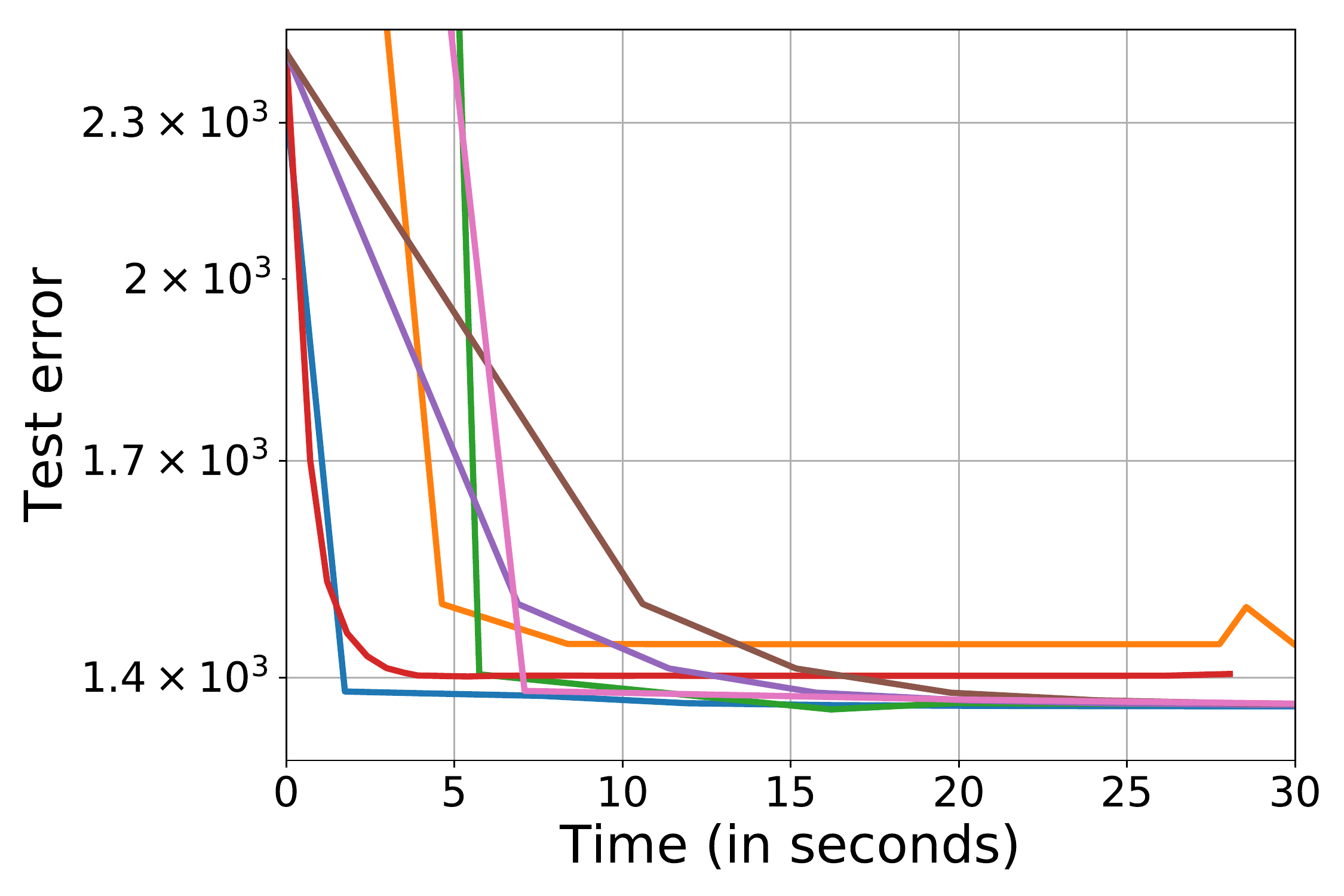}
\caption{Real-sim}
\label{fig1.9}
\end{subfigure}
\begin{subfigure}[b]{0.32\textwidth}
\centering
\includegraphics[width=2in]{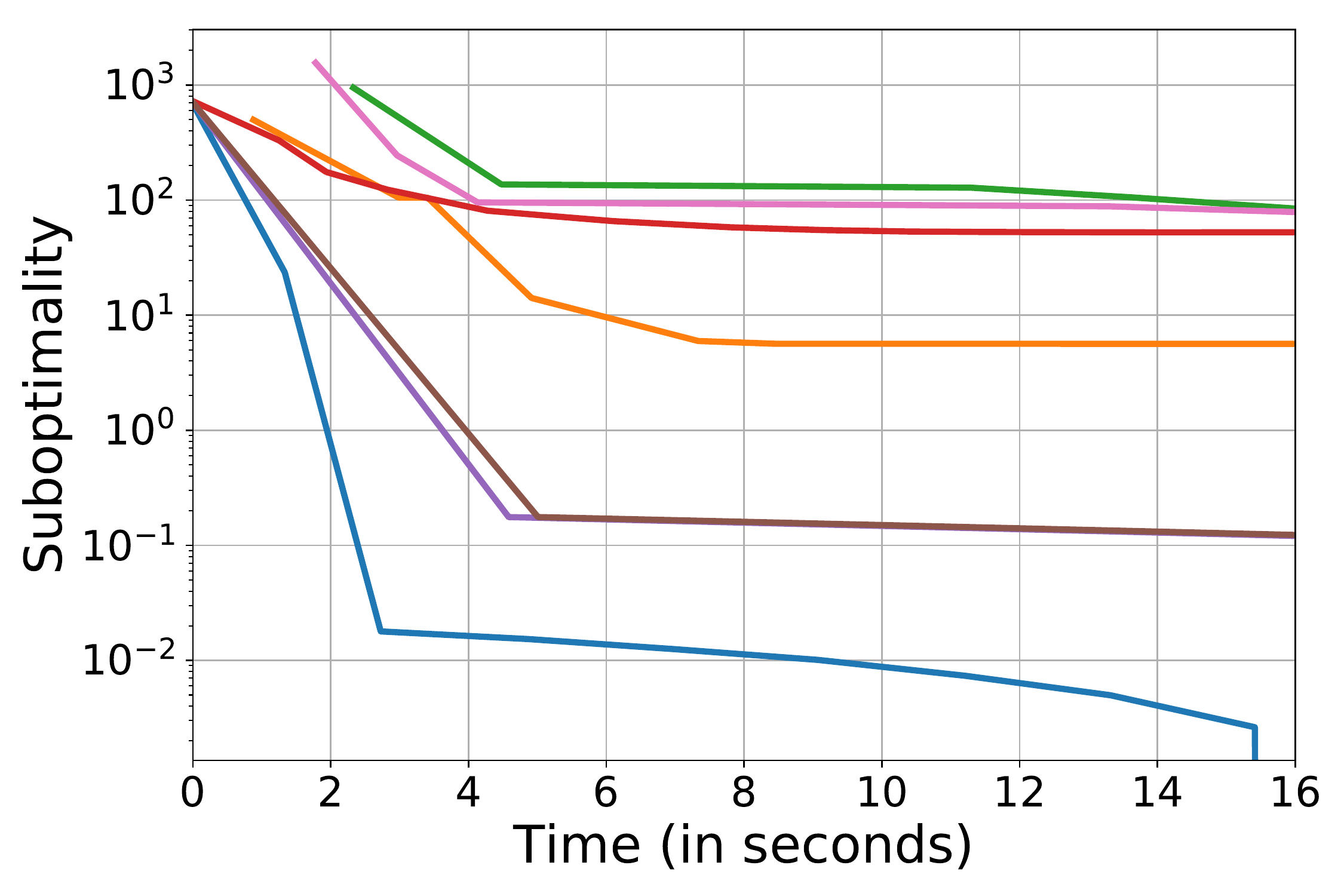}
\caption{News20}
\label{fig1.1}
\end{subfigure}
\begin{subfigure}[b]{0.32\textwidth}
\centering
\includegraphics[width=2in]{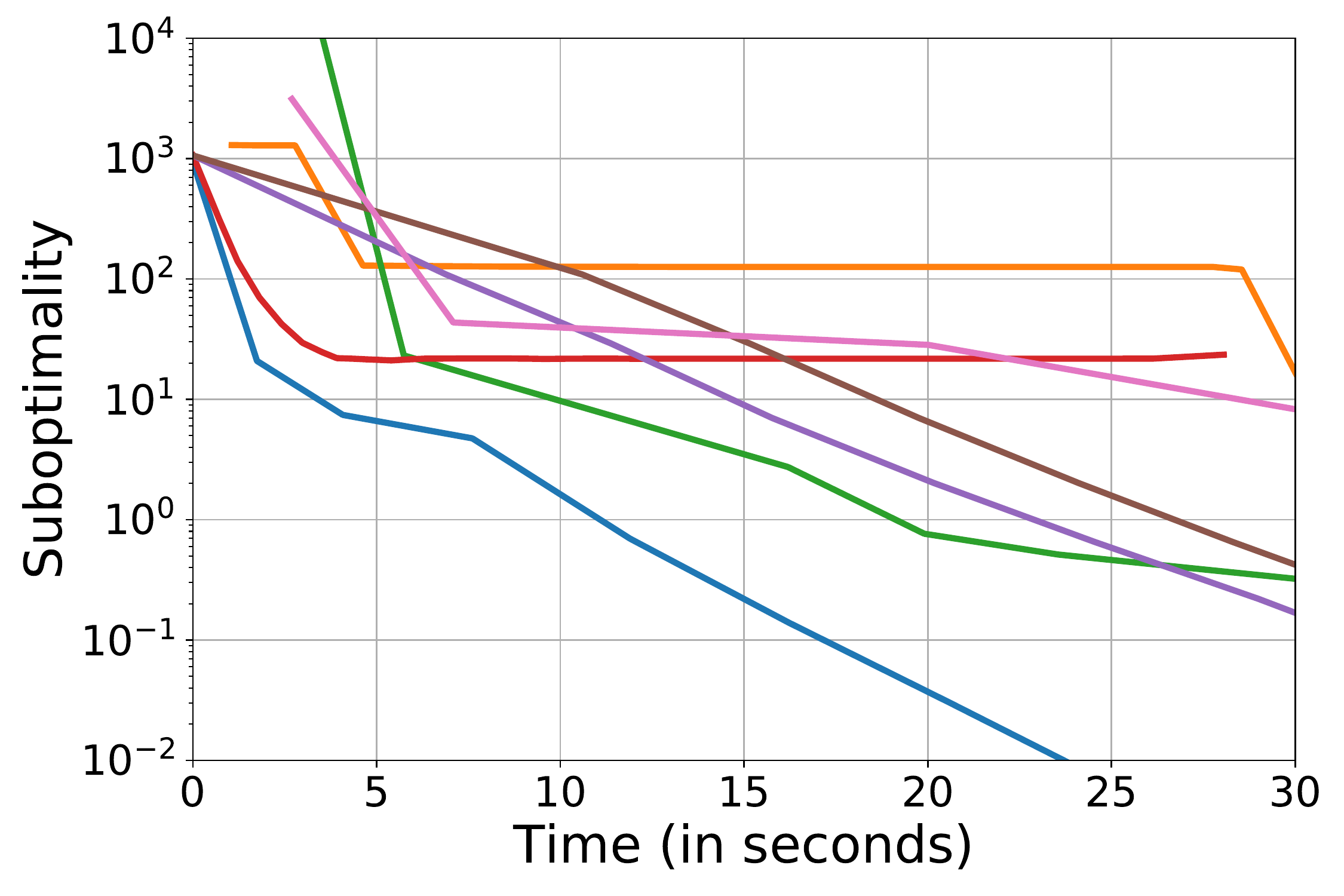}
\caption{Real-sim}
\end{subfigure}
\begin{subfigure}[b]{0.32\textwidth}
\centering
\includegraphics[width=2in]{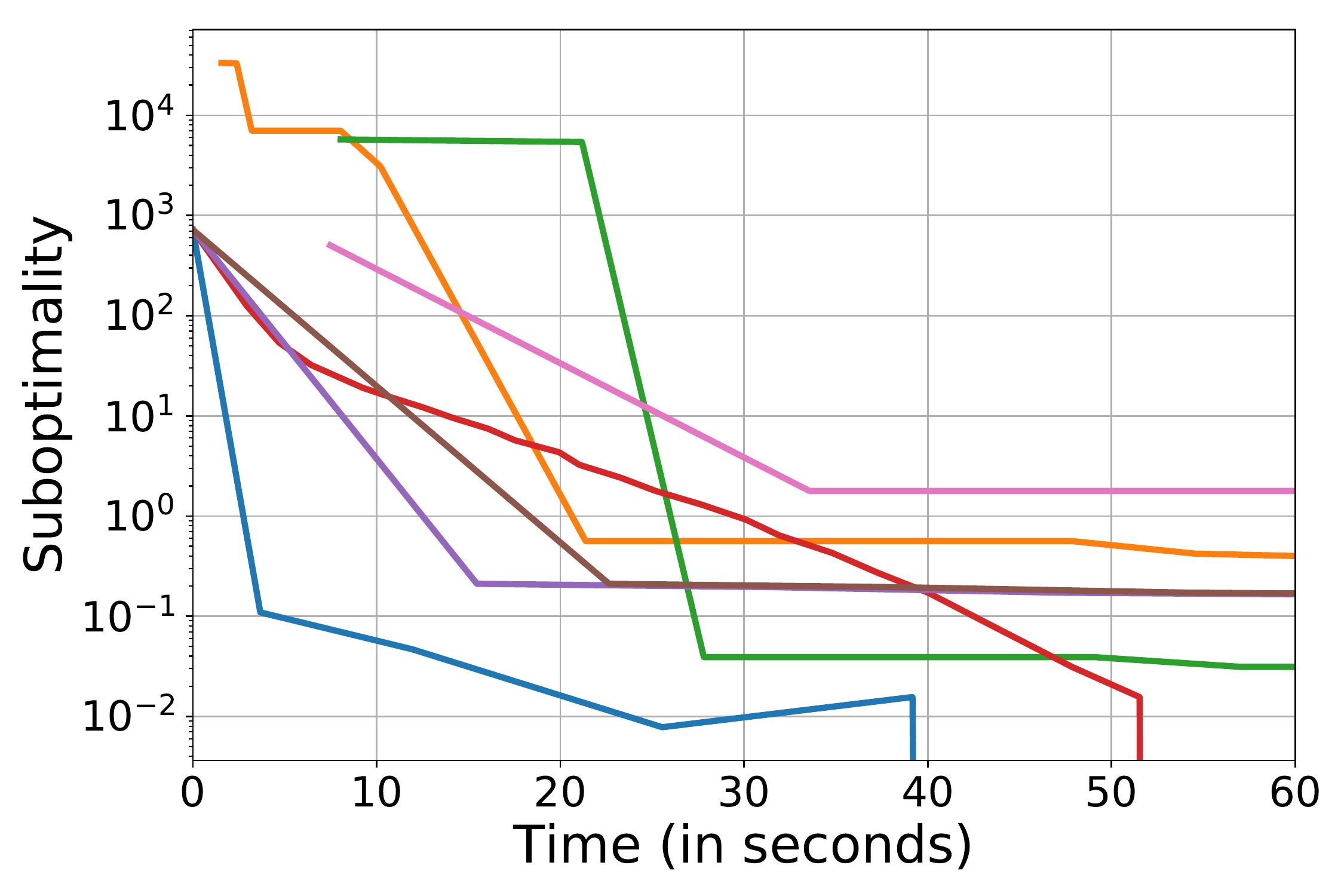}
\caption{Covtype}
\label{fig1.3}
\end{subfigure}
\begin{subfigure}[b]{0.32\textwidth}
\centering
\includegraphics[width=2in]{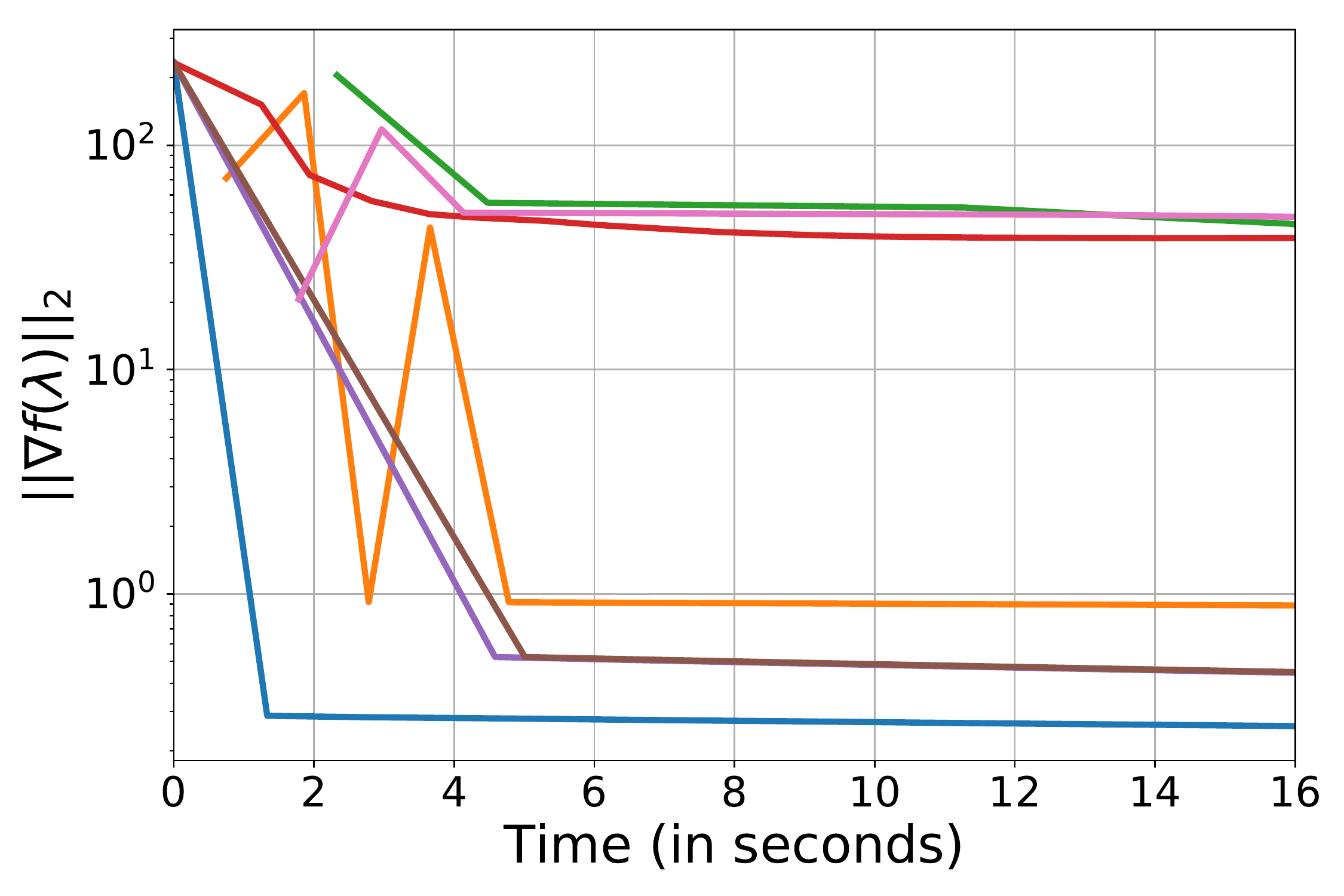}
\caption{News20}
\label{fig1.4}
\end{subfigure}
\begin{subfigure}[b]{0.32\textwidth}
\centering
\includegraphics[width=2in]{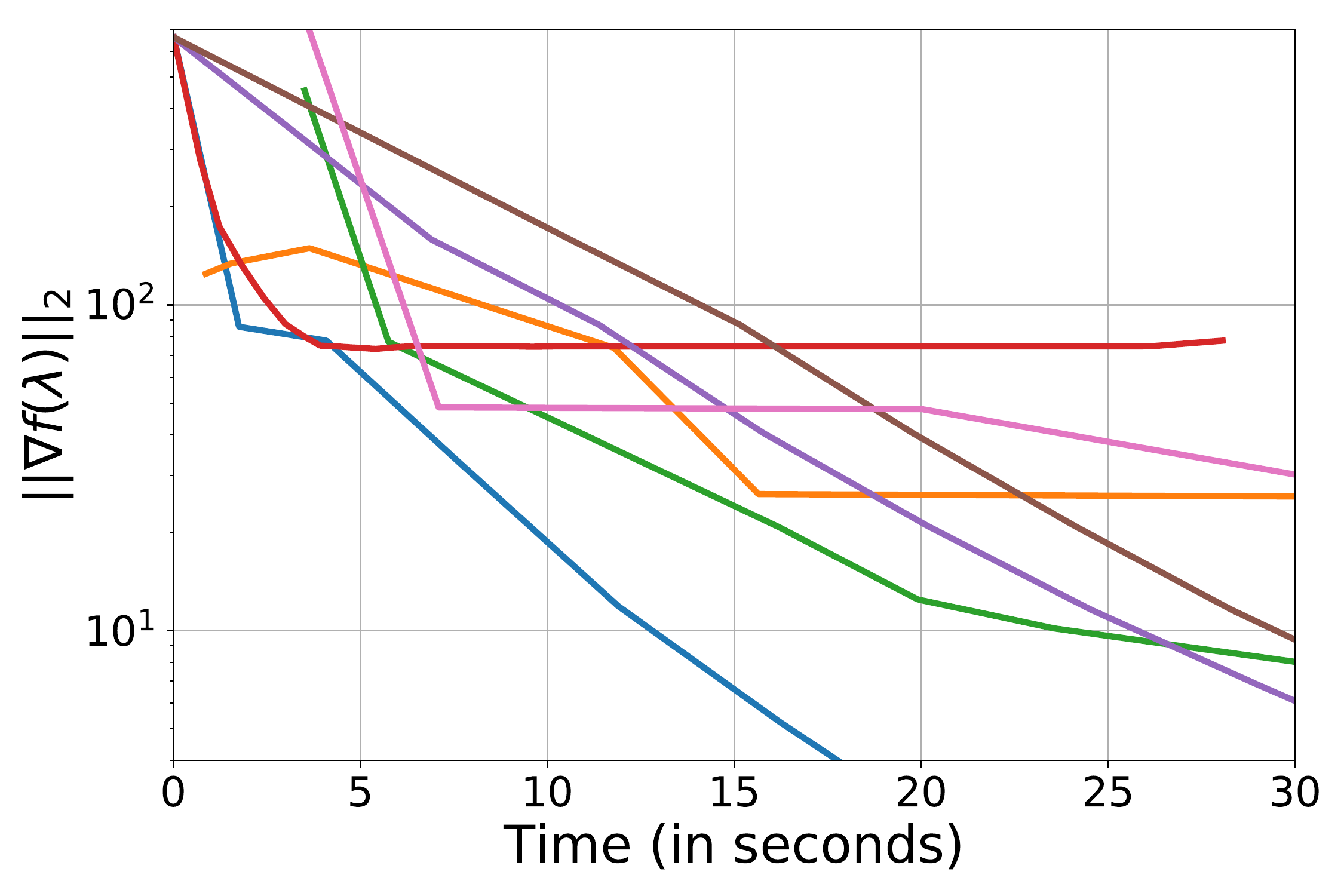}
\caption{Real-sim}
\end{subfigure}	
\begin{subfigure}[b]{0.32\textwidth}
\centering
\includegraphics[width=2in]{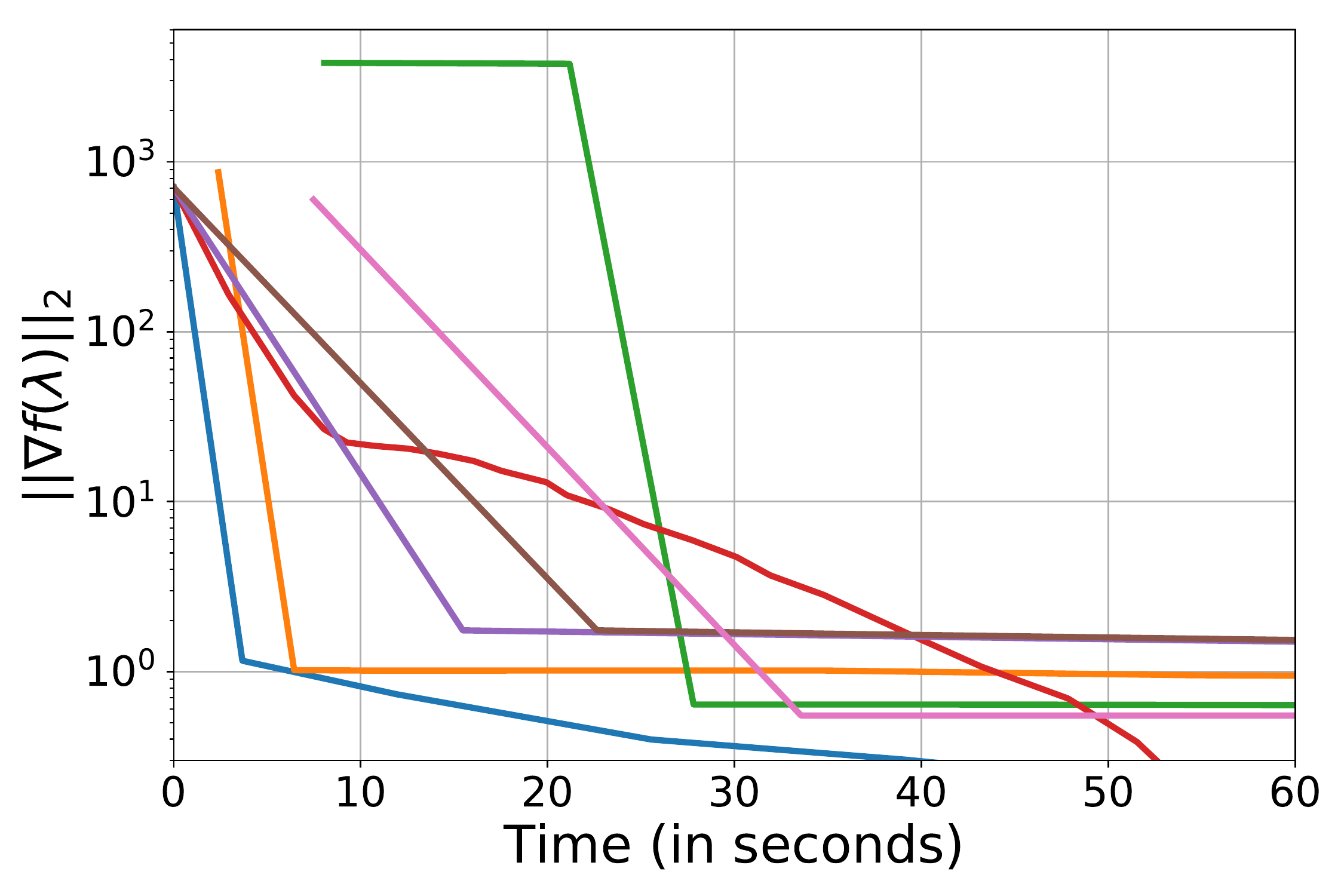}
\caption{Covtype}
\label{fig1.6}
\end{subfigure}
\caption{Comparison of different hyperparameter optimization algorithms for $l_2$-regularized logistic regression sharing the same legend.
 (a)-(c): Test error. (d)-(f): Suboptimality.
		(g)-(i): $\| \nabla f(\lambda)\|_2$. (Larger figures can be found in the supplement material.)
	}
	\label{figureLR}
\end{figure*}
\subsection{Deep Neural Networks}\label{secDNN}
 \vspace*{-8pt}

\noindent \textbf{Experimental setup:} We validate the advantages of HOZOG on optimizing learning rates of DNN which is much more complicated in both structure and training compared to $l_2$-regularized logistic regression.

Specifically, the training of modern DNN is usually an intriguing process, involving multiple heuristic hyperparameter schedules, \textit{e.g.} learning rate with exponential weight decay. Instead of intuitive settings, we propose to apply epoch-wise learning rates and jointly optimize these hyperparameters. The experiments are conducted on CIFAR-10 dataset with \num[group-separator={,}]{50000} samples. To demonstrate the scalability of HOZOG, three deep neural networks with various structure are used, including (1) two layers DNN (2-layer CNN) with convolutional, max pooling, and normalizing layers; (2) VGG-16~\citep{simonyan2014very}, (3) ResNet-152~\citep{he2016deep}.
The initialization of inner problem is randomized for different meta-iterations to avoid the potential dependence on the quirks of particular settings.
In detail, for all experiments we apply 50 meta-iterations and optimize inner problems using stochastic gradient descent, with batch size of 256. On CNN, 100 epochs for inner problem are used, which indicates 100 hyperparameters are involved.
On VGG-16, the original model takes $224\times 224$ images as inputs, and we adjust the size of the first fully-connected layer from $7\times 7$ convolution to $1\times 1$ to fit CIFAR-10 inputs. Here 20 epochs for inner are used.
On ResNet-152, similar processing is exploited and the inner epoch is 10. 

\noindent \textbf{Results and discussions:} The results are summarized in Figure \ref{figureCNN}.  The experimental results show that the learning rates computed by HOZOG achieve the lowest test error and the fastest descending speed compared to baselines on all tasks. Moreover, the proposed method requires much less time to attain the best hyperparameters, and tends to have smaller variances in gradients.  It is  noteworthy that,  some state-of-the-art hyperparameter optimization approaches (including HOAG, REV and FOR) are missing in this setting,  due to the algorithms of  REV and FOR are limited to smooth functions and the implementation of HOAG is limited to the  hyperparameter optimization problems with a small number of hyperparamters. However,  these difficulties are avoided by our HOZOG, which also demonstrates the flexibility of HOZOG. Moreover, as a brutal search method, the performance of RS is very unstable, which can be identified from the hyper-gradients. For BO and BOHB, the instability also exists, potentially due to the highly complexity of the network structure. Another noteworthy problem with respect to BO and BOHB is the computational overhead in sampling, which make the meta-iteration extremely time consuming, compared to other methods.

We observe that the difficulty of this problem mainly comes from model complexity, instead of hyper-parameter numbers. For CNN with 100 hyper-parameters, HOZOG shows advantages in both time and suboptimality, although baselines can also efficiently find a reasonable solution. For VGG-16 and ResNet-152, we notice that though the size of hyperparameters is reduced, it takes baselines longer time to find acceptable results. Instead, HOZOG still shows fast convergence empirically. This observation indicates that HOZOG is potentially more suitable for hyperparameter optimization in large DNN.




\begin{figure*}[t]
	\centering
	\begin{subfigure}[b]{0.32\textwidth}
		\centering
		\includegraphics[width=2in]{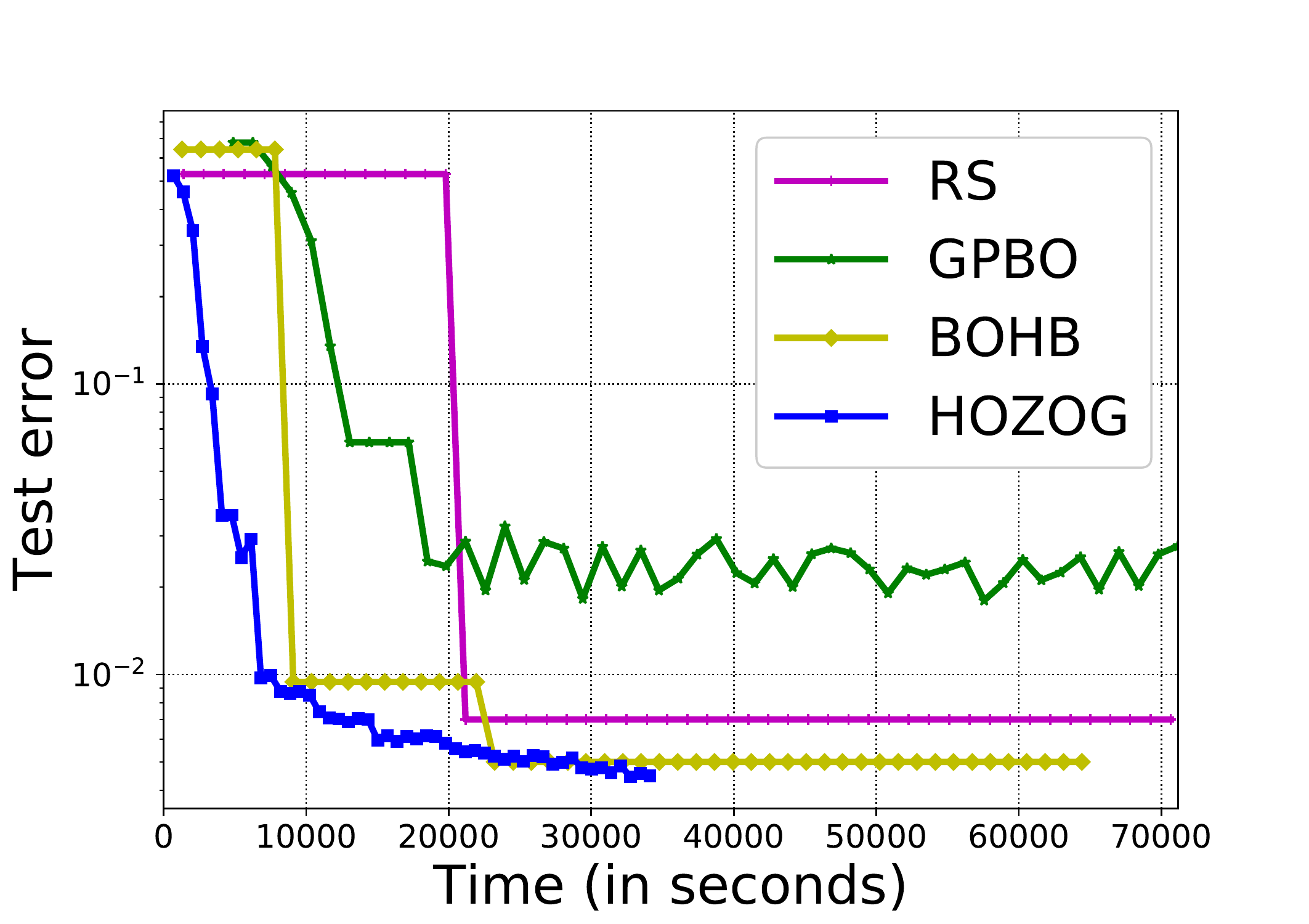}
		\caption{2-layer CNN}
	\end{subfigure}
		\begin{subfigure}[b]{0.32\textwidth}
		\centering
		\includegraphics[width=2in]{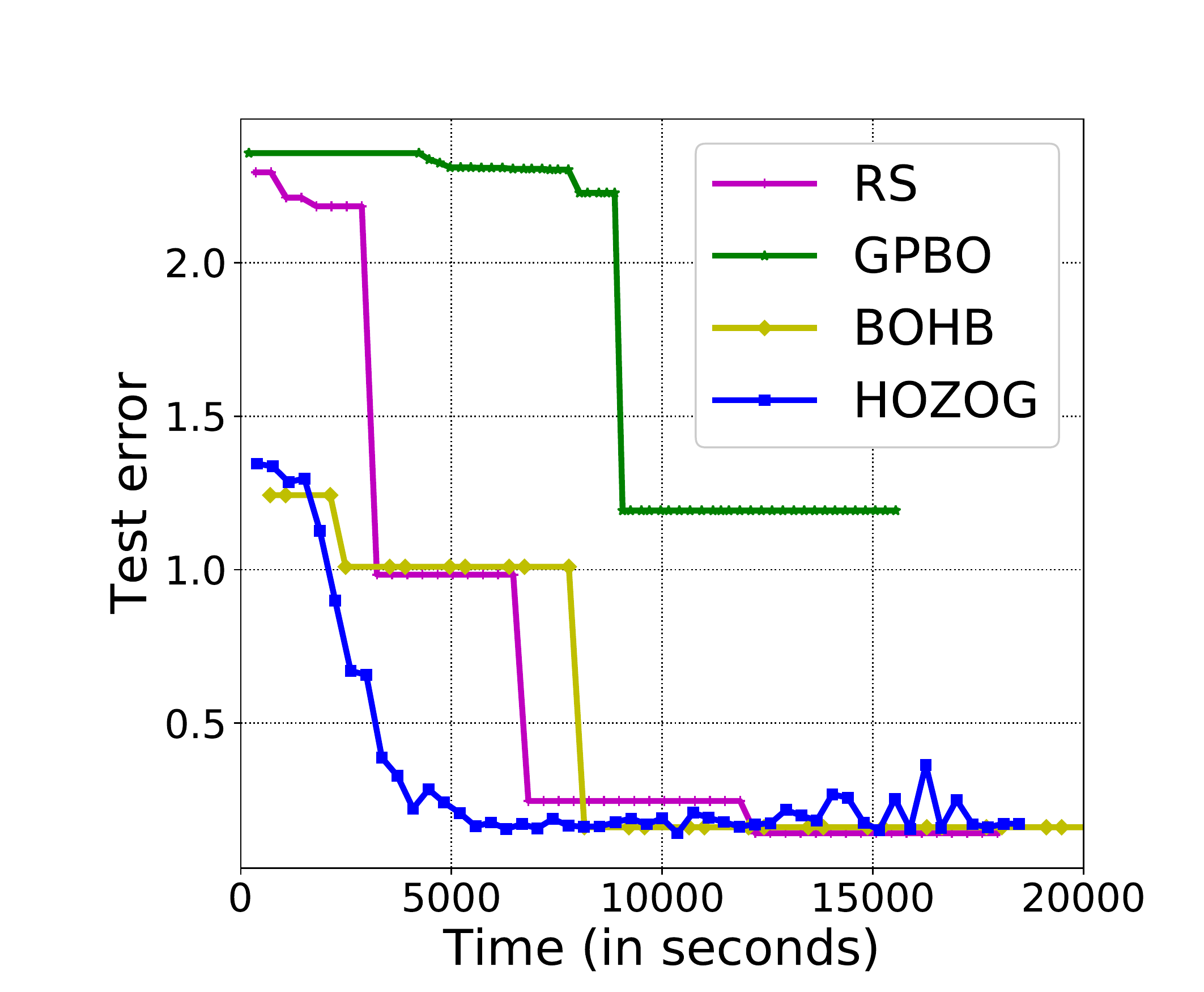}
		\caption{VGG-16}
	\end{subfigure}
		\begin{subfigure}[b]{0.32\textwidth}
		\centering
		\includegraphics[width=2in]{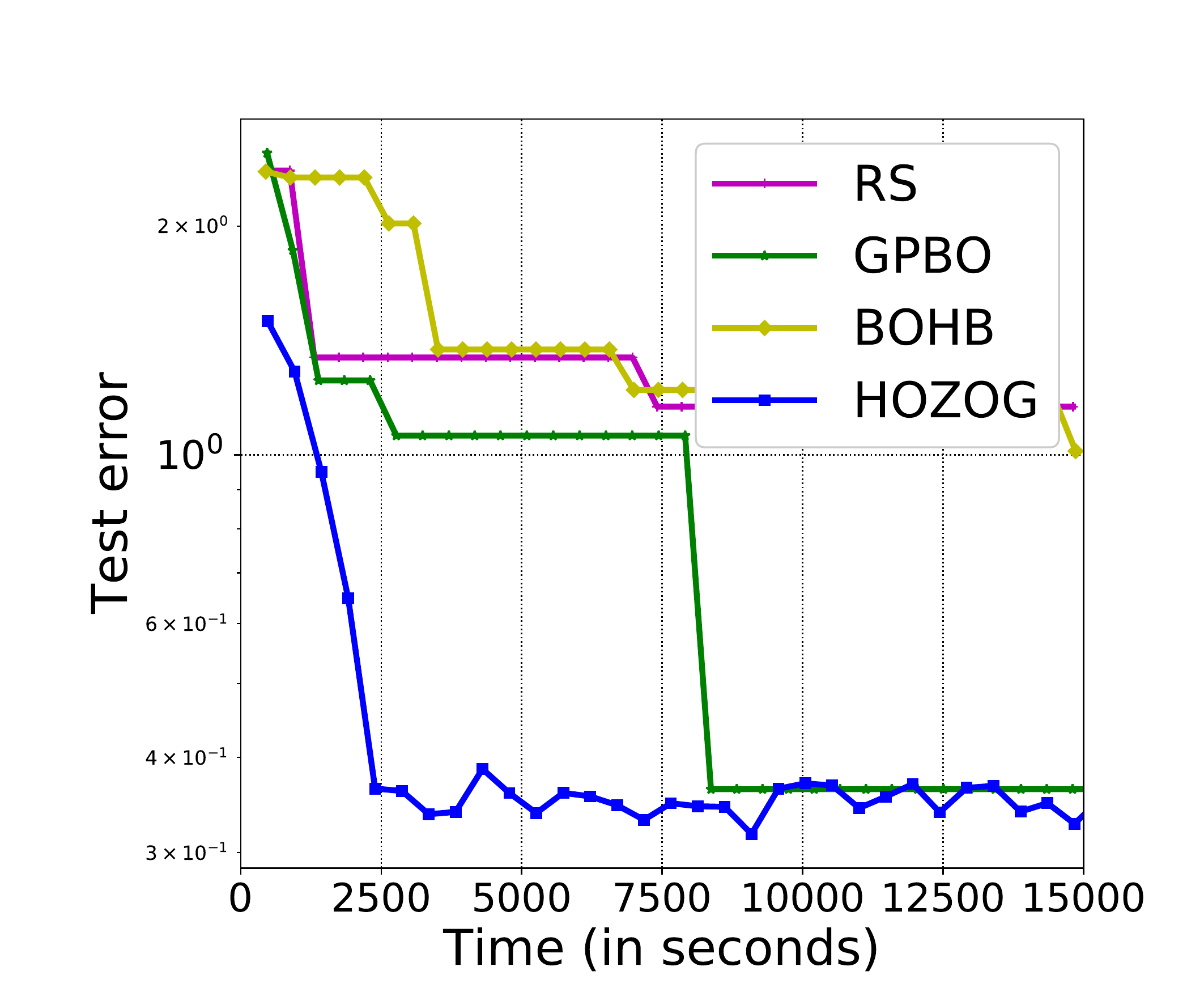}
		\caption{ResNet-152}
	\end{subfigure}
		\begin{subfigure}[b]{0.32\textwidth}
		\centering
		\includegraphics[width=2in]{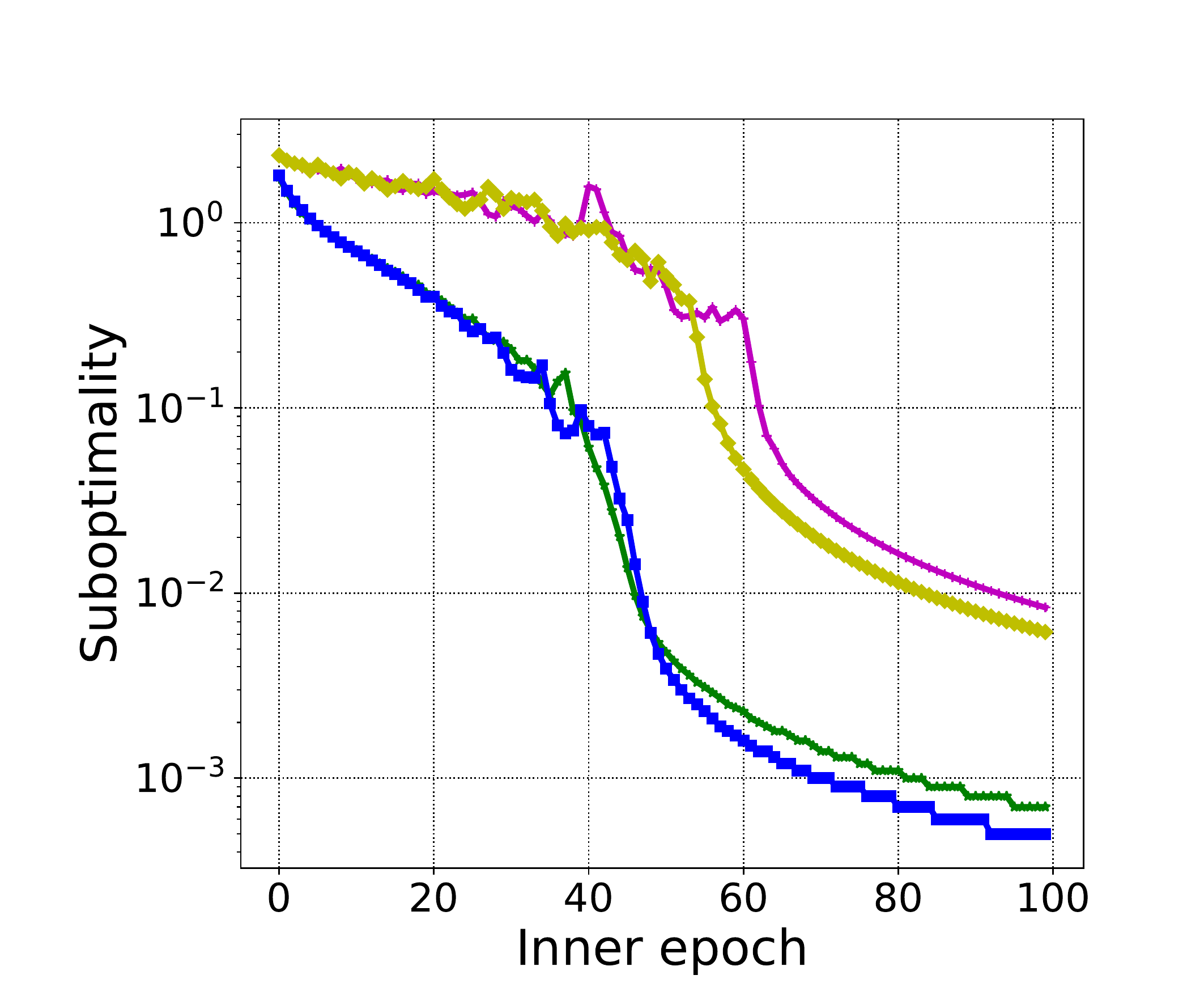}
		\caption{2-layer CNN}
	\end{subfigure}
	\begin{subfigure}[b]{0.32\textwidth}
		\centering
		\includegraphics[width=2in]{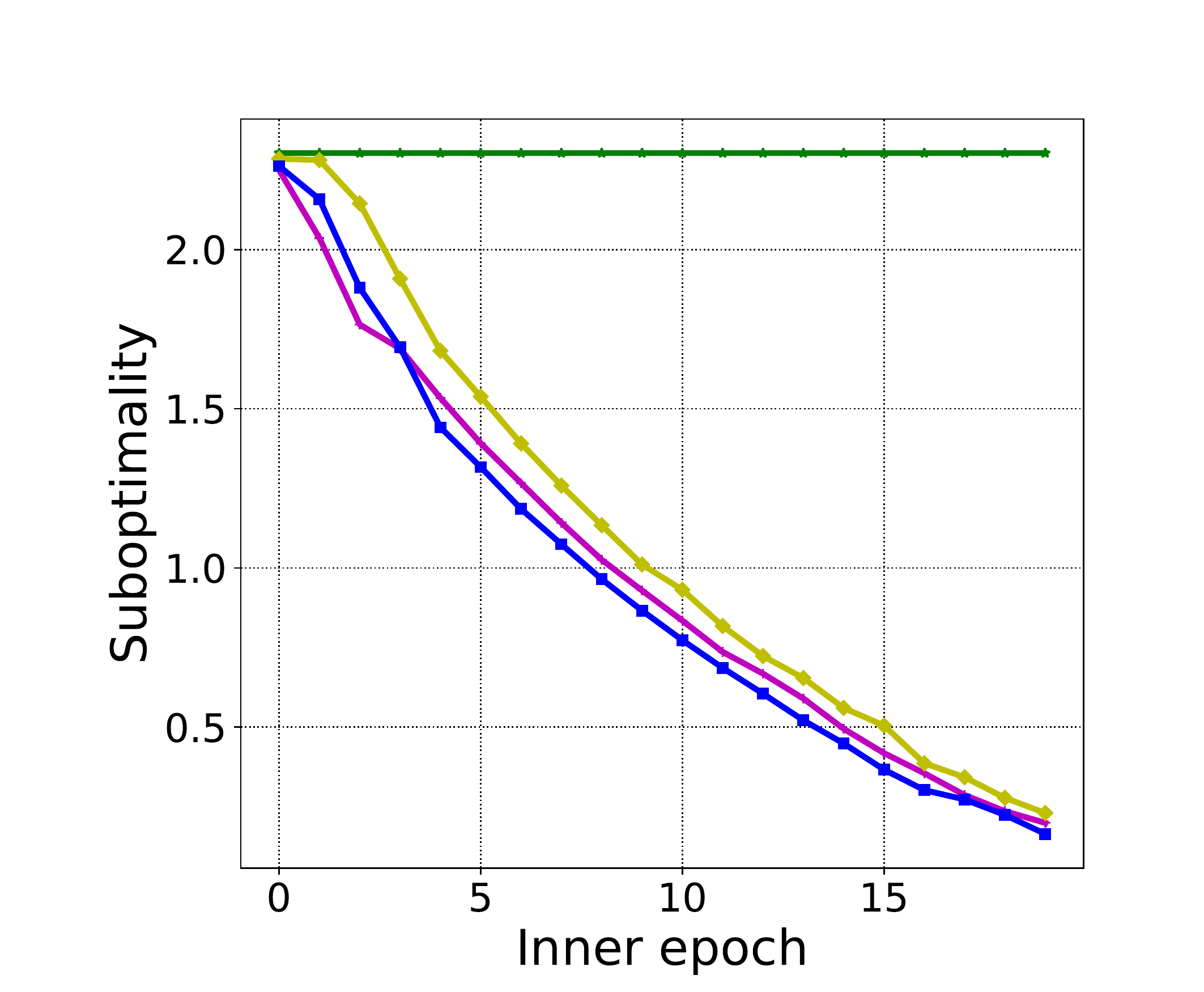}
		\caption{VGG-16}
	\end{subfigure}
	\begin{subfigure}[b]{0.32\textwidth}
		\centering
		\includegraphics[width=2in]{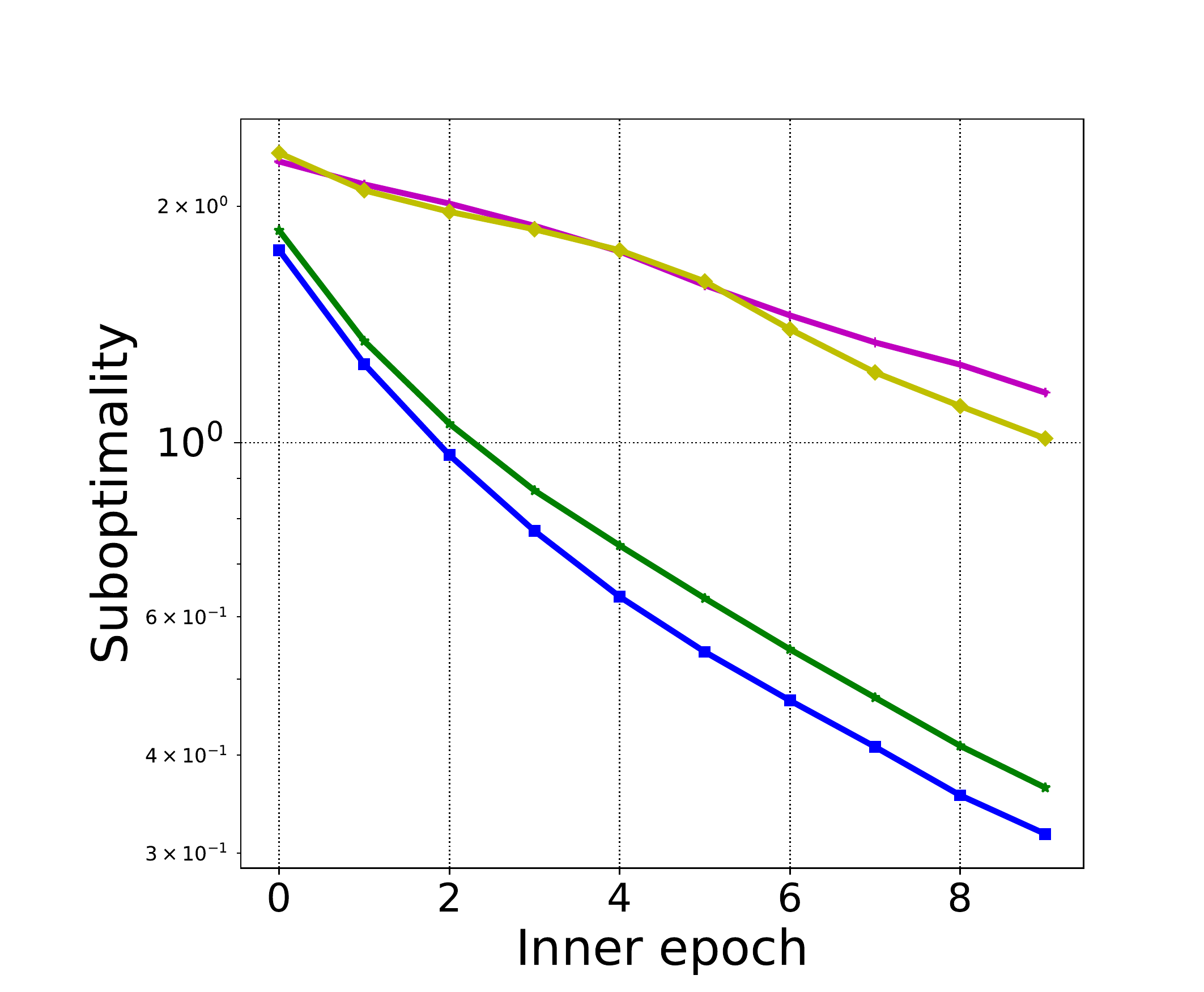}
		\caption{ResNet-152}
	\end{subfigure}
	\begin{subfigure}[b]{0.32\textwidth}
		\centering
		\includegraphics[width=2in]{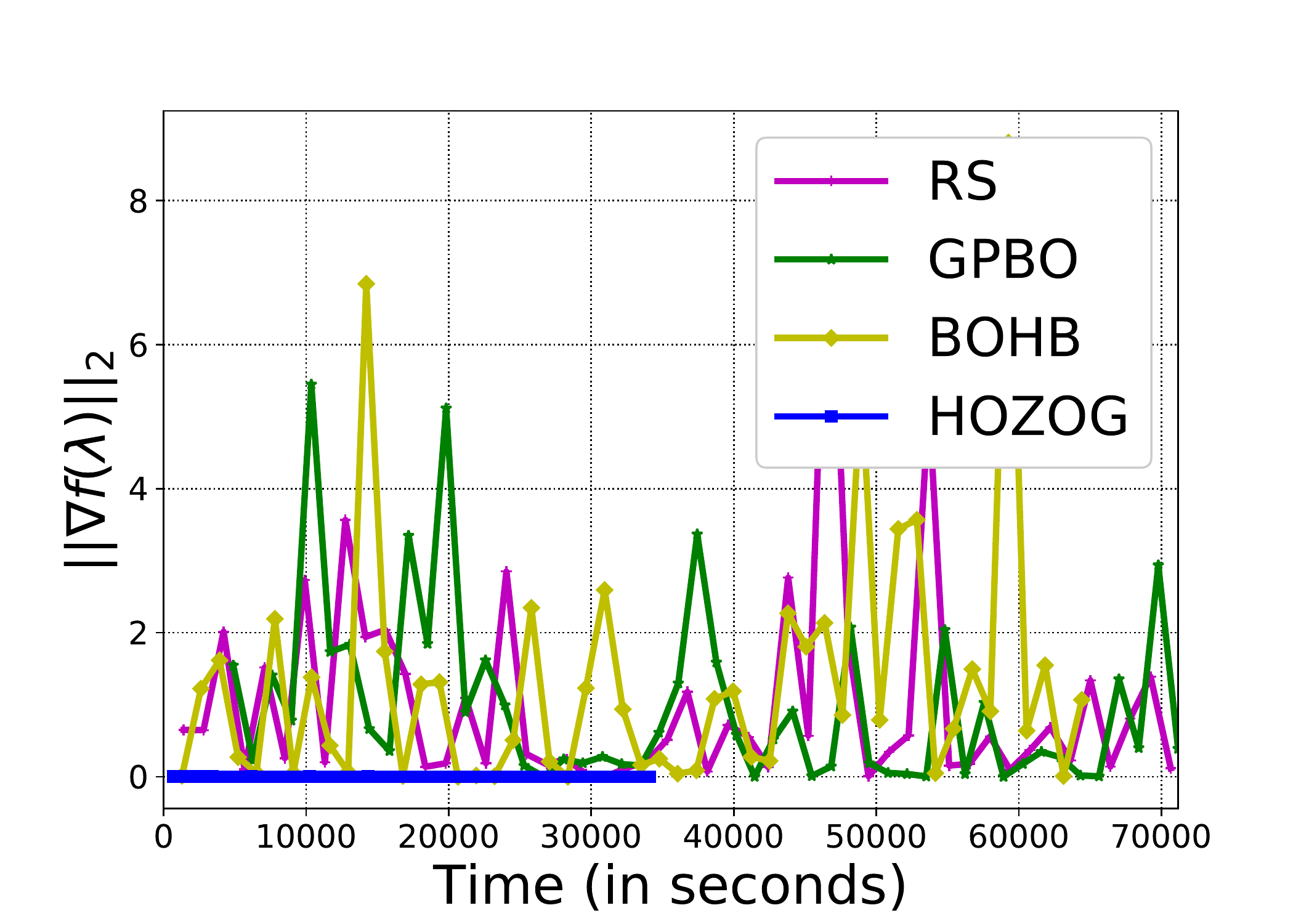}
		\caption{2-layer CNN}
	\end{subfigure}
		\begin{subfigure}[b]{0.32\textwidth}
		\centering
		\includegraphics[width=2in]{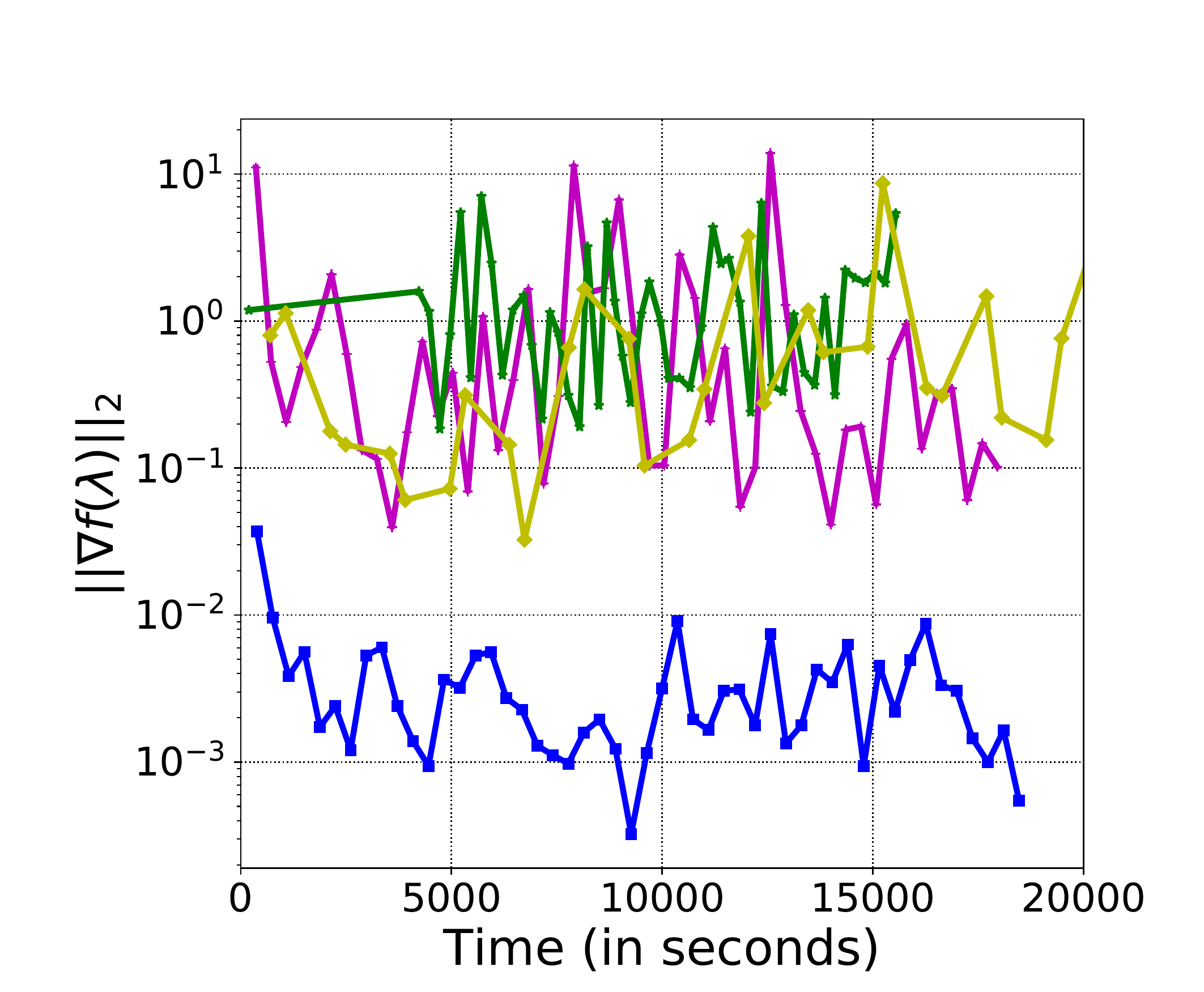}
		\caption{VGG-16}
	\end{subfigure}
		\begin{subfigure}[b]{0.32\textwidth}
		\centering
		\includegraphics[width=2in]{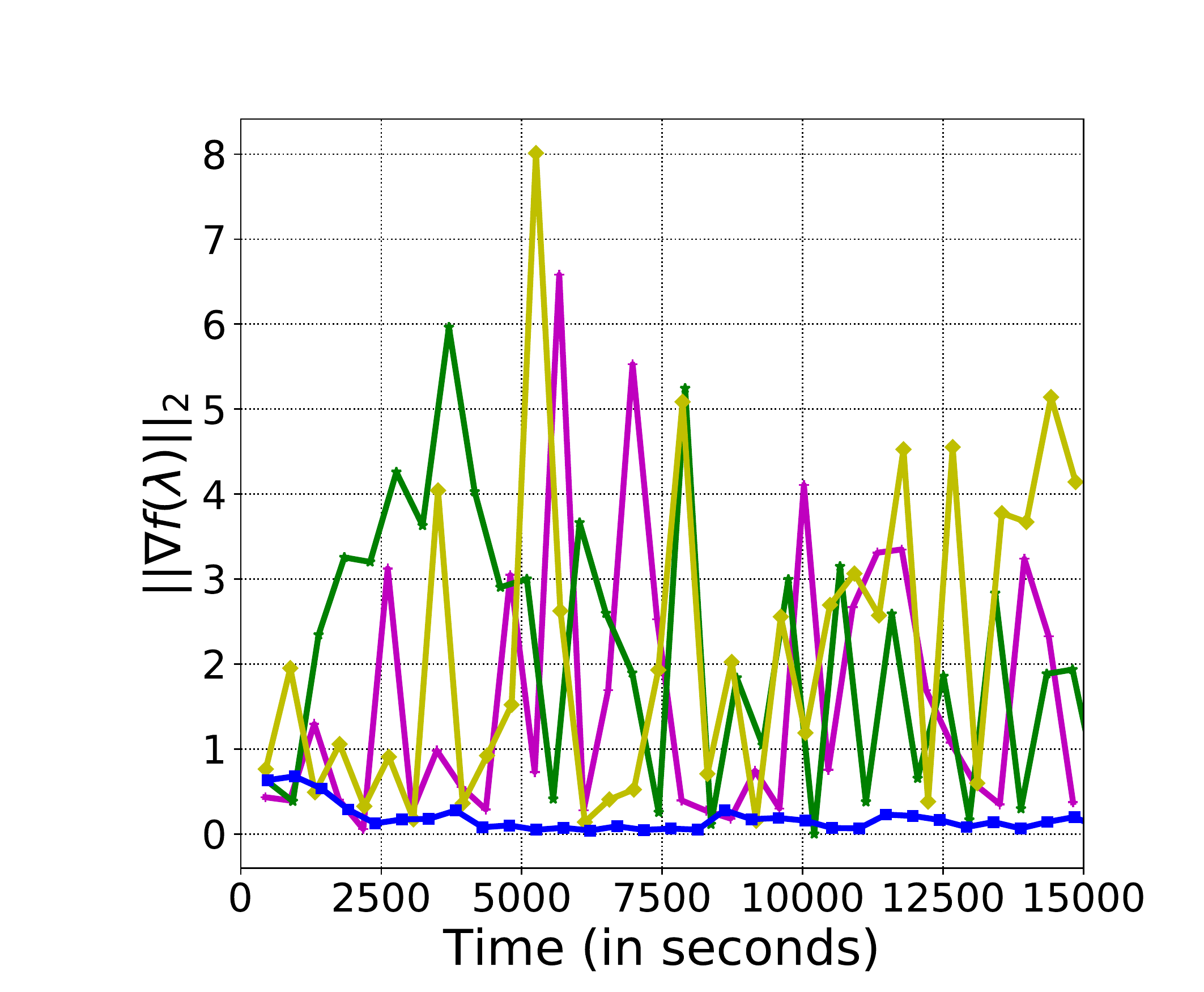}
		\caption{ResNet-152}
	\end{subfigure}
	\caption{Comparison of different hyperparameter optimization algorithms for 2-layer CNN, VGG-16 and ResNet-152 sharing the same legend. 	 (a)-(c): Test error. (d)-(f): Suboptimality.
		(g)-(i): $\| \nabla f(\lambda)\|_2$.
(Larger figures can be found in the supplement material.)	}
 \vspace*{-16pt}
	\label{figureCNN}
\end{figure*}

\subsection{Data Hyper-Cleaning}
\noindent \textbf{Experimental setup:} We  evaluate HOZOG on tuning the hyperparameters of data hyper-cleaning task.
Compared with the preceding problems, the data cleaning task is more challenging, since it has  more hyperparameters (hundreds or even  thousands). 

Assuming that we have a label noise dataset, with only limited  clean data provided. The data hyper-cleaning task is to allocate a hyperparameter weight $\lambda_i$ to a certain data point or a group of data points to counteract the influence of noisy samples. We split a certain data set into three subsets: ${\cal D}_{tr}$ of $N_{tr}$ training samples, ${\cal D}_{val}$ of $N_{val}$ validation samples and a test set ${\cal D}_{t}$ containing the $N_{t}$ samples.
We set random labels to $\lceil0.5*N_{tr}\rceil$ training examples, and select a random subset ${\cal D}_{f}$ from $ {\cal D}_{tr}$.

Similar to \cite{franceschi2017forward}, we considered a plain softmax regression model with parameters $W$ (weights) and $b$ (bias). The error of a model $(W, b)$ on an example $(x,y)$ was evaluated by using the cross-entropy $l(W, b, (x,y))$ both in the training objective function, $L$, and in the validation one, $E$. We added in $L$ an hyperparameter vector $\lambda\in\mathbb{R}^{N_{h}}$ that weights each group of examples in the training phase through sigmoid function, \emph{i.e.} $L(W,b)=\frac{1}{N_{tr}}\sum_{g\in \cal G}\sum_{i\in g}\textrm{sigmoid}(\lambda_g) l(W,b,(x_i,y_i))$, where $\cal G$ contain $N_h$ groups random select from ${\cal D}_{tr}$. Thus, we have the hyperparameter optimization problem as follows.
\begin{eqnarray}\label{data hyper}
\mathop{\arg\min}_{\lambda\in\mathbb{R}^{N_{h}}}  E(W(\lambda),b(\lambda)),
\quad	s.t. \ \ [W(\lambda),b(\lambda)] \approx \arg \min_{W,b} L(W,b)
\end{eqnarray}
\\
We instance two subset dataset for the MNIST dataset, with $N_{tr} = 5000$, $N_{val}=5000$, $N_{t}=10000$, $N_h=1250$ and $N_{tr} = 1000$, $N_{val}=1000$, $N_{t}=4000$, $N_h=500$.
We use a standard gradient descent method for the inner problem with fixed learning rate 0.05 and 4000 iteration.
 RS is used as baseline method, and BOHB and REV are used as comparison.

\begin{figure*}[t]
	\centering
	\begin{subfigure}[b]{0.32\textwidth}
		\centering
		\includegraphics[width=2in]{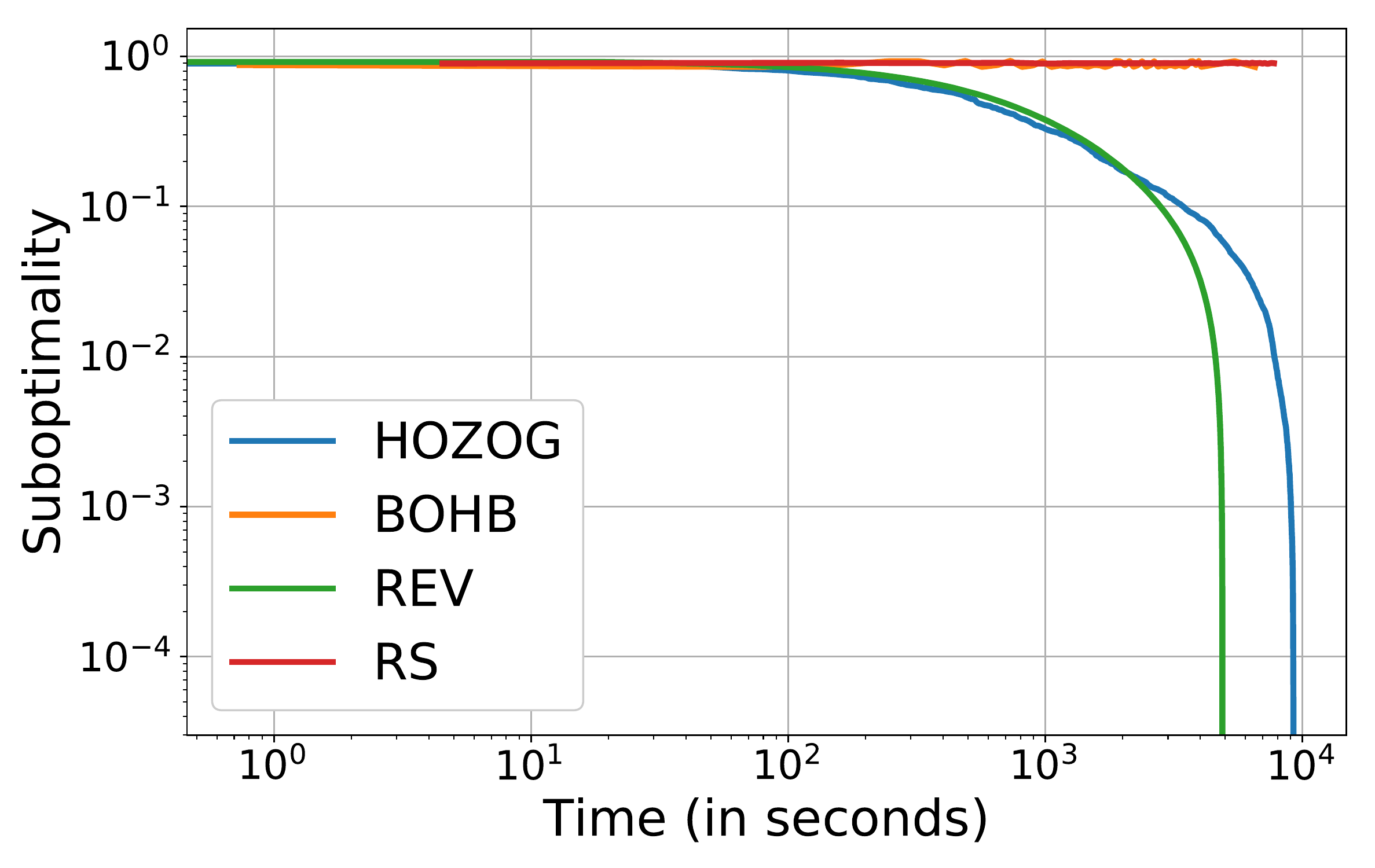}
		\caption{500 HP}
	\end{subfigure}
	\begin{subfigure}[b]{0.32\textwidth}
		\centering
		\includegraphics[width=2in]{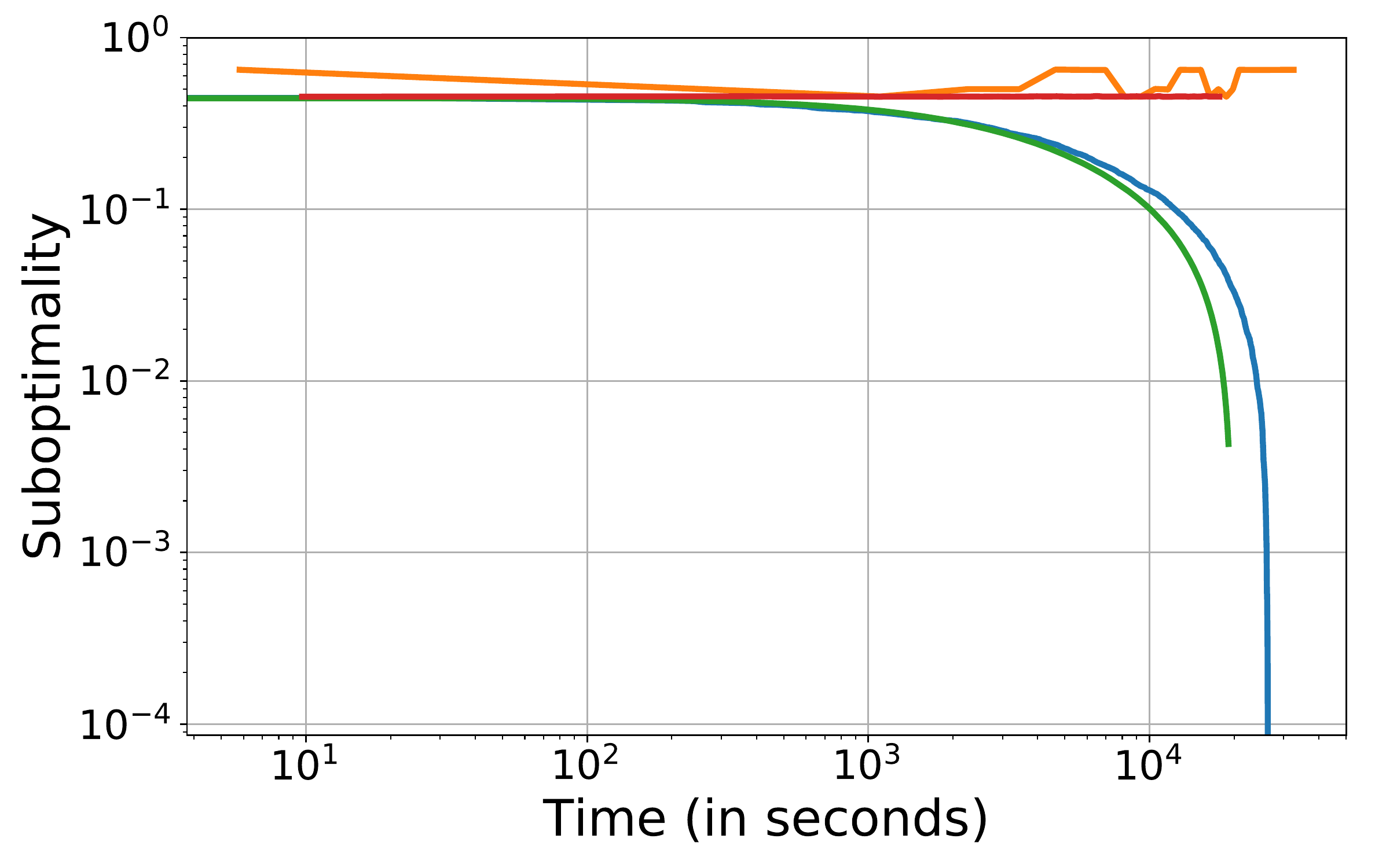}
		\caption{1250 HP}
	\end{subfigure}
	\begin{subfigure}[b]{0.32\textwidth}
		\centering
		\includegraphics[width=2in]{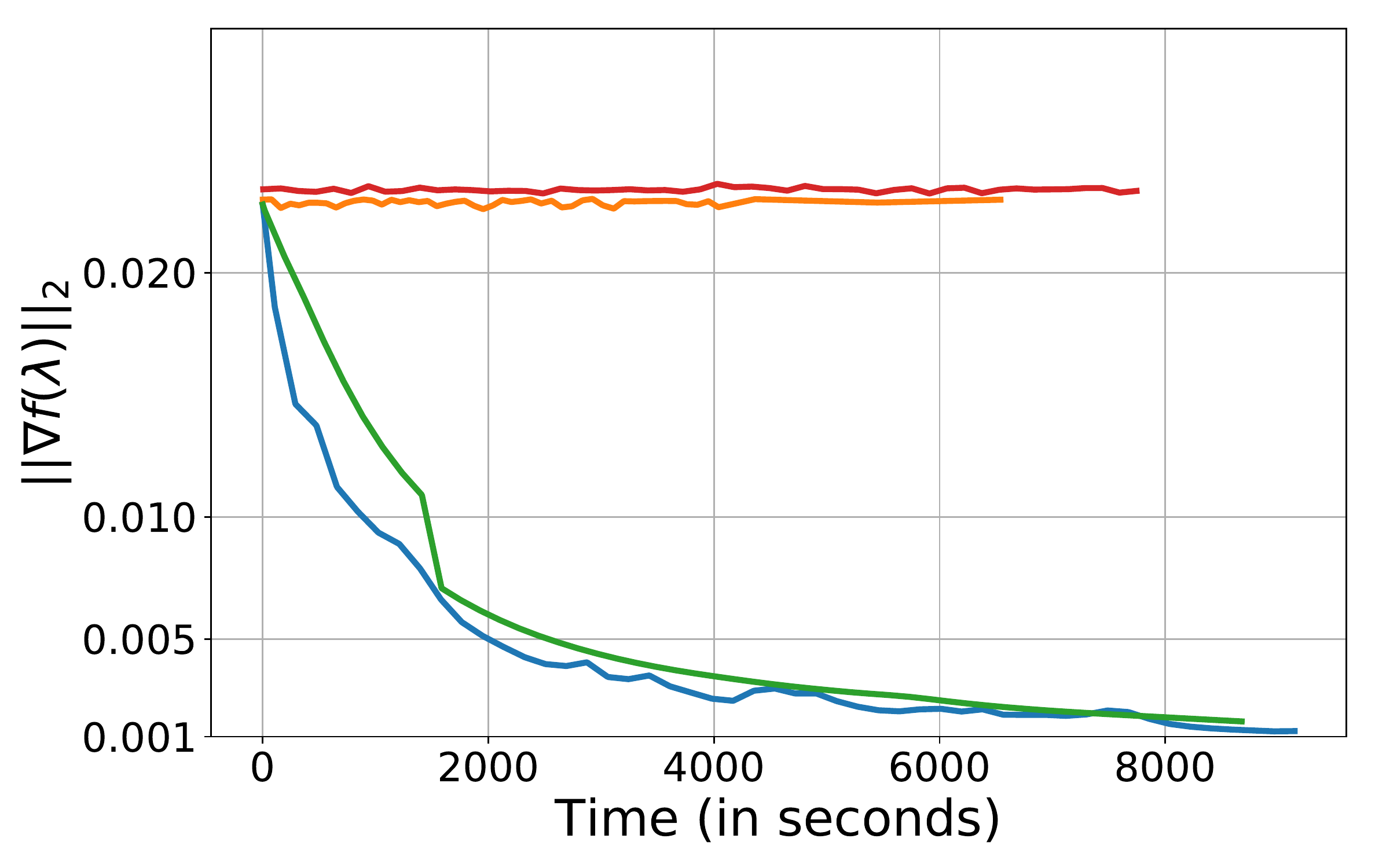}
		\caption{500 HP}
	\end{subfigure}
	\begin{subfigure}[b]{0.32\textwidth}
		\centering
		\includegraphics[width=2in]{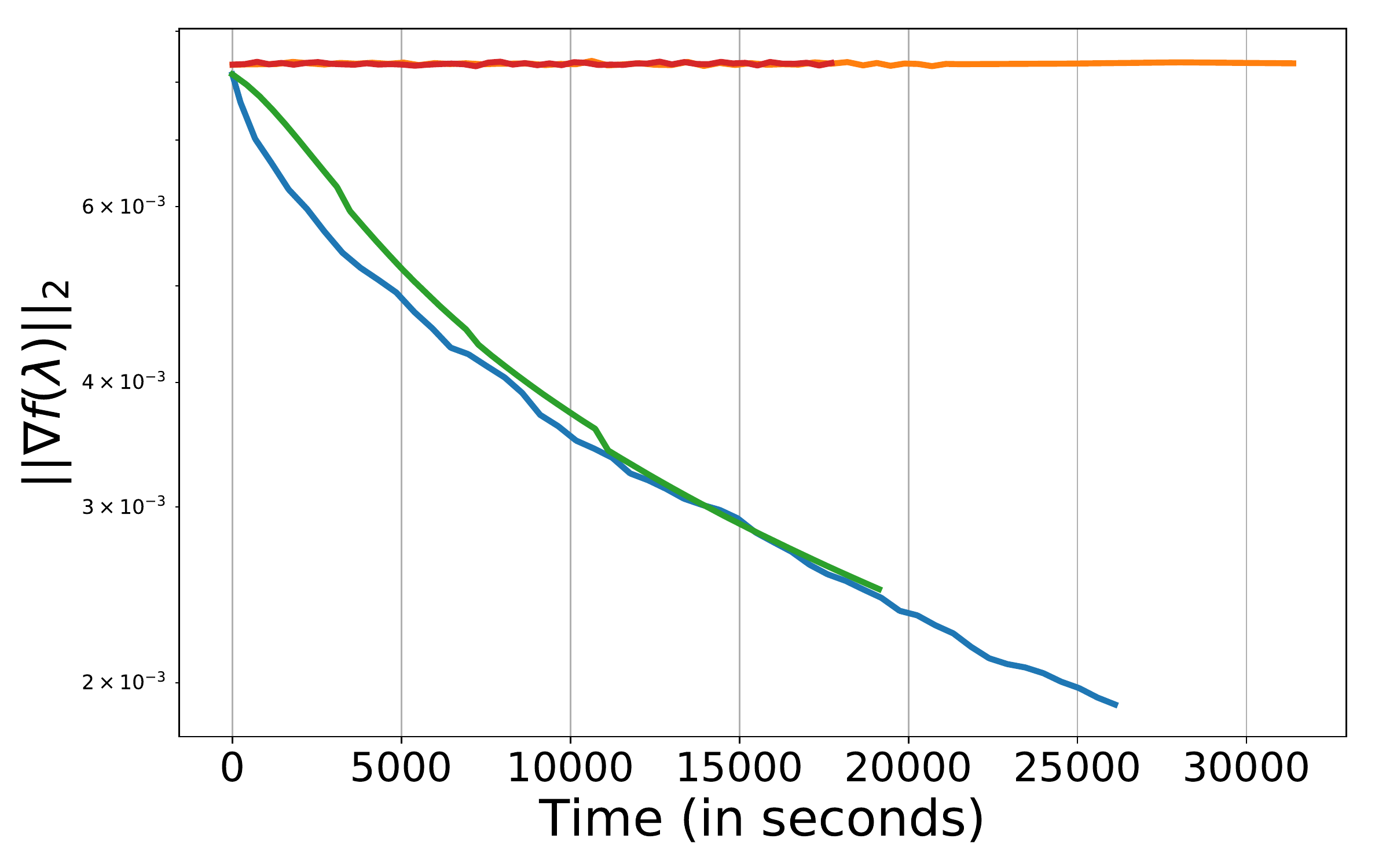}
		\caption{1250 HP}
	\end{subfigure}
	\begin{subfigure}[b]{0.32\textwidth}
		\centering
		\includegraphics[width=2in]{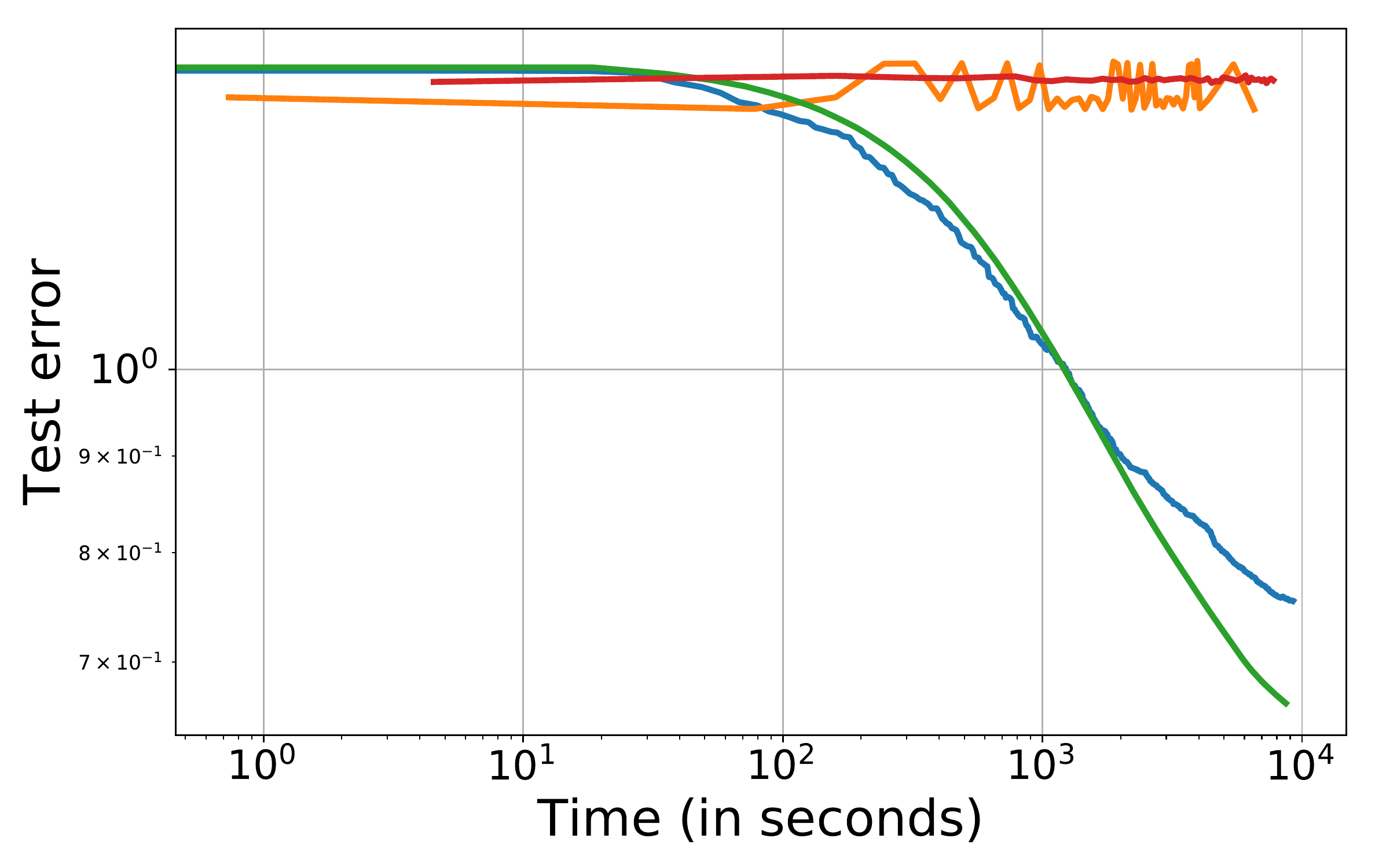}
		\caption{500 HP}
	\end{subfigure}
	\begin{subfigure}[b]{0.32\textwidth}
		\centering
		\includegraphics[width=2in]{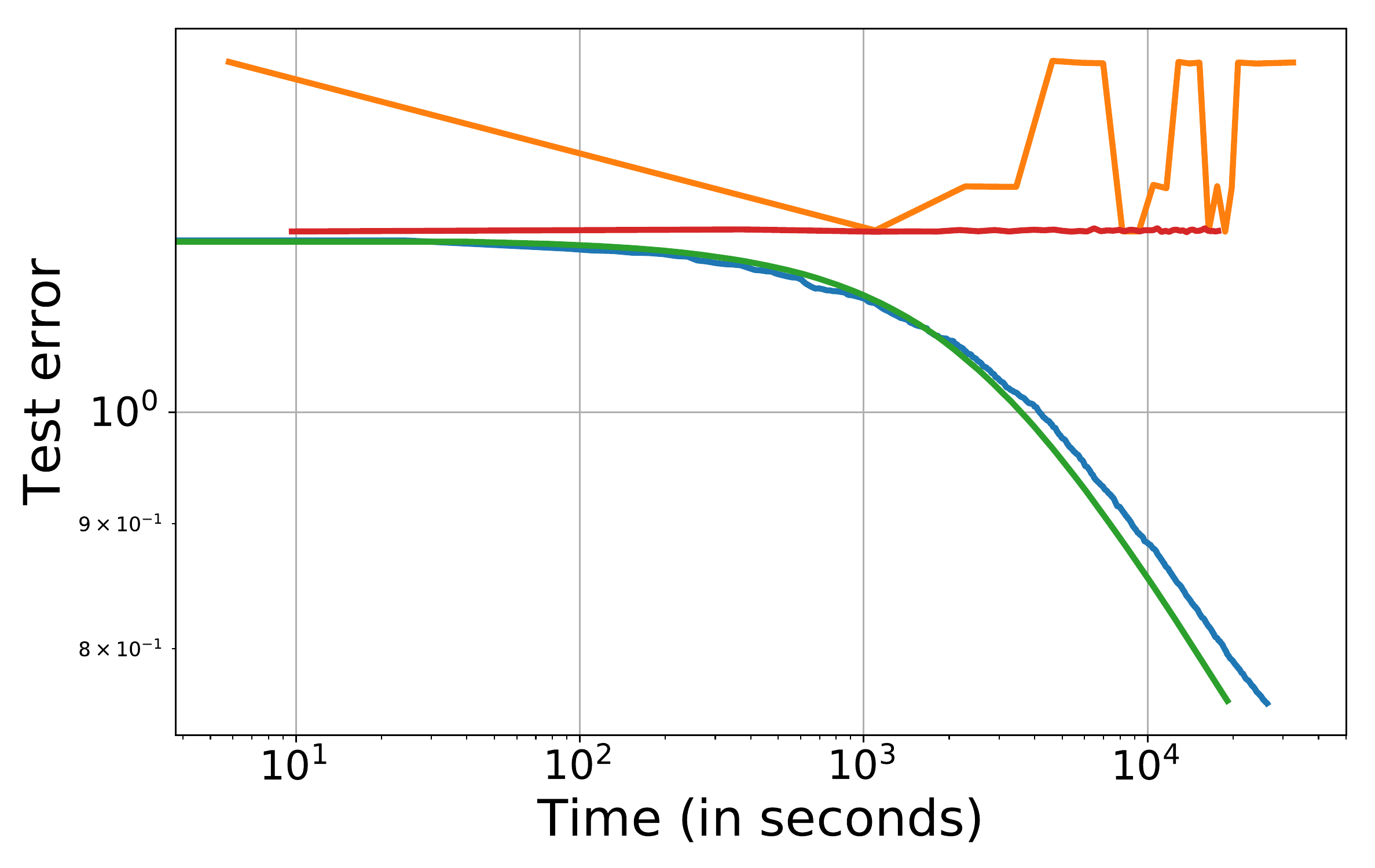}
		\caption{1250 HP}
	\end{subfigure}
	\caption{Comparison of different hyperparameter optimization algorithms for data hyper-cleaning sharing the same legend, where ``HP'' is  the abbreviation of  hyperparameters
		(a)-(b): Suboptimality. (c)-(d): $\| \nabla f(\lambda)\|_2$.
		(e)-(f):  Test error. (Larger figures can be found in the supplement material.)
	}
	\label{figureCleaner}
\end{figure*}

\noindent \textbf{Results and discussions:} Figure \ref{figureCleaner} presents the results of HOZOG, BOHB, REV and RS for data hyper-cleaning. Note that the  methods of GPBO, FOR and HOAG are missing here, because the hyperparameter size is beyond the capability of their implementations. The results show that HOZOG can beat RS and BOHB easily, while not perform completely as good as REV in the long run. This is because REV is an exact gradient method whose convergence rate is faster than the one of zeroth-order gradient method (\textit{i.e.}, HOZOG) by a constant whose value is depending  on $p$ \citep{nesterov2017random}. However, computing the exact gradients in REV is costly. Specifically,  REV takes about 40 seconds to finish the computation of one hyper-gradient under the  setting of 1250 hyperparameters, which is only about 24 seconds for HOZOG. This is the reason why our method converges faster than REV in the early stage of training.  Importantly, the application scenarios of REV are limited to smooth functions, \emph{e.g.},  not suitable for the experimental settings of convolutional neural networks and deeper neural networks. However, our HOZOG can be utilized to  a broader class of functions (\textit{i.e.}, continuous functions).


\subsection{Discussion: Importance of HOZOG}
The experimental results show that the black-box optimization methods  have a weak performance for the high-dimensional  hyperparameter optimization problems which is also verified in a large number of existing references \citep{brochu2010tutorial,snoek2012practical}, while they have the advantages of \textit{simplicity} and \textit{flexibility}.  On the other hand, the existing gradient-based methods \citep{franceschi2017forward,franceschi2018bilevel} need experienced researchers to provide a customized program against  the optimization algorithm and sometime it would fail, while they have the advantages of \textit{scalability} and \textit{efficiency}.
HOZOG inherits all the benefits  from  both approaches in that, the gradients are computed in a black-box manner, while the hyperparameter search is accomplished via gradient descent.  Especially, for high-dimensional  hyperparameter optimization problems which have no customized RFHO algorithm,  HOZOG currently  is  the only choice for this kind of problems to the best of our knowledge.

\section{Conclusion}
\textit{Effectiveness, efficiency, scalability, simplicity and flexibility} (\textit{i.e.}, E2S2F) are  important evaluation criteria for hyperparameter optimization methods. In this paper, we  proposed a new  hyperparameter optimization paradigm with zeroth-order hyper-gradients (HOZOG) which  is the first  method having all these benefits to the best of our knowledge.  We proved the feasibility
of using HOZOG to achieve hyperparameter optimization under the condition of  Lipschitz continuity. The experimental  results on three representative  hyperparameter (the size is from 1 to \num[group-separator={,}]{1250}) optimization tasks not only verify the result in the  feasibility analysis, but also  demonstrate the benefits of HOZOG  in terms of E2S2F, compared with the  state-of-the-art hyperparameter optimization methods.


\section*{Acknowledgments}
We thank the anonymous reviewers and Dr. Frank Nussbaum for their careful reading of our manuscript and their many insightful comments and suggestions.



%
%
%

\bibliography{sample}

\section*{Appendix A: Proof of Theorem  \ref{theorem1}}  \label{firstthm0.5}
Before proving Theorem \ref{theorem1}, we first give Lemma \ref{lem1}.
\begin{lemma}\label{lem1}
Let $f$  and $g$ be a continuous function of $ \mathbb{R}^d \times \mathbb{R}^p \rightarrow \mathbb{R}$, and let $w_0 \in \mathbb{R}^d$ and $\lambda_0 \in \mathbb{R}^p$. Assume that $f$ and $g$ are continuous at the points $w_0$ and $\lambda_0$,  and let a be a real number.  If $h=f( g(w, \lambda),\lambda)$, then $h$ is continuous at $w_0$ and $\lambda_0$.
\end{lemma}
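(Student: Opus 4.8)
The plan is to prove Lemma \ref{lem1} directly from the $\varepsilon$--$\delta$ characterization of continuity used in Definition \ref{continuous}, by viewing $h$ as the composition of the map $(w,\lambda)\mapsto (g(w,\lambda),\lambda)$ followed by $f$. The key observation is that this intermediate map is continuous at $(w_0,\lambda_0)$ whenever $g$ is, because each of its two coordinate blocks --- the value $g(w,\lambda)$ and the identity component $\lambda$ --- is continuous there; continuity of $h$ then follows because continuity is preserved under composition. (The phrase ``let $a$ be a real number'' in the statement plays no role and will be ignored; and where the stated signatures of $f$ and $g$ do not quite typecheck one simply reads $g$ as taking values in the domain slot of $f$, scalar- or vector-valued as the intended application demands.)

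Concretely, I would fix $\varepsilon>0$ and argue in two layers. Since $f$ is continuous at $(g(w_0,\lambda_0),\lambda_0)$, there is $\eta>0$ such that $\|(y,\mu)-(g(w_0,\lambda_0),\lambda_0)\|<\eta$ forces $|f(y,\mu)-f(g(w_0,\lambda_0),\lambda_0)|<\varepsilon$. Since $g$ is continuous at $(w_0,\lambda_0)$, there is $\delta_1>0$ such that $\|(w,\lambda)-(w_0,\lambda_0)\|<\delta_1$ forces $\|g(w,\lambda)-g(w_0,\lambda_0)\|<\eta/\sqrt{2}$. Setting $\delta:=\min\{\delta_1,\eta/\sqrt{2}\}$, any $(w,\lambda)$ with $\|(w,\lambda)-(w_0,\lambda_0)\|<\delta$ satisfies both $\|g(w,\lambda)-g(w_0,\lambda_0)\|<\eta/\sqrt{2}$ and $\|\lambda-\lambda_0\|\le\|(w,\lambda)-(w_0,\lambda_0)\|<\eta/\sqrt{2}$, hence $\|(g(w,\lambda),\lambda)-(g(w_0,\lambda_0),\lambda_0)\|<\eta$, and therefore $|h(w,\lambda)-h(w_0,\lambda_0)|<\varepsilon$. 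This is precisely continuity of $h$ at $(w_0,\lambda_0)$. An equivalent, slightly slicker route is sequential: for any $(w_n,\lambda_n)\to(w_0,\lambda_0)$, continuity of $g$ gives $g(w_n,\lambda_n)\to g(w_0,\lambda_0)$, so $(g(w_n,\lambda_n),\lambda_n)\to(g(w_0,\lambda_0),\lambda_0)$, and continuity of $f$ then gives $h(w_n,\lambda_n)\to h(w_0,\lambda_0)$; I would present the $\varepsilon$--$\delta$ version so that it lines up verbatim with Definition \ref{continuous}.

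There is no genuine obstacle in this lemma --- it is the classical fact that compositions of continuous functions are continuous. The only point requiring a line of care is the product-space bookkeeping: one must combine ``$g(w,\lambda)$ is close to $g(w_0,\lambda_0)$'' with ``$\lambda$ is close to $\lambda_0$'' into ``the pair $(g(w,\lambda),\lambda)$ is close to $(g(w_0,\lambda_0),\lambda_0)$'', which the $\eta/\sqrt{2}$ split (or any convenient split of the tolerance) handles cleanly. This lemma is then exactly the tool needed to run the induction on $t=1,\dots,T$ in Theorem \ref{theorem1}, peeling off one layer $\Phi_t$ at a time.
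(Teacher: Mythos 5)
Your proof is correct and follows essentially the same route as the paper: both arguments reduce the lemma to the standard fact that a composition of continuous maps is continuous, the paper writing it as a terse limit-interchange ($\lim h = f(\lim g, \lambda_0)$, exactly the sequential version you mention as an alternative) while you spell out the equivalent $\varepsilon$--$\delta$ bookkeeping for the intermediate map $(w,\lambda)\mapsto(g(w,\lambda),\lambda)$. Your version is, if anything, more careful than the paper's, including the sensible handling of the type mismatch in the stated signatures of $f$ and $g$.
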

\begin{proof}
Given $\delta' \in \mathbb{R}^d$ and $\delta \in \mathbb{R}^p$, according to the definition of continuous function in Definition \ref{continuous}, we have that
\begin{eqnarray}
&& \lim_{\delta' \rightarrow \textbf{0}, \delta \rightarrow \textbf{0}}h(w_0+\delta', \lambda_0+\delta)
\\ \nonumber &=& \lim_{\delta' \rightarrow \textbf{0}, \delta \rightarrow \textbf{0}} f( g \left (w_0+\delta', \lambda_0+\delta \right ), \lambda_0+\delta)
\\ \nonumber &=&  f\left ( \lim_{\delta' \rightarrow \textbf{0}, \delta \rightarrow \textbf{0}} g \left (w_0+\delta', \lambda_0+\delta \right ), \lambda_0 \right )
\\ \nonumber &=&  f( g(w_0, \lambda_0),\lambda_0)
\end{eqnarray}
where  the second equality uses the definition of continuous function in
Definition \ref{continuous}.
This
completes the proof.
\end{proof}

\begin{customthm}{1}\label{theorem3}
If the hyperparameters $\lambda$ are continuous and the mapping functions $\Phi_t (w,\lambda)$ (for every $t\in \{1,\ldots,T \}$) are continuous, the mapping function $\mathcal{A}(\lambda)$  is continuous, and the outer objective $E$ is continuous, we have that the  $\mathcal{A}$-based constrained optimization problem  $f(\lambda)$ is continuous \emph{w.r.t.} $\lambda$.
\end{customthm}

\begin{proof}
As defined in Definition \ref{Iterative_alg},  the mapping function is actually the function
\begin{eqnarray}
\mathcal{A}(\lambda) = w_T=\Phi_T(\Phi_{T-1}(\ldots(\Phi_1(w_0,\lambda), \lambda),\ldots,\lambda)
\end{eqnarray}
Because each mapping function $\Phi_t (w,\lambda)$ is  continuous \emph{w.r.t.} $w$ and $\lambda$, we can recursively use Lemma \ref{lem1} to have that the mapping function $\mathcal{A}$ is continuous \emph{w.r.t.} $\lambda$.

Because $f(\lambda) = E(w_T,\lambda)$ and the function $E(w,\lambda)$ is continuous \emph{w.r.t.} $w$ and $\lambda$, we have that the  function $f(\lambda)$ is continuous \emph{w.r.t.} $\lambda$ according to Lemma \ref{lem1}. This
completes the proof.
\end{proof}

\section*{Appendix B: Proof of Theorem  \ref{theorem2}}
Before proving Theorem  \ref{theorem2}, we first give Lemma \ref{lemma_lip} which is provided in \citep{federer2014geometric}.
\begin{lemma}[\cite{federer2014geometric}] \label{lemma_lip}
  If $f(\lambda) : \mathbb{R}^p \rightarrow \mathbb{R}$ is a  Lipschitz continuous
function. Then, its Lipschitz constant $L(f)$ is
\begin{eqnarray}
L(f) = \sup_{\lambda \in \mathbb{R}^p} \| \partial_{\lambda} f(f(\lambda)) \|_2
\end{eqnarray}
\end{lemma}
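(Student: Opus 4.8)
The plan is to establish the stated identity as a two-sided inequality, reading the right-hand side $\|\partial_\lambda f(f(\lambda))\|_2$ as the gradient norm $\|\nabla f(\lambda)\|_2$, and proving separately that $\sup_\lambda \|\nabla f(\lambda)\|_2 \le L(f)$ and $L(f) \le \sup_\lambda \|\nabla f(\lambda)\|_2$, where the supremum is taken over the (full-measure) set of points at which $f$ is differentiable. The first preliminary step is to record that, since $f$ is Lipschitz continuous on $\mathbb{R}^p$, Rademacher's theorem guarantees that $f$ is differentiable almost everywhere, so $\nabla f(\lambda)$ exists for a.e.\ $\lambda$ and the right-hand side is a well-defined (essential) supremum.

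For the bound $\sup_\lambda \|\nabla f(\lambda)\|_2 \le L(f)$, I would fix any point $\lambda$ of differentiability and any unit vector $v \in \mathbb{R}^p$. By Definition \ref{smooth}, for every $t > 0$ we have $|f(\lambda + t v) - f(\lambda)| \le L(f)\,\|t v\| = L(f)\, t$; dividing by $t$ and letting $t \to 0$ shows the directional derivative obeys $|\langle \nabla f(\lambda), v\rangle| \le L(f)$. Taking the supremum over unit vectors $v$ and using $\|\nabla f(\lambda)\|_2 = \sup_{\|v\|=1} \langle \nabla f(\lambda), v\rangle$ yields $\|\nabla f(\lambda)\|_2 \le L(f)$, and then the supremum over $\lambda$ gives the claim.

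For the reverse bound $L(f) \le \sup_\lambda \|\nabla f(\lambda)\|_2$, write $M := \sup_\lambda \|\nabla f(\lambda)\|_2$ and fix arbitrary $\lambda_1, \lambda_2 \in \mathbb{R}^p$. Consider the one-dimensional restriction $\phi(t) = f(\lambda_2 + t(\lambda_1 - \lambda_2))$ for $t \in [0,1]$. Since $f$ is Lipschitz, $\phi$ is Lipschitz on $[0,1]$, hence absolutely continuous, so the fundamental theorem of calculus gives $\phi(1) - \phi(0) = \int_0^1 \phi'(t)\, dt$ with $\phi'(t) = \langle \nabla f(\lambda_2 + t(\lambda_1-\lambda_2)),\, \lambda_1 - \lambda_2\rangle$ for a.e.\ $t$. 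Applying Cauchy--Schwarz inside the integral then gives $|f(\lambda_1) - f(\lambda_2)| \le \int_0^1 \|\nabla f(\lambda_2 + t(\lambda_1-\lambda_2))\|_2\,\|\lambda_1 - \lambda_2\|\, dt \le M\,\|\lambda_1 - \lambda_2\|$. Because $L(f)$ is by definition the smallest valid Lipschitz constant, this yields $L(f) \le M$, and combining with the first inequality completes the proof.

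The main obstacle is precisely the a.e.\ differentiability: the reverse bound cannot invoke a naive mean value theorem, since $f$ need not be differentiable everywhere. The clean route is the absolute-continuity-along-lines argument above, which only needs the one-dimensional Lipschitz function $\phi$ to be differentiable a.e.\ and to satisfy the FTC, together with the identification of $\phi'(t)$ with the directional derivative of $f$ along the segment. Making this identification rigorous (that the a.e.\ derivative of $\phi$ is indeed $\langle \nabla f, \lambda_1-\lambda_2\rangle$) is the only delicate measure-theoretic point, and it is exactly what the cited result of \cite{federer2014geometric} supplies; alternatively one may mollify $f$ to $f_\varepsilon = f * \rho_\varepsilon$, apply the smooth inequality to $f_\varepsilon$, and pass to the limit using $\sup_\lambda\|\nabla f_\varepsilon(\lambda)\|_2 \le M$.
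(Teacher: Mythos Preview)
Your proof is correct and is the standard argument for this characterization of the Lipschitz constant. However, there is nothing to compare against: the paper does not prove Lemma~\ref{lemma_lip} at all, but merely cites it from \cite{federer2014geometric} and then invokes it as a black box in the proof of Theorem~\ref{theorem2}. So your write-up is not an alternative to the paper's proof but rather a self-contained justification of a result the paper takes for granted.

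One minor remark: the statement as written in the paper has a typo, $\partial_\lambda f(f(\lambda))$, which you correctly read as $\nabla f(\lambda)$; your proof then matches the intended content exactly. Your handling of the a.e.\ differentiability issue via absolute continuity along lines (or, equivalently, mollification) is the right way to close the gap, and is essentially what one finds in Federer.
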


\begin{customthm}{2}
Given the continuous mapping functions $\Phi_t (w_{t-1},\lambda)$ where $t\in \{1,\ldots,T \}$), $A_t=\frac{\partial \Phi_t (w_{t-1},\lambda)}{ \partial w_{t-1}}$, $B_t=\frac{\partial \Phi_t (w_{t-1},\lambda)}{ \partial \lambda}$. Given the continuous objective function $E(w_T,\lambda)$,  $A_{T+1}=\frac{\partial E (w_{T},\lambda)}{ \partial w_T}$ and $B_{T+1}=\frac{\partial E (w_{T},\lambda)}{ \partial \lambda }$. Let $L_{A_{t}}=\sup_{\lambda \in \mathbb{R}^p, w\in \mathbb{R}^d} \left  \| A_{t+1} \right  \|_2$, $L_{B_{t}}=\sup_{\lambda \in \mathbb{R}^p, w\in \mathbb{R}^d} \left  \|  B_{t}\right  \|_2$. Let $L(f) $ denote the  Lipschitz constant of the continuous function $f(\lambda)$,  we  can  upper bound  $L(f)$ by $\sum_{t=1}^{T+1}   L_{B_{t}}  L_{A_{t+1}} \ldots L_{A_{T+1}} $.
\end{customthm}
\begin{proof}Firstly, according to the chain rule \citep{Rudin1976Principles}, we give the computation of $\partial_{\lambda} f(f(\lambda))
$ as follows.
\begin{eqnarray}
&& \partial_{\lambda} f(\lambda) =  \frac{\partial E (w_{T},\lambda)}{ \partial w_T} \frac{\partial  w_T}{ \partial \lambda}  + \frac{\partial E (w_{T},\lambda)}{ \partial \lambda }
\\ \nonumber &=& A_{T+1} \frac{\partial  w_T}{ \partial \lambda} + B_{T+1}
\\ \nonumber &=& A_{T+1} \left ( \frac{\partial \Phi_T (w_{T-1},\lambda)}{ \partial w_{T-1}} \frac{\partial  w_{T-1}}{ \partial \lambda}  + \frac{\partial \Phi_T (w_{T-1},\lambda)}{ \partial \lambda } \right ) + B_{T+1}
\\ \nonumber &=& A_{T+1} \left (A_{T} \frac{\partial  w_{T-1}}{ \partial \lambda}  + B_{T} \right ) + B_{T+1}
\\ \nonumber &=& A_{T+1} A_{T} \frac{\partial  w_{T-1}}{ \partial \lambda}  + A_{T+1} B_{T}  + B_{T+1}
\\ \nonumber &=& \sum_{t=1}^{T+1}B_{t}   A_{t+1}\ldots  A_{T+1}
\end{eqnarray}

Secondly, according to Lemma \ref{lemma_lip}, we have that
\begin{eqnarray}
&& L(f) = \sup_{\lambda \in \mathbb{R}^p} \| \partial_{\lambda} f(\lambda) \|_2
\\ \nonumber &=& \sup_{\lambda \in \mathbb{R}^p} \| \partial_{\lambda} f(\lambda) \|_2
\\ \nonumber &=& \sup_{\lambda \in \mathbb{R}^p} \left  \| \sum_{t=1}^{T+1}B_{t}   A_{t+1}\ldots  A_{T+1}  \right  \|_2
\\ \nonumber &\leq& \sum_{t=1}^{T+1} \sup_{\lambda \in \mathbb{R}^p} \left  \|  B_{t}   A_{t+1}\ldots  A_{T+1}  \right  \|_2
\\ \nonumber &\leq& \sum_{t=1}^{T+1} \sup_{\lambda \in \mathbb{R}^p, w\in \mathbb{R}^d} \left  \|  B_{t}\right  \|_2 \sup_{\lambda \in \mathbb{R}^p, w\in \mathbb{R}^d} \left  \| A_{t+1} \right  \|_2 \ldots  \sup_{\lambda \in \mathbb{R}^p, w\in \mathbb{R}^d} \left  \| A_{T+1}  \right  \|_2
\\ \nonumber &\leq& \sum_{t=1}^{T+1}   L_{B_{t}}  L_{A_{t+1}} \ldots L_{A_{T+1}}
\end{eqnarray}
This
completes the proof.
\end{proof}

\section*{Appendix C: Parameter Sensitivity of HOZOG}
In this part, we provide more experimental results of HOZOG on the  $l_2$-regularized logistic regression (on News20 dataset), the  data hyper-cleaning task (with 500 hyperparameters) and deep neural networks (including 2-layer CNN, VGG-16 and ResNet-152) under different  settings of  parameters $q$, $\mu$ and $\gamma$  to show  the parameter sensitivity  of HOZOG. The results are shown in Figures \ref{figureSensity5}, \ref{figureSensity1}, \ref{figureSensity2}, \ref{figureSensity3} and \ref{figureSensity4}  which demonstrate the convergence curves of HOZOG under the settings of $q$, $\gamma$ and $\mu$. From the results, we find  that HOZOG is robust to the  different settings of $q$, $\mu$ and $\gamma$.

\begin{figure*}[htbp]
	\centering
	\begin{subfigure}[b]{0.32\textwidth}
		\centering
		\includegraphics[width=2in]{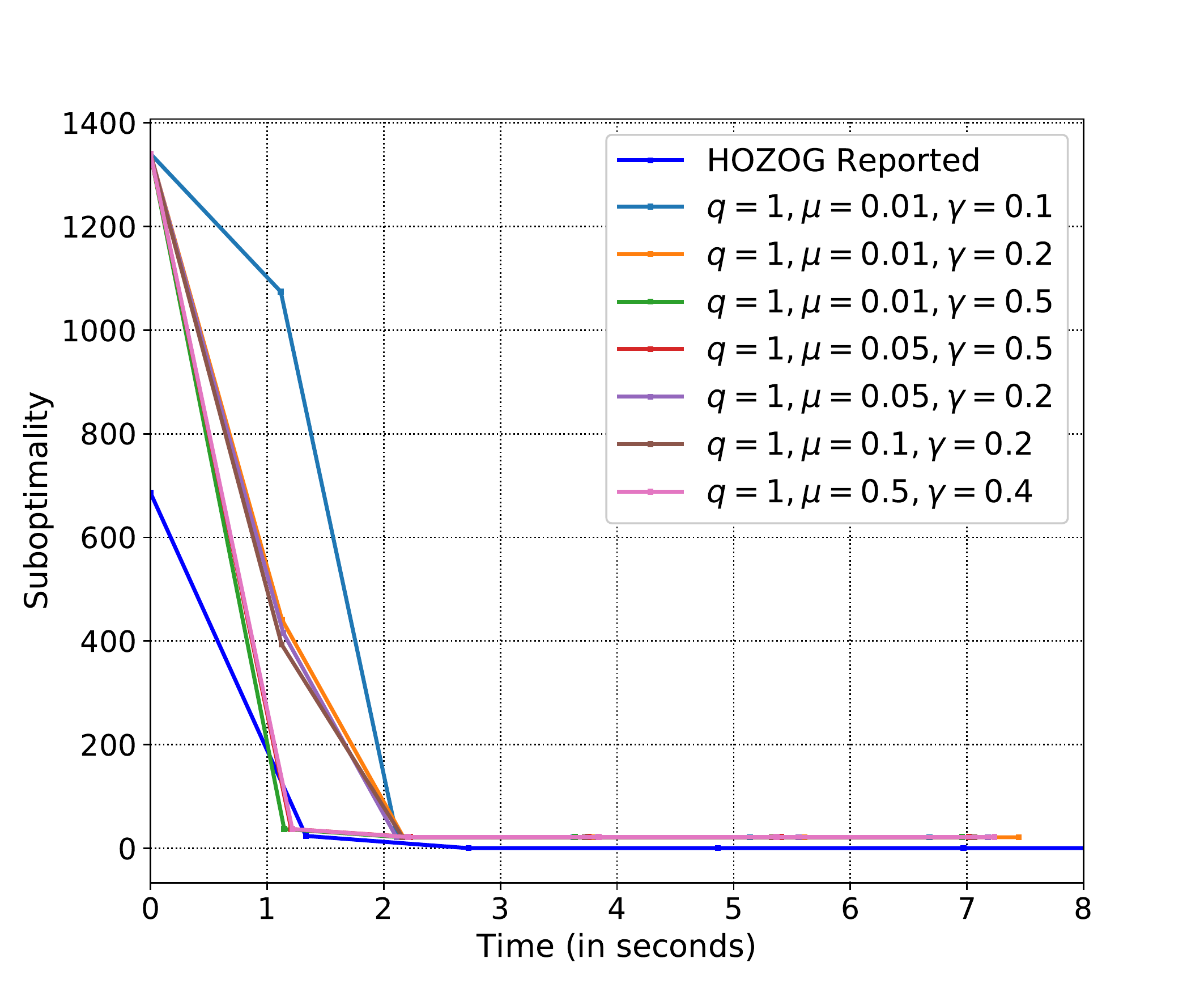}
		\caption{Suboptimality}
	\end{subfigure}
	\begin{subfigure}[b]{0.32\textwidth}
		\centering
		\includegraphics[width=2in]{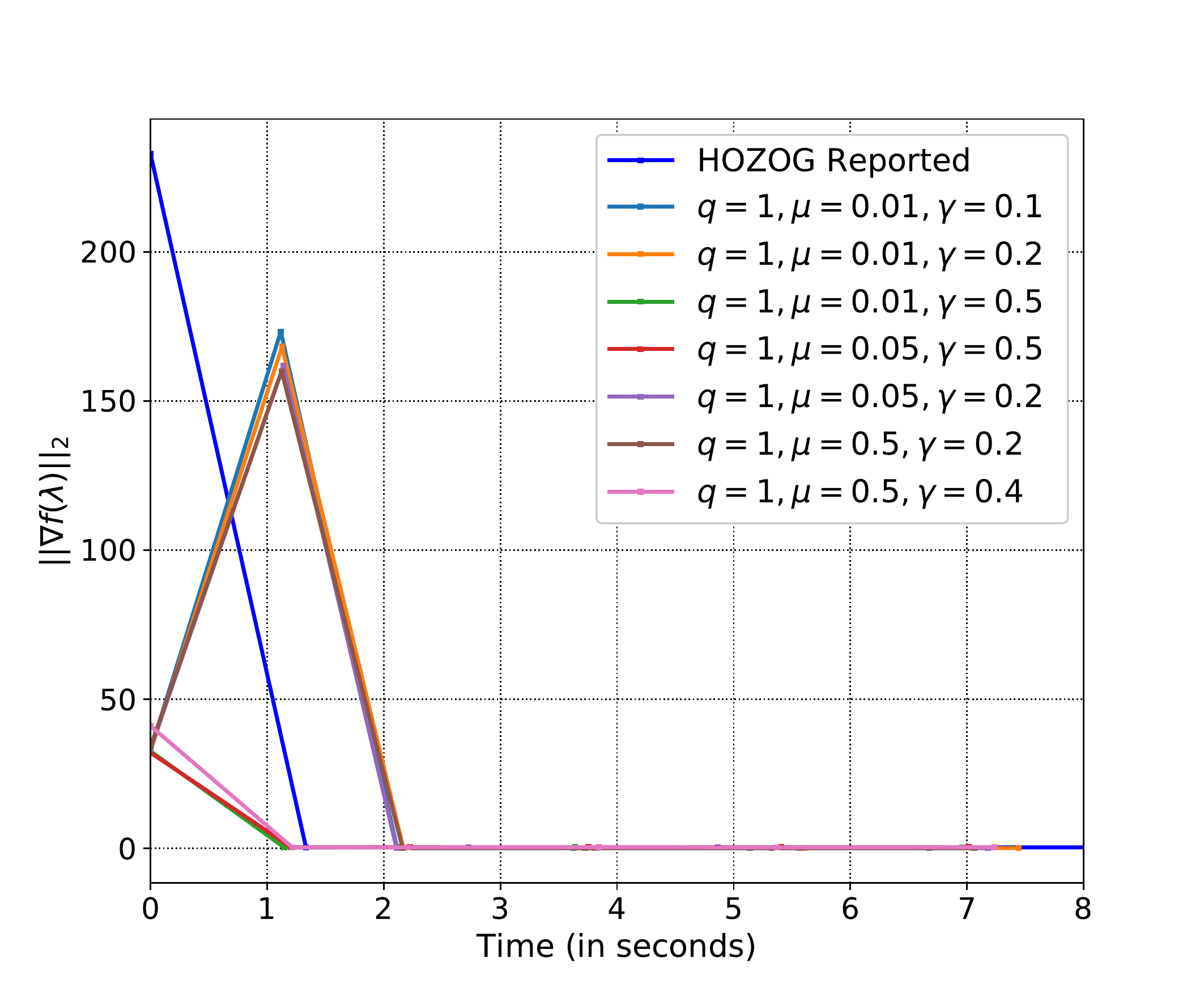}
		\caption{$\| \nabla f(\lambda)\|_2$}
	\end{subfigure}
	\begin{subfigure}[b]{0.32\textwidth}
		\centering
		\includegraphics[width=2in]{figparameter_comparisonLRnews20_loss-eps-converted-to.pdf}
		\caption{Test error}
	\end{subfigure}
	\caption{Comparison of different hyperparameter settings for $l_2$-regularized logistic regression.
	}
	\label{figureSensity5}
\end{figure*}

\begin{figure*}[htbp]
	\centering
	\begin{subfigure}[b]{0.32\textwidth}
		\centering
		\includegraphics[width=2in]{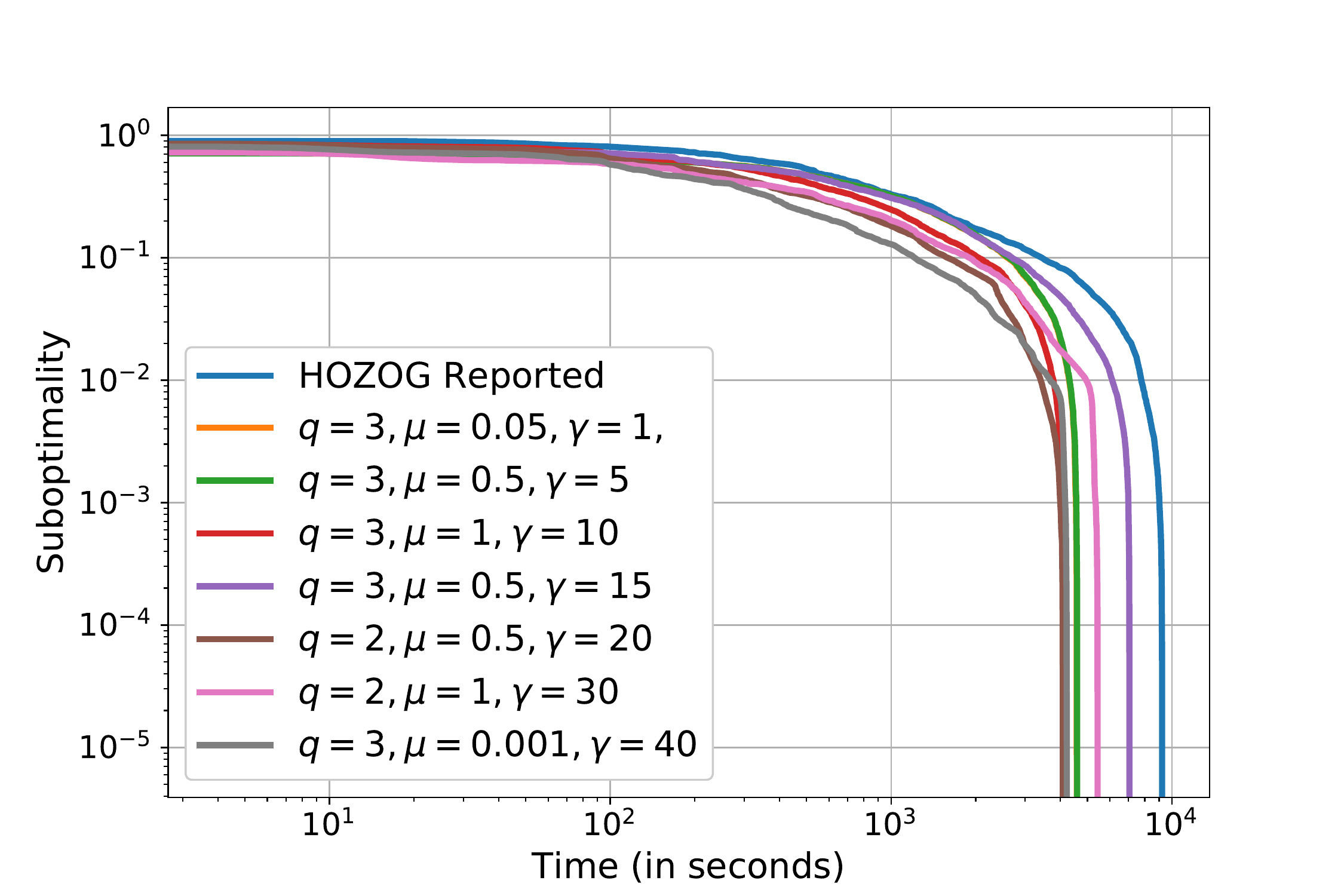}
		\caption{Suboptimality}
	\end{subfigure}
	\begin{subfigure}[b]{0.32\textwidth}
		\centering
		\includegraphics[width=2in]{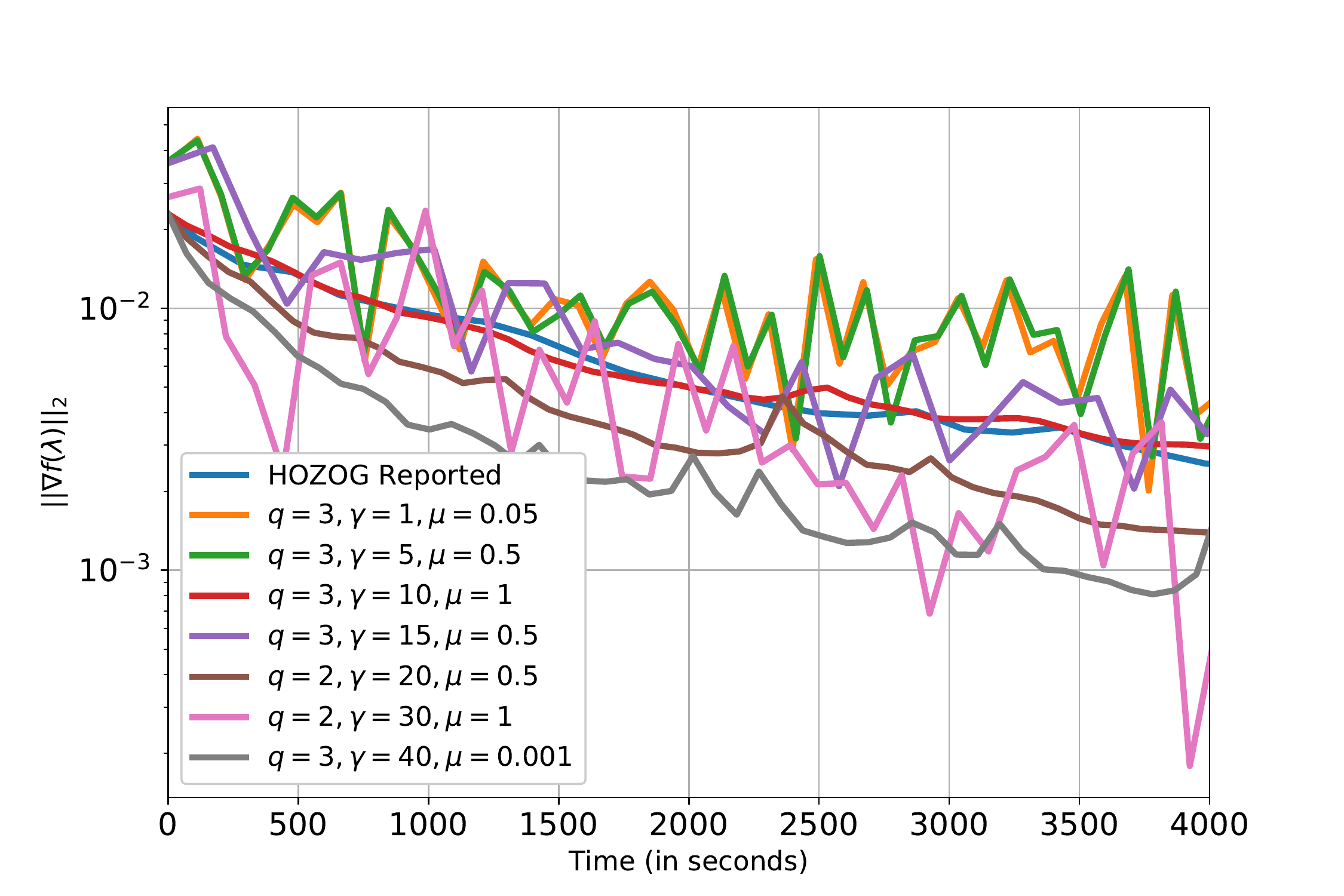}
		\caption{$\| \nabla f(\lambda)\|_2$}
	\end{subfigure}
	\begin{subfigure}[b]{0.32\textwidth}
		\centering
		\includegraphics[width=2in]{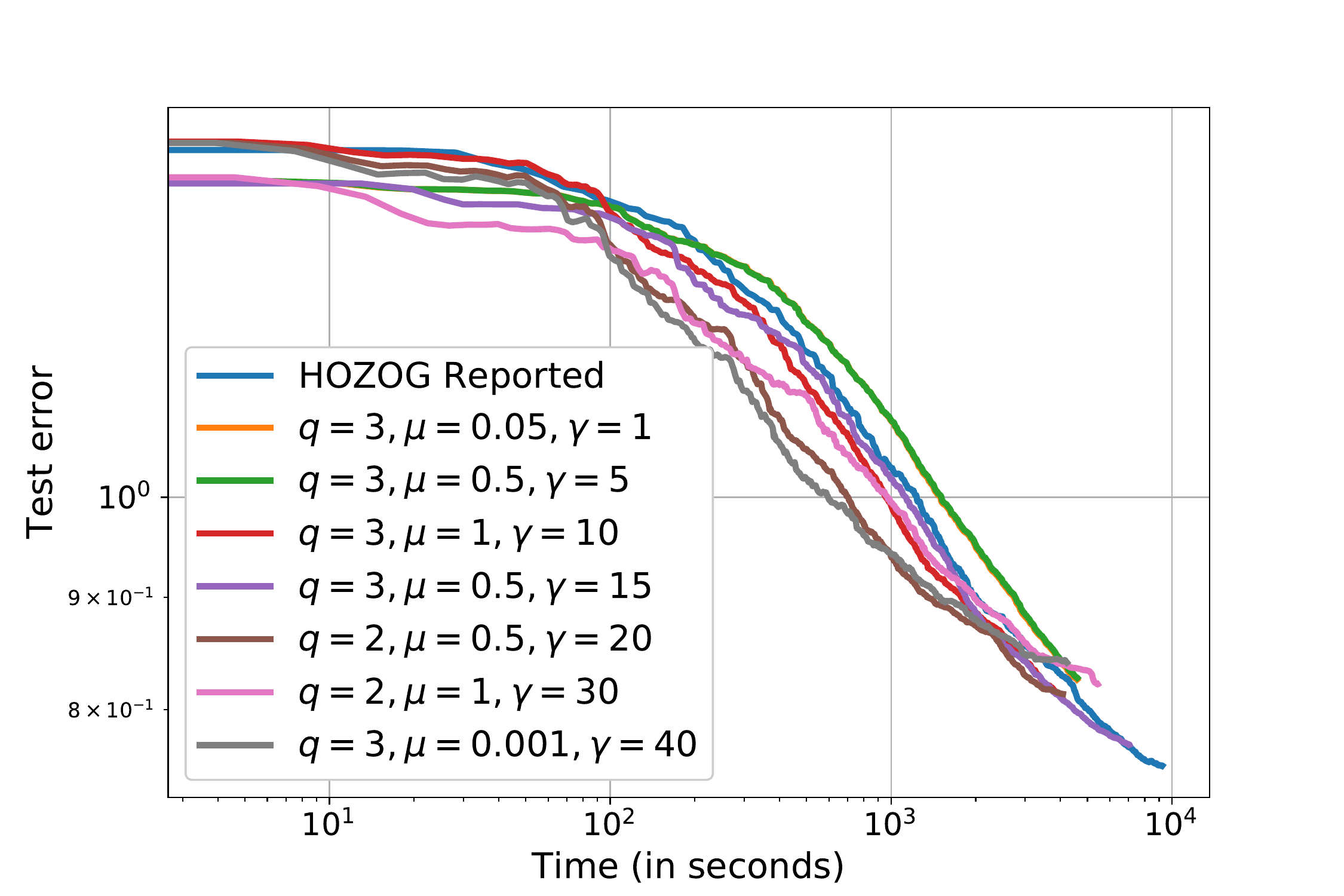}
		\caption{Test error}
	\end{subfigure}
	\caption{Comparison of different hyperparameter settings for data hyper-cleaning.
	}
	\label{figureSensity1}
\end{figure*}

\begin{figure*}[htbp]
	\centering
	\begin{subfigure}[b]{0.32\textwidth}
		\centering
		\includegraphics[width=2in]{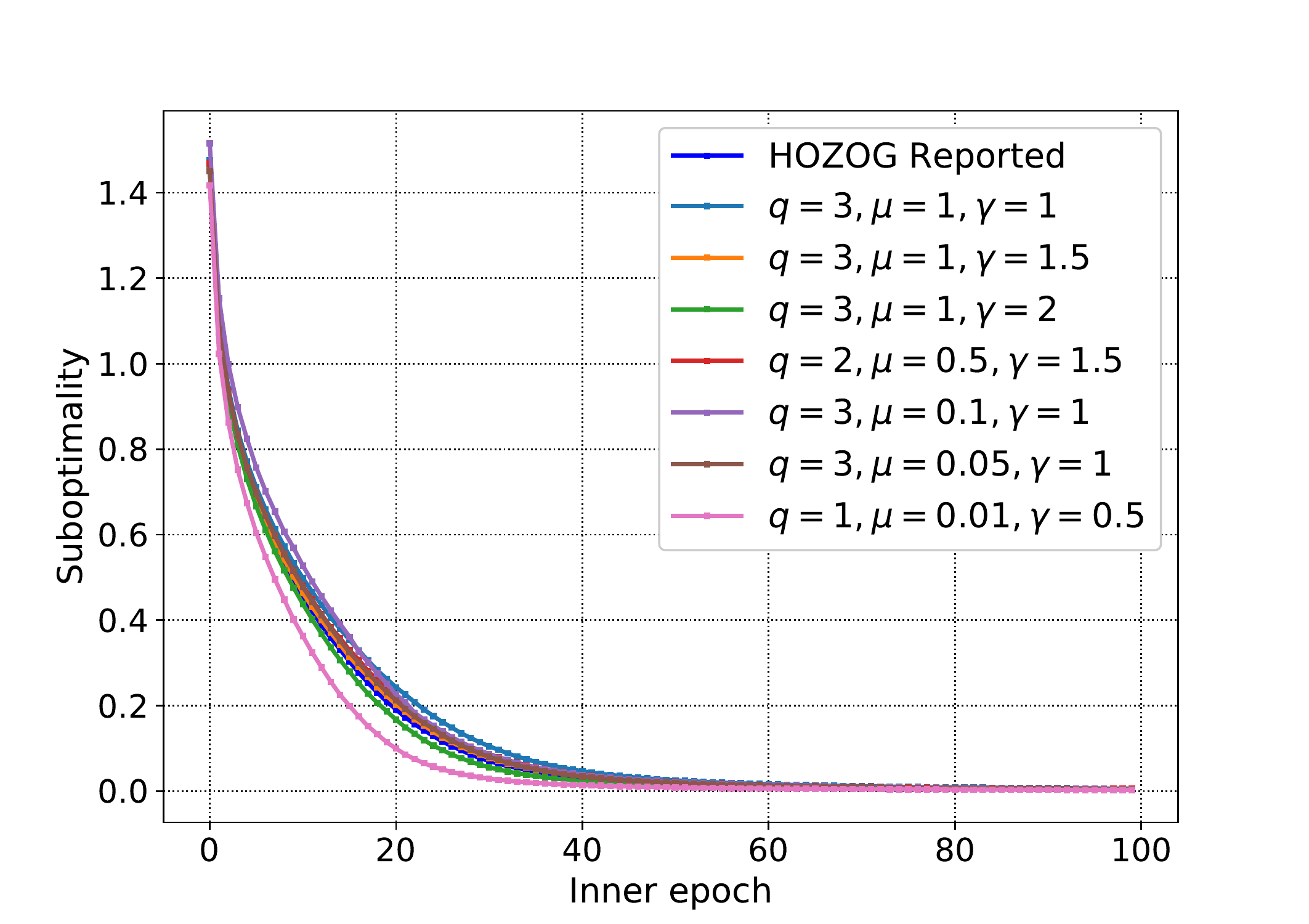}
		\caption{Suboptimality}
	\end{subfigure}
	\begin{subfigure}[b]{0.32\textwidth}
		\centering
		\includegraphics[width=2in]{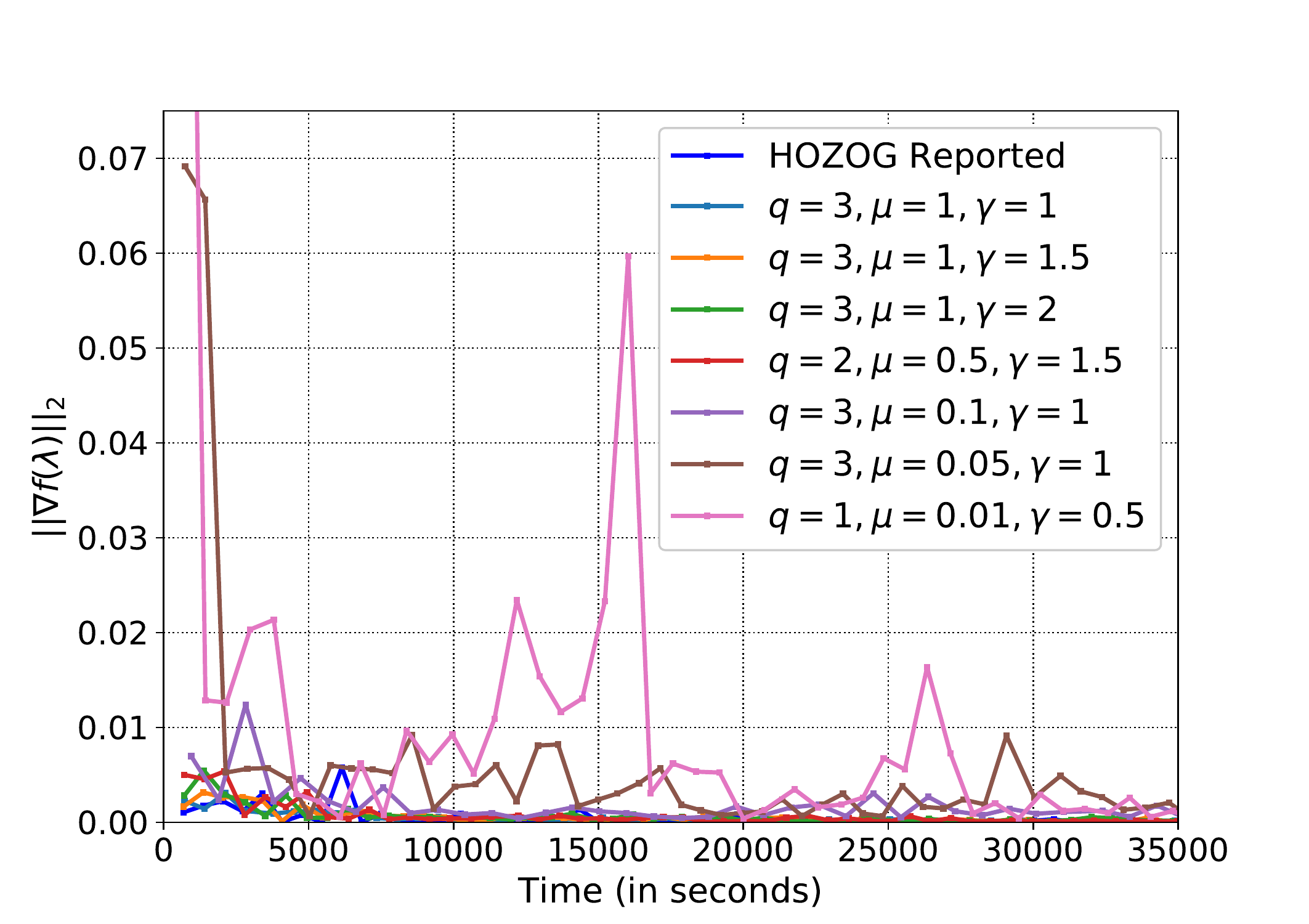}
		\caption{$\| \nabla f(\lambda)\|_2$}
	\end{subfigure}
	\begin{subfigure}[b]{0.32\textwidth}
		\centering
		\includegraphics[width=2in]{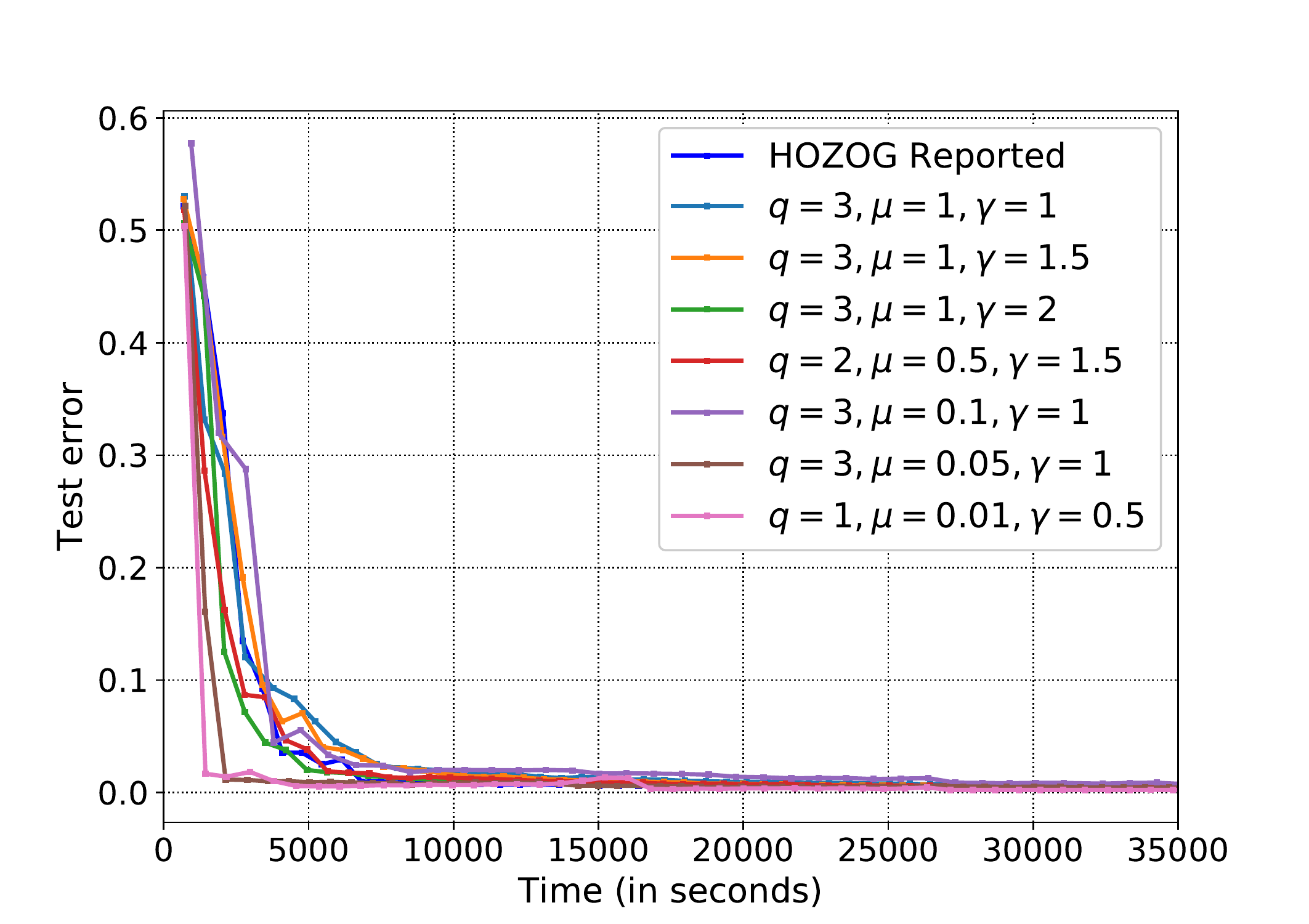}
		\caption{Test error}
	\end{subfigure}
	\caption{Comparison of different hyperparameter settings for 2-layer CNN.
	}
	\label{figureSensity2}
\end{figure*}

\begin{figure*}[htbp]
	\centering
	\begin{subfigure}[b]{0.32\textwidth}
		\centering
		\includegraphics[width=2in]{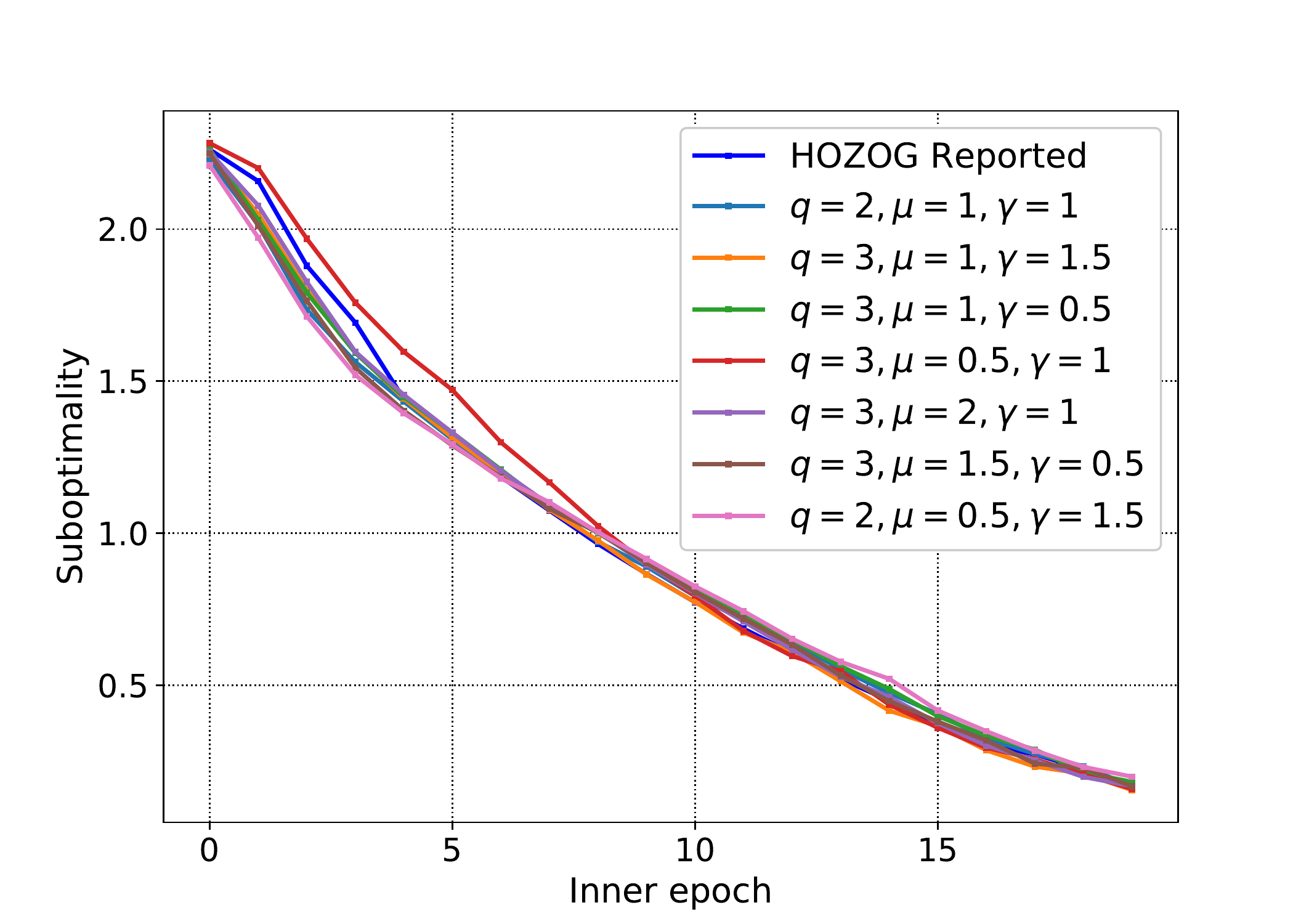}
		\caption{Suboptimality}
	\end{subfigure}
	\begin{subfigure}[b]{0.32\textwidth}
		\centering
		\includegraphics[width=2in]{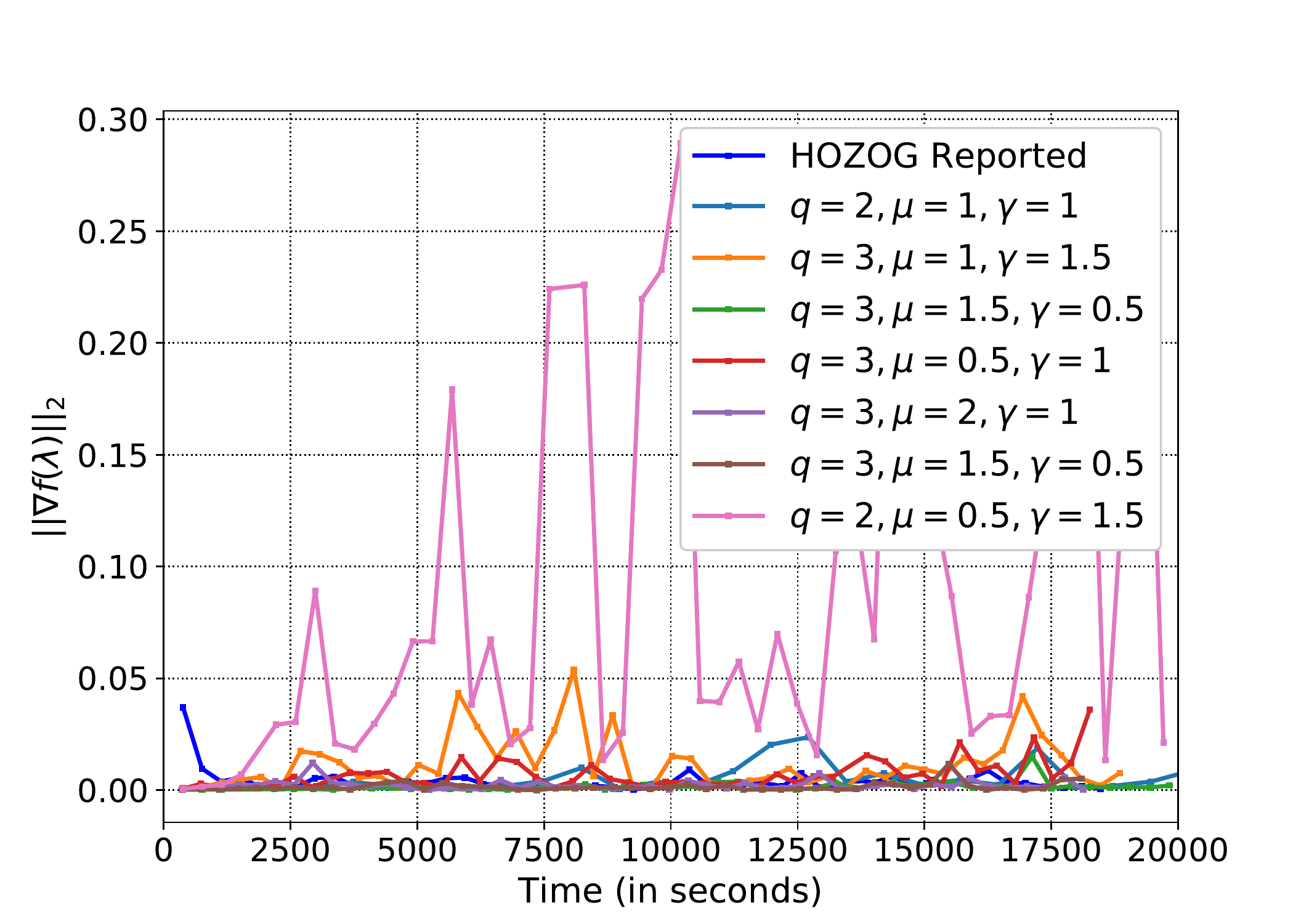}
		\caption{$\| \nabla f(\lambda)\|_2$}
	\end{subfigure}
	\begin{subfigure}[b]{0.32\textwidth}
		\centering
		\includegraphics[width=2in]{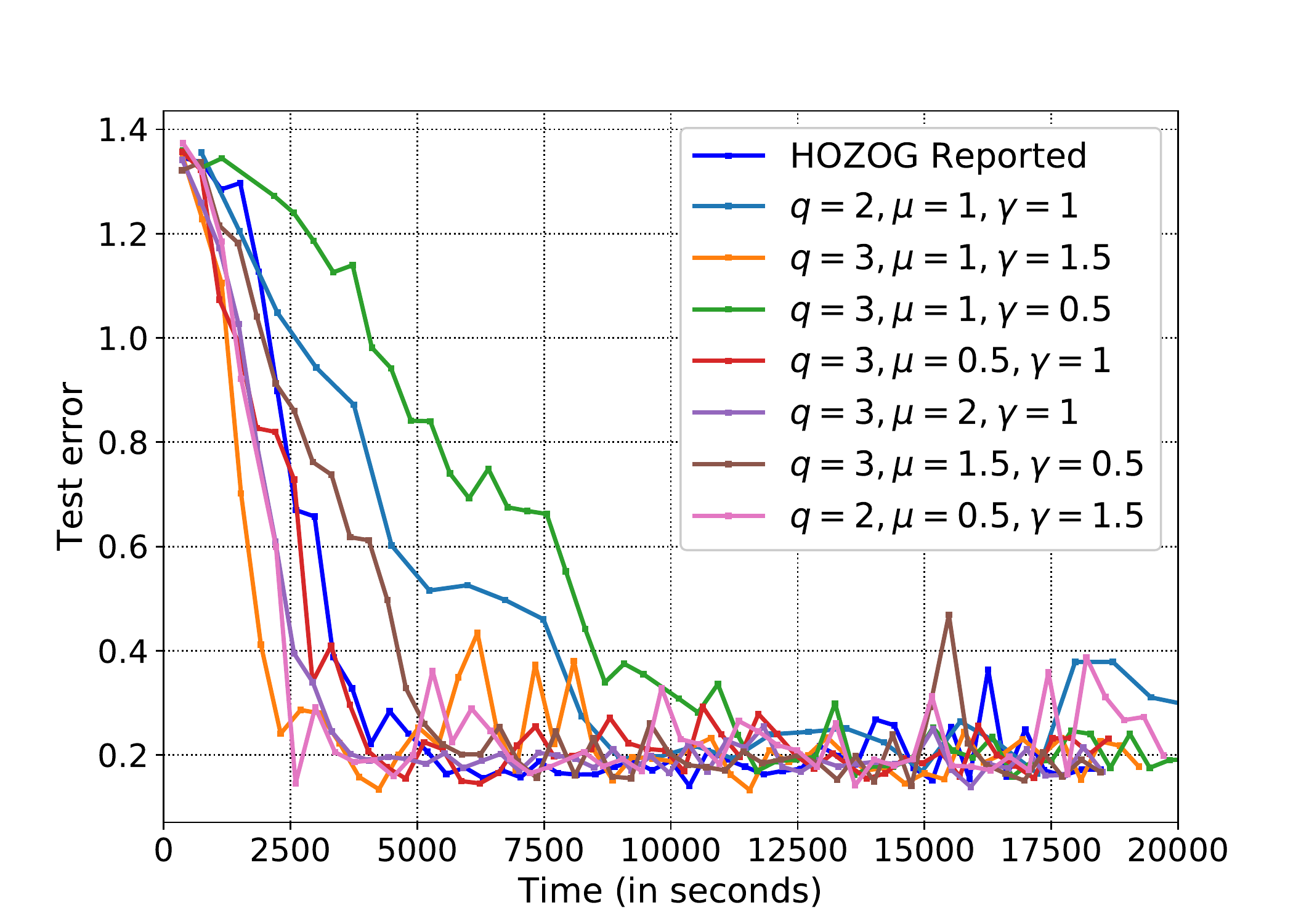}
		\caption{Test error}
	\end{subfigure}
	\caption{Comparison of different hyperparameter settings for VGG-16.
	}
	\label{figureSensity3}
\end{figure*}

\begin{figure*}[htbp]
	\centering
	\begin{subfigure}[b]{0.32\textwidth}
		\centering
		\includegraphics[width=2in]{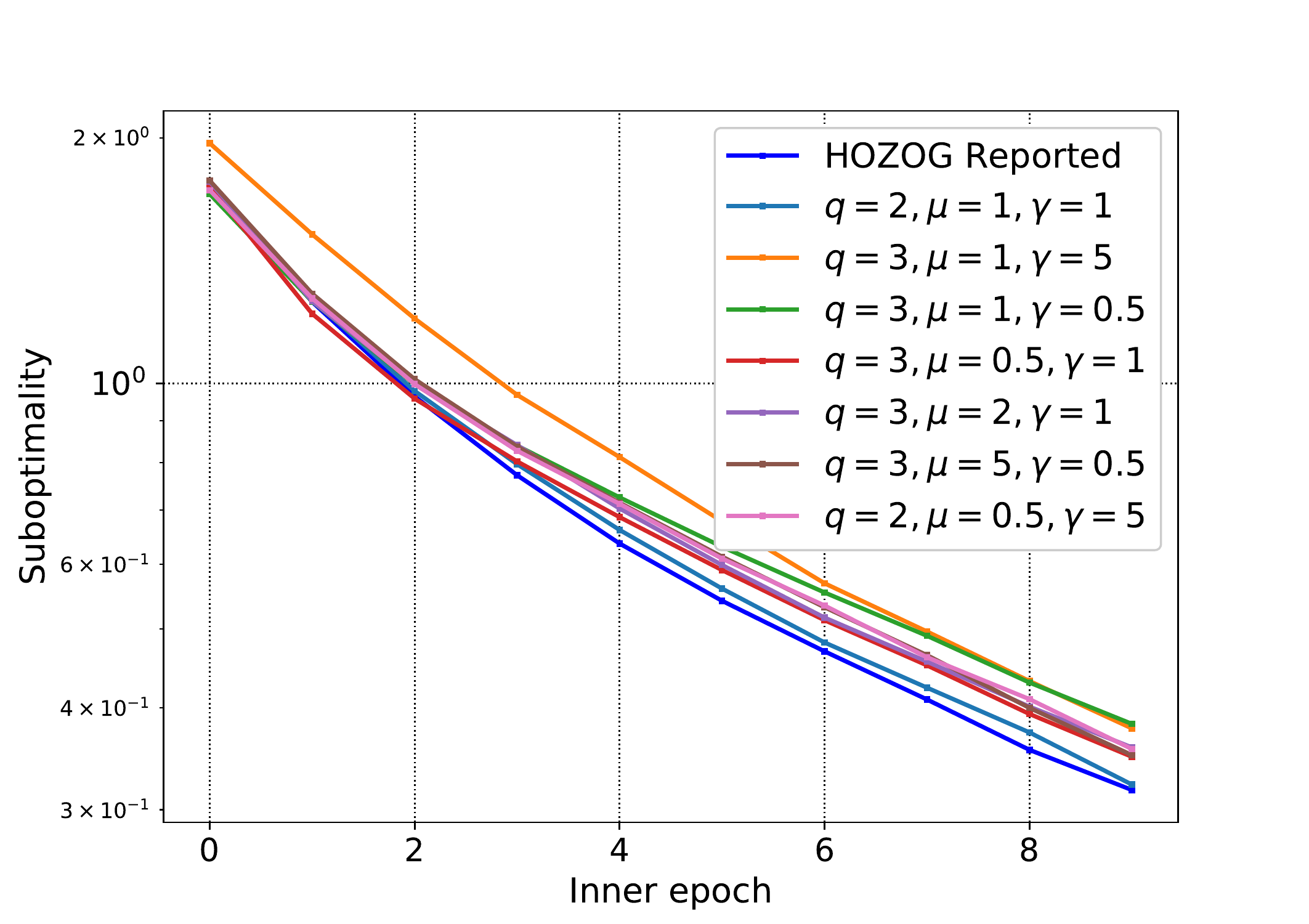}
		\caption{Suboptimality}
	\end{subfigure}
	\begin{subfigure}[b]{0.32\textwidth}
		\centering
		\includegraphics[width=2in]{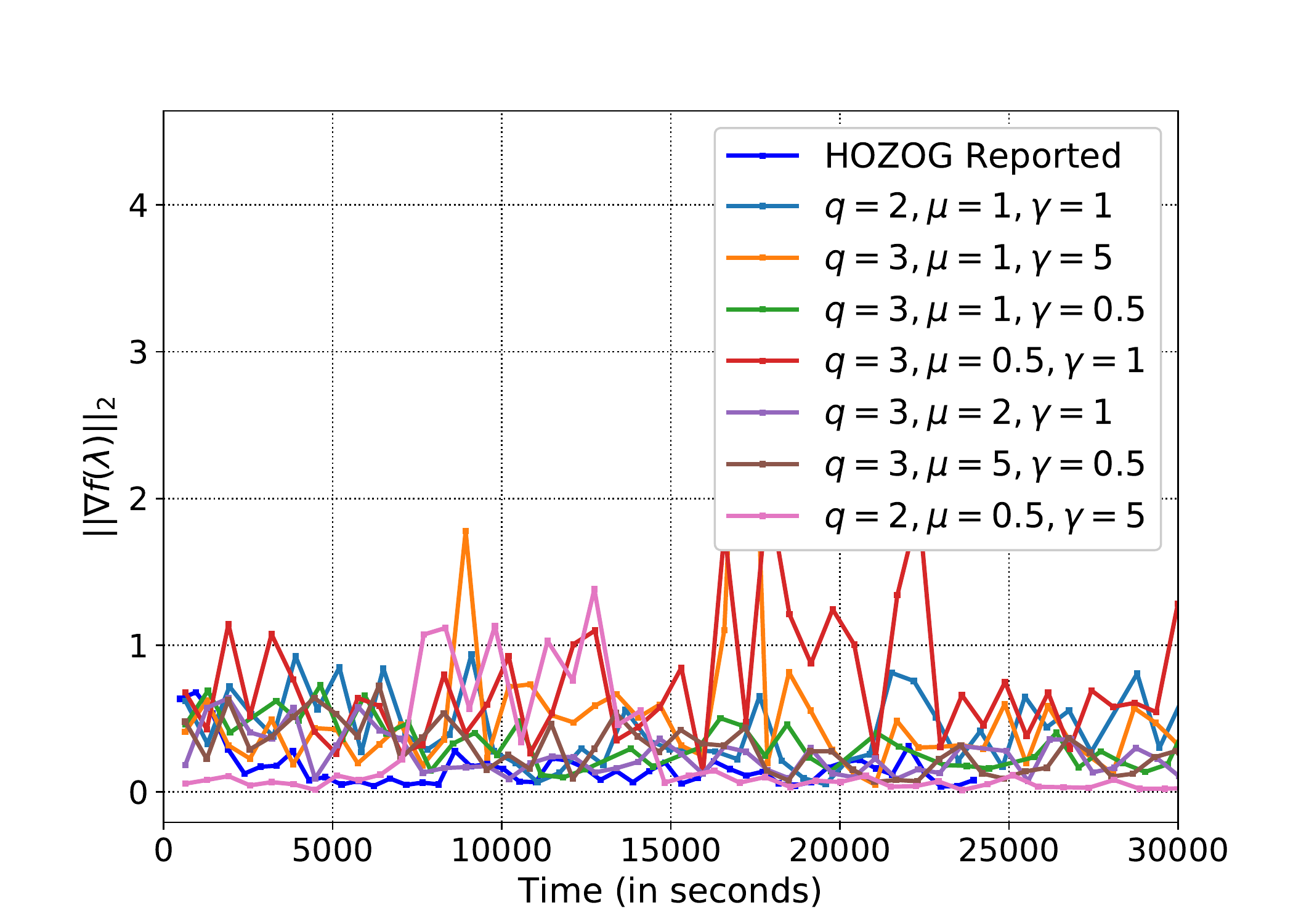}
		\caption{$\| \nabla f(\lambda)\|_2$}
	\end{subfigure}
	\begin{subfigure}[b]{0.32\textwidth}
		\centering
		\includegraphics[width=2in]{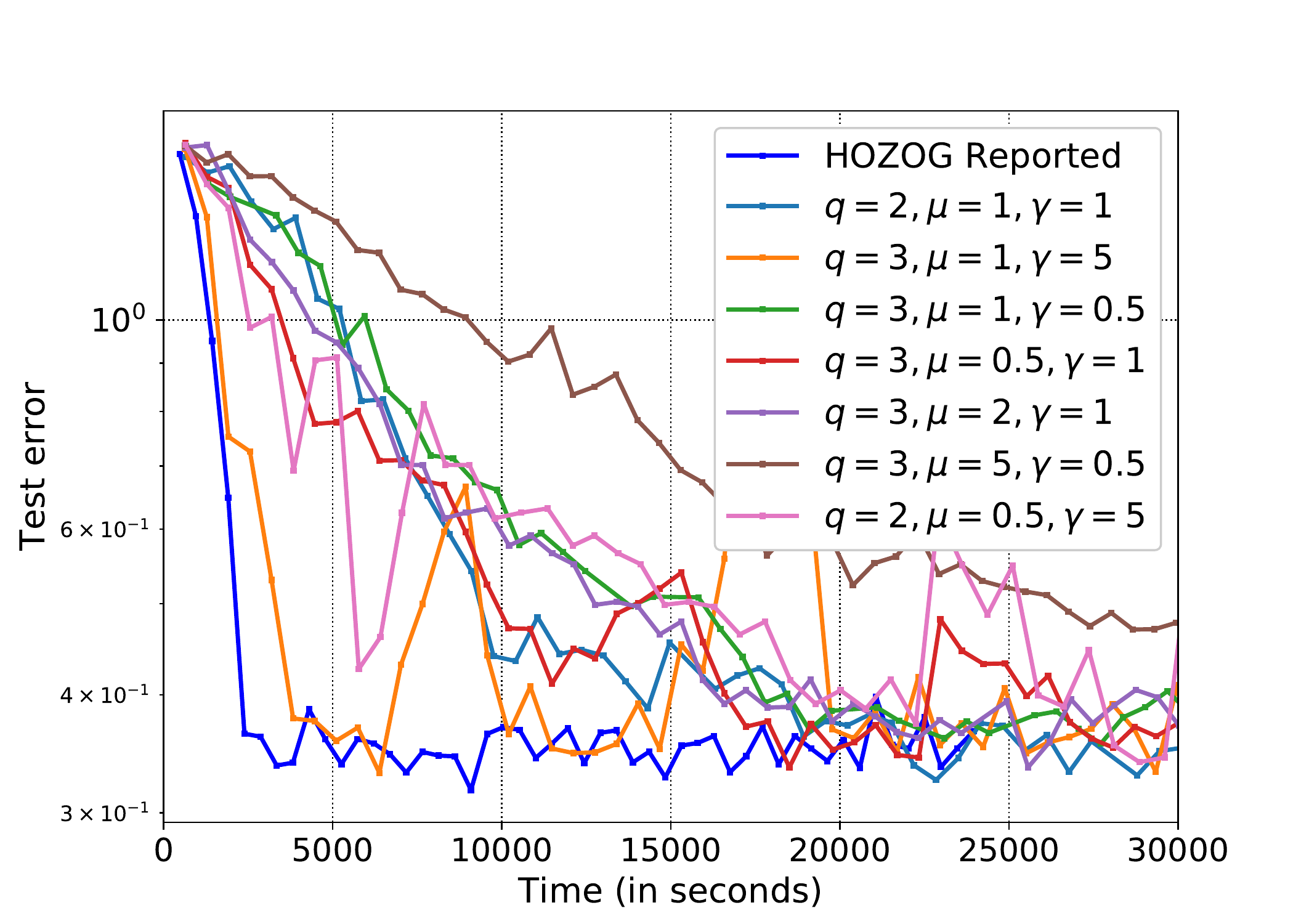}
		\caption{Test error}
	\end{subfigure}
	\caption{Comparison of different hyperparameter settings for ResNet-152.
	}
	\label{figureSensity4}
\end{figure*}

\end{document}